\definecolor{perfblue}{RGB}{64, 114, 175}
\def\actions{\mathcal{A}}
\def\observations{\mathcal{O}}
\def\Re{\mathbb{R}}
\def\TT#1{\texttt{#1}}
\def\E{\mathbb{E}}
\newtheorem{lemma}{Lemma}
\newtheorem{theorem}{Theorem}
\newtheorem*{theorem*}{Theorem}
\newtheorem{proposition}{Proposition}
\theoremstyle{definition}
\newtheorem{definition}{Definition}
\newtheorem{example}{Example}
\newtheorem{remark}{Remark}
\newtheorem{assumption}{Assumption}
\def\FF{\mathcal{F}}
\def\GG{\mathcal{G}}
\def\HH{\mathcal{H}}
\def\LL{\mathcal{L}}
\def\OO{\mathcal{O}}
\def\RR{\mathcal{R}}
\def\SS{\mathcal{S}}
\def\TT{\mathcal{T}}
\def\VV{\mathcal{V}}
\def\XX{\mathcal{X}}
\def\Nbb{\mathbb{N}}
\def\Pbb{\mathbb{P}}
\def\N{\Nbb}
\def\one{{\mathbbm1}}
\def\*{\star}
\DeclareMathSymbol{\mhef}{\mathord}{operators}{`\-}
\DeclareMathOperator*{\argmin}{arg\,min}
\DeclareMathOperator*{\argmax}{arg\,max}
\definecolor{antiquewhite}{rgb}{0.98, 0.92, 0.84}
\definecolor{azure(web)(azuremist)}{rgb}{0.84,0.98,0.92}
\definecolor{lavender}{rgb}{0.92,0.84,0.98}
\newcommand{\algcomment}[1]{\textcolor{perfblue}{{{{
          #1}}}}}
\definecolor{codebg}      {RGB}{246, 248, 250}   
\definecolor{coderule}    {RGB}{208, 215, 222}   
\definecolor{codegutter}  {RGB}{140, 149, 159}   
\definecolor{codestring}  {RGB}{14,  112,  56}   
\definecolor{codecomment} {RGB}{110, 119, 129}   
\definecolor{codeident}   {RGB}{111,  66, 193}   
\definecolor{titlebar}    {RGB}{ 36,  41,  47}   
\definecolor{titlebg}     {RGB}{ 36,  41,  47}
\definecolor{codekeyword} {RGB}{  0,  92, 197}   
\lstdefinestyle{cleanpython}{
  language        = Python,
  basicstyle      = \fontsize{8.5}{13}\usefont{T1}{SourceCodePro-TLF}{m}{n},
  keywordstyle    = \color{codekeyword}\bfseries,
  stringstyle     = \color{codestring},
  commentstyle    = \color{codecomment}\itshape,
  numberstyle     = \fontsize{7}{9}\selectfont\color{codegutter},
  emphstyle       = \color{codeident},          
  breaklines      = true,
  numbers         = left,
  numbersep       = 10pt,
  tabsize         = 4,
  keepspaces      = true,
  showstringspaces= false,
  frame           = none,                       
  xleftmargin     = 6pt,
}
\newif\ifsubmit
\newcommand{\wnote}[1]{}
\newcommand{\anote}[1]{}
\newcommand{\adnote}[1]{}
\newcommand{\cnote}[1]{}
\newcommand{\annote}[1]{}
\newcommand{\wnote}[1]{\textcolor{blue}{Wanqiao: #1}}
\newcommand{\anote}[1]{\textcolor{red}{Adith: #1}}
\newcommand{\adnote}[1]{\textcolor{orange}{Aditya: #1}}
\newcommand{\cnote}[1]{\textcolor{purple}{Ching-An: #1}}
\newcommand{\annote}[1]{\textcolor{brown}{Allen: #1}}
\icmltitlerunning{Formalizing Learning from Language Feedback with Provable Guarantees}
\begin{document}

\twocolumn[
  \icmltitle{Formalizing Learning from Language Feedback with Provable Guarantees}



    \icmlsetsymbol{side}{$\dagger$}
    \icmlsetsymbol{intern}{*}
    
    \begin{icmlauthorlist}
      \icmlauthor{Wanqiao Xu}{sch,intern}
      \icmlauthor{Allen Nie}{comp5}
      \icmlauthor{Ruijie Zheng}{comp2,side}
      \icmlauthor{Aditya Modi}{comp3,side}
      \icmlauthor{Adith Swaminathan}{comp4,side}
      \icmlauthor{Ching-An Cheng}{comp1,side}
    \end{icmlauthorlist}
    
    \icmlaffiliation{sch}{Stanford University}
    \icmlaffiliation{comp1}{Google Research}
    \icmlaffiliation{comp5}{Google DeepMind}
    \icmlaffiliation{comp2}{NVIDIA}
    \icmlaffiliation{comp3}{Meta}
    \icmlaffiliation{comp4}{Netflix}
    
    \icmlcorrespondingauthor{Wanqiao Xu}{wanqiaoxu@stanford.edu}
    \icmlcorrespondingauthor{Allen Nie}{allennie@google.com}
    \icmlcorrespondingauthor{Ching-An Cheng}{chingan@google.com}

  \icmlkeywords{Reinforcement learning, bandit learning, no-regret learning, LLMs}

  \vskip 0.3in
]



\printAffiliationsAndNotice{\textsuperscript{*}Work started during an internship at Microsoft Research. \textsuperscript{$\dagger$}Work done at Microsoft Research.}  

\def\algo{\texttt{HELiX}\xspace}
\def\dimTE{\dim_{TE}}
 
\begin{abstract}
Interactively learning from observation and language feedback is an increasingly studied area driven by the emergence of large language model (LLM) agents. Despite impressive empirical demonstrations, so far a principled framing of these decision problems remains lacking.  
We formalize the Learning from Language Feedback (LLF) problem, assert sufficient assumptions to enable learning despite latent rewards, and introduce \textit{transfer eluder dimension} as a measure to characterize the hardness of LLF.
We formalize the intuition that information in the language feedback governs the learning complexity, and demonstrate cases where learning from rich language feedback can be exponentially faster than learning from reward.
We develop a no-regret algorithm, called \algo,
that provably solves LLF problems through sequential interactions, with performance guarantees that scale with the transfer eluder dimension. 
Across several empirical domains, we show that \algo~performs well even when repeatedly prompting LLMs does not work reliably. 
Our contributions mark an important step towards designing principled interactive learning algorithms using generic language feedback.
\end{abstract}

\section{Introduction} 

The vision of general intelligence encompasses systems that learn useful information through rich interactions with the world.  
Large language models (LLMs) have moved us closer to this goal by enabling agents to interpret natural language feedback such as
critique~\citep{du2023improving,akyurek2023rl4f}, guidance~\citep{branavan2012learning,harrison2017guiding,scheurer2023training,nie2023importance,fu2024autoguide,wei2024improving,cheng2024trace}, or detailed explanations~\citep{andreas2017learning,chen2023teaching,cheng2023llf},
whose semantic structure can convey far more information than scalar rewards \citep{Sutton1998}.
For example, feedback such as ``the summary is mostly accurate, but it overlooks the main character's motivation.'' identifies \emph{what} is wrong and \emph{where} to improve, whereas a reward provides only coarse evaluation.   

With LLMs' abilities to understand and respond in natural language \citep{touvron2023llama}, language feedback can drastically increase learning efficiency and may be cheaper to obtain than carefully engineered reward signals \citep{krishna-etal-2023-longeval}.  
Despite early works on this topic pre-LLM~\citep{gauthier2016paradigm,andreas2022language} and promising recent empirical results in utilizing language feedback for decision-making \citep{liu2023interactive,chen2024learning,xie2024text2reward}, we still lack a rigorous understanding of when and how language feedback enables efficient learning.

\begin{figure*}[t]
	\centering
	\includegraphics[width=0.98\linewidth]{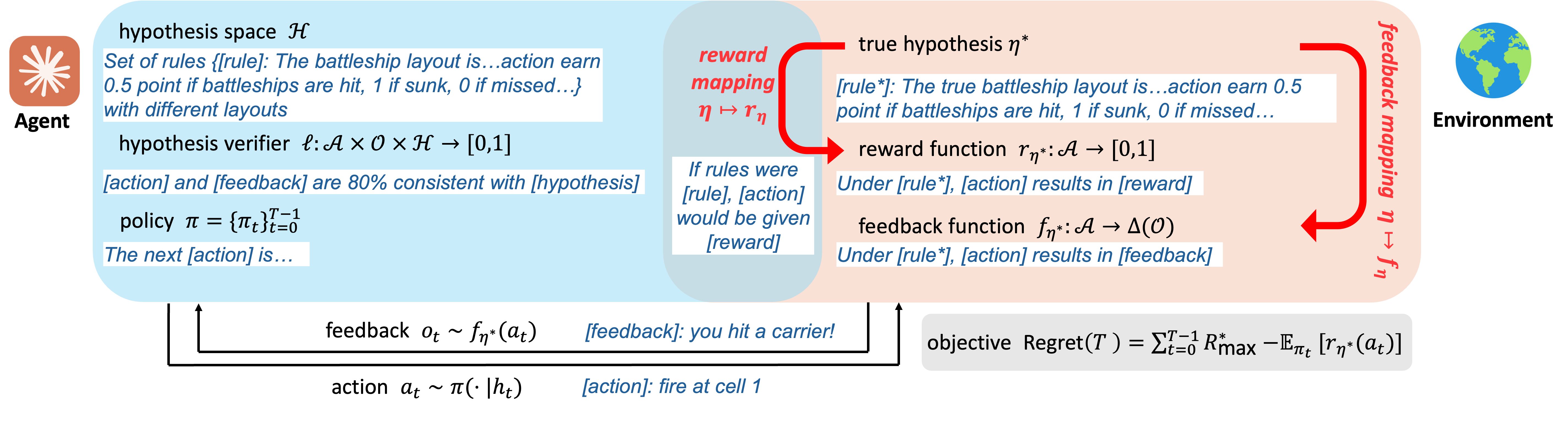}
	\caption{An example LLF setup, shown with Battleship game. The environment has a hypothesis $\eta^*$ 
		unknown to the agent. Reward as a function of $\eta^*$ is latent and used only to benchmark the agent. 
		Feedback as a function of $\eta^*$ is observable. No-regret learning is possible, if feedback is \emph{unbiased} (\cref{assum:unbiased-feedback}), agent can interpret feedback (\cref{as:verifier}) and hypothesis space $\HH $ includes $\eta^*$ (\cref{as:reward mapping}).
	}    
	\label{fig:interaction-diagram}    
	\vspace{-4mm}    
\end{figure*}

We address this gap by introducing a new, formal mathematical framework of Learning from Language Feedback (LLF) for single-turn in-context learning. 
The LLF paradigm was introduced by \citet{cheng2023llf} as an API to benchmark LLM agents' ability to learn from text feedback in lieu of numerical rewards. 
Despite empirical evidence that constructive feedback can be more effective for LLM agents than expressing rewards in words~\citep{mu2022improving,liu2024chain,zhong2024policy,xie2024text2reward},
the complexity and ambiguity of natural language makes the value of feedback hard to quantify. As a result, it remains unclear when LLF is feasible and how its difficulty compares to classical reward-observed bandit settings.  
%
%
%
Our contributions are three-fold: \vspace{-3mm}
\paragraph{First, we formalize LLF  mathematically and prove that its learning complexity is no larger than that of pure reward learning under suitable assumptions.}
We leverage notions of hypothesis testing and elimination in machine learning \citep{jong1993generic,lehmann2022hypothesis} to model the joint generation of language feedback and latent rewards. 
The interaction protocol of LLF (summarized by \cref{fig:interaction-diagram}) resembles a bandit interface, where agents sequentially interact with the environment through language feedback produced by an underlying hypothesis (without an environmental state). 
%
Through defining (1) the concept of a {\it hypothesis verifier} which processes language feedback, and (2) {\it transfer eluder dimension} $\dim_{TE}$ which captures information in language feedback, we establish conditions such that LLF is feasible and can be efficiently solved.  Under suitable assumptions, we prove that $\dim_{TE} \le \dim_{E}$, where $\dim_{E}$ is the eluder dimension introduced in \citet{russo2013eluder} for learning from reward.
%
%

\vspace{-3mm}
\paragraph{Second, we develop a provably no-regret algorithm for LLF with regret guarantees scaling gracefully with learning complexity.}
We develop \algo and prove that over a time horizon $T$, \algo~achieves a regret bound of $\tilde{O}(T^{\alpha} \dim_{TE}^{1/2})$, where $\alpha\le 3/4$ depends on properties of the hypothesis verifier.  
Crucially, our analysis shows that in certain environments, \algo~can be \textit{exponentially} more efficient than learning from reward alone.  

\vspace{-3mm}
\paragraph{Third, we show that \algo{} yields a practical inference-time alternative to common CoT-style prompting.} 
We show \algo can be used as an inference-time strategy for LLMs to perform exploration and exploitation, sampling multiple hypotheses and self-verifying as opposed to using a single chain-of-thought (CoT) context.  
%
We empirically demonstrate this idea on Battleship and Minesweeper.  We show that \algo yields consistent improvements over standard CoT baselines, providing preliminary evidence that \algo can be a competitive and  principled alternative to CoT for test-time decision-making.  

\vspace{-3mm}
\section{Learning from Language Feedback}
\vspace{-1mm}




Our first contribution is a formal mathematical model with natural assumptions describing the Learning from Language Feedback (LLF) process (see \cref{fig:interaction-diagram}) so that LLF can be rigorously studied.
In what follows, we first define the interaction setup; then we introduce the notion of text hypotheses; finally, we define the hypothesis verifier which evaluates the semantic consistency between candidate hypotheses and observed feedback. 
%
These constructions provide a basis for studying LLF's learnability and analyzing regret later. 

%
%


\vspace{-1mm}
\subsection{Formal Setup of LLF}
\vspace{-1mm}

Let $\TT$ be a finite set of tokens.  We denote the set of all finite token sequences by $\TT^+ = \cup_{k\ge 1} \TT^k\cup\{\emptyset\}$, where $\TT^k$ denotes the set of length-$K$ token sequences. There is a set $\observations\subset\TT^+$ of token sequences that we refer to as the \textit{feedback} space.  For an arbitrary set $\XX$, we use $\Delta(\XX)$ to denote the set of all probability distributions with support on $\XX$.

We define the problem of LLF\footnote{In the LLF protocol~\citep{cheng2023llf}, a problem context is given before learning to provide background to interpret feedback. We omit writing the problem context but equivalently \textit{assume that the agent can interpret the feedback through the hypothesis verifier}.} with a finite action set $\actions$.
At time step $t$, the agent interacts with the environment by executing an action $A_t \in \actions$ and observing feedback $O_t \in \OO$ sampled from a feedback distribution $f^*:\actions \to \Delta(\OO)$; a reward $R_t = r^*(A_t)$ is incurred, based on a reward function $r^*:\actions\to[0,1]$, though $R_t$ is {\it not} revealed to the agent. Here we assume the reward is generated by a deterministic function $r^*$; our results can be extended to stochastic rewards. 
A policy is a distribution on $\actions$. We denote the set of policies as $\Pi = \Delta(\actions)$ and the agent's policy at time step $t$ for sampling $A_t$ as $\pi_t$.
%
%
%
We measure the performance of the agent in the LLF setup by regret, which is defined as 
$\textstyle
   \mathrm{Regret}(T) = 
    \sum_{t=0}^{T-1} R_{\max}^* - \mathbb{E}_{\pi_t} \left[ R_t\right],
$
where $T$ is the total number of time steps, $R_{\max}^* = \max_{a\in\actions} r^* (a)$, 
and the expectation is taken over feedback randomness and the algorithm's inner randomization. 

This setup is similar to a multi-arm bandit problem, and the goal of the agent is to find actions that maximize the reward. But unlike in bandits, here the agent \textit{does not observe the rewards $\{R_t\}$}, and must learn to maximize the reward solely using natural language feedback $\{O_t\}$.  The setup above can be naturally extended to a contextual setting (an analogy of contextual bandit problems; see \cref{appendix:contextual} for details).

\vspace{-1mm}
\subsection{Environment Model and Text Hypothesis}

The environment in the LLF setup is defined by a feedback function $f^*:\actions\to\Delta(\OO)$ and a reward function $r^*:\actions\to[0,1]$. We assume they are ``parameterized'' by some text description, which we call a \emph{hypothesis}, belonging to a (possibly exponentially large) hypothesis space $\HH\subset\TT^+$. 
One can think of a hypothesis as describing the learning problem and mechanism of generating feedback in text such as natural language or code.
For example, in a recommendation environment, a hypothesis can be a text description of a user's interests, e.g., ``the user enjoys fantasy movies produced in the 21st century...''; 
in a video game environment, a hypothesis can describe the game's code logic. 
A hypothesis can also represent a finite-sized numerical array along with operations to decode it into reward and feedback.
In short, a hypothesis is a sufficient text description of the learning problem such that the reward and the feedback functions can be fully determined. 
%
In our later experiments in \cref{app:exp}, we will take thinking tokens as a natural form of hypothesis for LLMs. 

We model the feedback mechanism through a {\it feedback mapping} $\eta\mapsto f_\eta$ that maps each hypothesis $\eta\in\HH$ to a {\it feedback function} $f_\eta:\actions\to\Delta(\observations)$.   
%
Similarly, a {\it reward mapping} $\eta\mapsto r_\eta$ maps a hypothesis $\eta\in\HH$ to a {\it reward function} $r_\eta:\actions\to[0,1]$. 
%
We denote by $\eta^*\in\HH$ the true hypothesis of the environment, and use shorthand $f^* = f_{\eta^*}$ and $r^* = r_{\eta^*}$.
This construction is reminiscent of classical bandit settings where the reward function is parameterized, such as the linear case $r^*(a) = \phi(a)^\top \theta^*$ for some known feature map $\phi$ and unknown ground-truth parameter $\theta^*$.  We generalize this by using the reward mapping $\eta\mapsto r_\eta$ as an analogue of the feature map and the hypothesis $\eta^*$ as the parameter.
Following the convention in the literature, we assume that the parameterization, i.e., the reward mapping $\eta\mapsto r_\eta$, is \textit{known} to the agent,
but the parameter $\eta^*$ is \textit{unknown}.  
See \cref{fig:interaction-diagram} for an overview.
\begin{assumption}\label{as:reward mapping}
    We assume that the agent has access to the reward mapping $r_\eta: \eta \mapsto r_\eta$, where $\eta \in \HH$.
\end{assumption}
The reward mapping can be implemented using an LLM-as-Judge to process a hypothesis text, e.g., to tell whether an action is correct/incorrect~\citep{zheng2023judging,weng2023large,gu2024survey}. 
This assumption can be relaxed to accommodate the agent only knowing an approximated reward mapping, which we discuss in detail in \cref{appendix:relaxations}.  
We do not assume knowing the feedback mapping $\eta\mapsto f_\eta$, however, as  modeling language feedback accurately in practice is difficult.

\subsection{Measuring Information in Feedback}
\label{subsec:verifier}
Without any connection between feedback and reward, learning to minimize regret from feedback is provably impossible.  Intuitively, for LLF to be feasible, language feedback must contain action-relevant information, such as reward, action rankings, or whether an action is optimal. 
We need a way to quantify this information to study learnability.  Since it is impossible to enumerate all possible language feedback, we adopt a weak, implicit definition that exposes only the information the agent can extract.  

We introduce the notion of a {\it hypothesis verifier}, a mechanism that, given an action, observed feedback, and a candidate hypothesis, determines whether the hypothesis is {\it consistent} with the feedback.  This definition is deliberately distinct from a reward model or a correctness oracle.  Intuitively, the hypothesis verifier serves a role similar to unit tests.  It does not certify that a hypothesis is true, only that it has not been contradicted by what was observed, e.g., a hypothesis verifier may rule out hypotheses that are semantically incompatible with the feedback.  
\begin{assumption}[Hypothesis Verifier] \label{as:verifier}
    There is a hypothesis verifier, which defines a loss  $\ell: \actions \times \OO \times \HH \to [0,1]$, and the agent has access to the verifier through $\ell$.    
    For any action $a\in\actions$, feedback $o\in\observations$ and hypothesis $\eta\in\HH$, the value $\ell(a, o, \eta)$ quantifies how well $\eta$ aligns with the feedback on action $a$.  If $\eta$ is consistent with $o$ on action $a$, then $\ell(a,o,\eta) = 0$; otherwise, it returns a non-zero penalty. 
\end{assumption}
%

A concrete example may help clarify how a hypothesis verifier loss differs from the way ``verifiers'' are often used in ML (as a reward signal or directly checking for correctness).  Consider a coin-tossing game with unknown head probability $\eta\in\{0, 0.1, 0.5\}$; the agent chooses $a\in\{H, T\}$; the environment returns text feedback $o\in\{\text{``match''},\text{``miss''}\}$, where ``match'' means the realized outcome equals the agent's action; latent reward is $r\in\{0,1\}$, with $r=1$ iff the agent's choice is correct.  Define the hypothesis verifier loss $\ell(a,o,\eta) = \one[(a,o) \text{ has zero probability under }\eta]$.  Given action $a=H$, feedback $o=\text{``match''}$, under  hypothesis $\eta=0$, the hypothesis verifier loss $\ell(a,o,\eta) = 1$ since the action-feedback pair is impossible;  However, under a different hypothesis $\eta'=0.5$, $\ell(a,o,\eta') = 0$ as $\eta'$ is possible.  In this example, both reward and feedback contain 1 bit of information, but the distinction is that the verifier loss is an interface for extracting hypothesis-relevant information from language: if we enriched the feedback from ``match/miss'' to something like ``miss—your guess was too optimistic; the coin seems tail-heavy'', then $o$ could convey substantially more information than a binary reward, and a suitable $\ell(a,o,\eta)$ can exploit the extra information to eliminate hypotheses more efficiently.


The set of feedback-consistent hypotheses naturally captures information in the feedback. 
%
Ideally, feedback generated from $f_\eta(\cdot)$ should be self-consistent, i.e., $\E_{O\sim f_\eta(a)} [ \ell(a,O,\eta) ] = 0$ for all $a\in\actions$ and $\eta\in\HH$.  However, in practice, both the feedback and the hypothesis verifier may be noisy or imperfect and there may be some $a\in\actions$ such that $\E_{O\sim f^*(a)} [ \ell(a,O,\eta^*) ] >0$.  To accommodate this potential noise while preserving learnability, we adopt a weaker assumption than self-consistency: although the feedback may be noisy, it is {\it unbiased} such that each hypothesis minimizes the expected hypothesis verifier loss under its induced distribution. 
\begin{assumption}[Unbiased Feedback]
\label{assum:unbiased-feedback}
   We say $f_\eta$ is unbiased, if for all $a\in \actions$ and $\eta\in\HH$, 
    $\eta \in \argmin_{\eta' \in\HH} \E_{O\sim f_\eta (a)} [ \ell(a,O,\eta') ]$.
\end{assumption}

This definition can be robustly relaxed to accommodate small approximation errors in observed hypothesis verifier loss, which we discuss in detail in \cref{appendix:relaxations}.  
The notion of hypothesis verifier can be used to formalize \emph{semantic equivalence} among hypotheses. In natural language, many token sequences share the same underlying semantic meaning.  For LLF, such distinctions are not meaningful and should not affect the learning outcome.  This invariance can be captured by the hypothesis verifier introduced above. We deem hypotheses as equivalent whenever they induce identical loss functions across all inputs.  We use this to define the geometry of the hypothesis space.
%
\begin{definition}[Hypothesis Equivalence]
    We define the distance between two hypotheses $\eta, \eta' \in \HH$ as
    $d_\HH(\eta,\eta') \coloneqq
    \sup_{a\in\actions,o\in\observations} |\ell(a,o,\eta) - \ell(a,o,\eta')|.$
    If $d_\HH(\eta,\eta') = 0$, we say $\eta$ and $\eta'$ are \emph{equivalent}. 
\end{definition}
This definition provides a criterion to determine the equivalence of hypotheses, as two hypotheses with zero distance are indistinguishable from the agent's perspective. 
The loss function $\ell$ can be designed to reflect semantic similarity, e.g., by assigning similar values to outputs that are paraphrases of one another with embedding-based metrics or LLM-prompted judgments~\citep{wang2023going,chuang2022diffcse,asai2020logic,bubeck2023sparks}.

\begin{remark}
    One may alternatively define a scoring function $g:\actions\times\observations\to[0,1]$ that directly evaluates an action-feedback pair and impose some relationships between the scoring function and the underlying reward.  This construction is a special case of our framework; see \cref{sec:compare-to-reward}.  

\end{remark}

\def\piref{\pi_{\textrm{ref}}}
\vspace{-1mm}
\section{Learnability}\label{sec:learning_alg}
\vspace{-1mm}

Compared to numerical rewards, feedback can potentially carry more information. 
In LLF, to interpret this feedback and guide learning, the agent is equipped with: \textit{1)} The hypothesis verifier loss function $\ell$ and \textit{2)} The reward mapping  $\eta \mapsto r_{\eta}$.
This structure reflects a central feature of LLF: the agent must reason over the hypothesis space $\HH$ via the hypothesis verifier 
to minimize regret of the hidden rewards.

But can an agent learn to maximize reward despite not observing it?   
For instance, if feedback does not convey useful information for problem solving, it is unrealistic to expect any learning to happen.  On the other hand, if feedback directly reveals the optimal action, then the problem can be solved in two steps.  Naturally, one would expect the learnability and complexity of LLF problems to depend on the information that feedback conveys. 
Here we give natural structures and assumptions to the LLF setup that characterize the difficulty of the learning problem. 




\vspace{-1mm}
\subsection{Transfer Eluder Dimension}
\label{sec:ted}
\vspace{-1mm}

To quantify information in the feedback, we propose a new complexity measure called \textit{transfer eluder dimension} based on the eluder dimension \citep{russo2013eluder} using the hypothesis verifier in \cref{subsec:verifier}. At a high level, transfer eluder dimension characterizes how effectively information in the feedback reduces uncertainty about the unknown reward. When it is small, a single piece of feedback carries a lot of information about the reward, which enables LLF to be more efficient than learning from reward.  

\begin{definition}
Given a hypothesis verifier loss $\ell$, an action $a\in\actions$ is \textit{$\epsilon$-transfer dependent} on actions $\{a_1,\ldots,a_n\}\subset\actions$ with respect to $\HH$ if any pair of hypotheses $\eta,\eta'\in\HH$ satisfying 
    $\textstyle \sum_{i=1}^n \left(\E_{O\sim f_{\eta'} (a_i)}[\ell(a_i, O, \eta)] - \ell_{\eta'}^{\min}(a_i)\right) \le \epsilon^2$,
    also satisfies $|r_\eta(a) - r_{\eta'}(a)| \le \epsilon$, where $\ell_\eta^{min}(a) \coloneqq \min_{\eta'} \E_{O\sim f_\eta (a)}[\ell(a, O, \eta')]$.
    Further, $a$ is \textit{$\epsilon$-transfer independent} of $\{a_1,\ldots,a_n\}$ with respect to $\HH$ if $a$ is not $\epsilon$-transfer dependent on $\{a_1,\ldots,a_n\}$.  
\end{definition}

This definition says that an action $a$ is transfer independent of $\{a_1,\ldots,a_n\}$ if two hypotheses, that are both consistent with the feedback on $\{a_1,\ldots,a_n\}$ according to the hypothesis verifier, can differ significantly in their reward predictions at $a$.  This differs from the dependency condition used in eluder dimension (\cref{def:eluder dim}), which measures discrepancies in both the history and new observation using reward. 
\begin{definition}[Transfer eluder dimension] \label{def:transfer eluder dim}

    The \textit{$\epsilon$-transfer eluder dimension} $\dim_{TE}(\HH,\ell,\epsilon)$ of $\HH$ with respect to the hypothesis verifier loss $\ell$ is the length $d$ of the longest sequence of elements in $\actions$ such that, for some $\epsilon'\ge\epsilon$, every action element is $\epsilon'$-transfer independent of its predecessors. 

\end{definition}
%
%
Unlike the eluder dimension, transfer eluder dimension measures dependence based on two quantities: the hypothesis verifier loss and the reward function.  This extension allows us to capture information in the feedback relevant to reward learning.  Later in \cref{sec:algo}, we will present a provable algorithm that attains a sublinear regret rate in LLF in terms of the transfer eluder dimension.

\subsection{Informative Feedback Reduces Learning Complexity Exponentially} \label{sec:feedback examples}


We discuss several example forms of feedback and compute the corresponding transfer eluder dimensions. The nature of feedback critically affects learning efficiency: uninformative feedback (e.g., random text) leads to infinite transfer eluder dimension, while some feedback provides more information (e.g. about the optimality of selected actions, improving directions, or explanation of mistakes) than reward and accelerates learning. For example, in a constraint satisfaction problem, feedback that reveals satisfied constraints can shrink the set of potentially true hypotheses. 
%
%
\begin{example}[Reasoning steps]
\label{example:reasoning}
    
    Consider a math reasoning problem where one tries to construct a hidden sequence of $L$-step reasoning
    $a^* = (s_1^*,\ldots,s_L^*)$,
    where each $s_i\in\SS\subset\TT^+$ is a token sequence that represents a correct reasoning at step $i$, and $\SS$ is a finite set of token sequences that represent possible reasoning steps.  The action set $\actions = (\TT^+)^L$ consists of all possible reasoning of $L$ steps.  Each hypothesis represents a full solution to the problem and rubrics to critique partial answers with. Reward is 1 if all steps are correct and 0 otherwise.  Below we show the transfer eluder dimension with $\epsilon < \frac{1}{2L}$ for different feedback 
    (see \cref{app:proof of reasoning example} for the exact forms of hypothesis verifiers and proofs).  
    We consider four feedback types, which correspond to the reward, hindsight-negative, hindsight-positive, and future-positive feedback, respectively, in the LLF's feedback taxonomy proposed in \citep{cheng2023llf}.
    Directly learning from rewards incurs exponential complexity, as the agent must enumerate all possible sequences. Feedback that identifies the first mistake enables stage-wise decomposition and yields exponential improvement in $L$, though each stage still requires brute-force search.  If the feedback is more constructive, showing not only where the first mistake is but also how to correct it, the problem complexity does not depend on $|\SS|$.  Finally, if the feedback tells the answer right away, the complexity becomes constant, as the agent can learn the solution immediately after one try.  
\end{example}
\begin{remark}
    While we could compute the scale of the transfer eluder dimension in the above stylized example, it remains a tool for analysis and will not be known to algorithms we propose in the subsequent sections.  In particular, if we're utilizing LLMs to solve LLF, it is not possible to estimate transfer eluder dimension directly. 
\end{remark} 
\begin{table}[t]
\begin{small}
    
\centering 
    \caption{Complexity of LLF w.r.t. different feedback types}
    \begin{tabular}{|c|c|c|}
        \hline
        \label{tab:feedback_type}
        Feedback  &$\dimTE(\HH,\ell,\epsilon)$\\
        \hline \hline
        1. (reward)  whether all steps are correct  &  $O(|\SS|^L)$\\
        \hline
        2. (explanation) index of first incorrect step  & $O(|\SS|L)$ \\
        \hline
        3. (suggestion) correction of the first mistake   & $O(L)$ \\
        \hline
        4. (demonstration) all the correct steps & $O(1)$\\
        \hline
    \end{tabular}
    \end{small}
    \vspace{-4mm}
\end{table}
    



\subsection{Learning from Informative Feedback Is No Harder Than Learning from Reward}
\label{sec:compare-to-reward}

We have shown examples where the transfer eluder dimension is bounded and decreases as the feedback provides more information than reward. Here we prove the generality of this observation. We show that if feedback discriminates between rewards, then the transfer eluder dimension of it is no larger than the traditional eluder dimension. 

\begin{definition}[Eluder Dimension] \label{def:eluder dim}
    An action $a\in\actions$ is $\epsilon$-dependent on actions $\{a_1,\ldots,a_n\}\subset\actions$ with respect to a reward class $\RR$ if any $r,r'\in\RR$ satisfying 
    $\sum_{i=1}^n \left(r(a_i) - r'(a_i)\right)^2 \le \epsilon^2$, 
    also satisfies $|r(a) - r'(a)| \le \epsilon$. 
    Further, $a$ is $\epsilon$-independent of $\{a_1,\ldots,a_n\}$ if it is not $\epsilon$-dependent on $\{a_1,\ldots,a_n\}$.  
    The \textit{$\epsilon$-eluder dimension} $\dim_{E}(\RR,\epsilon)$ of $\RR$ is the length $d$ of the longest sequence of elements in $\actions$ such that, for some $\epsilon'\ge\epsilon$, every action element is $\epsilon'$-independent of its predecessors. 
\end{definition}

First, by using the hypothesis verifier,  we define the statement ``feedback discriminates between rewards''. 
\begin{definition}
[Discriminative feedback]
\label{def:informative_feedback}
The feedback function $f_\eta$ is \textit{discriminative} of $r_\eta$ with respect to the hypothesis verifier $\ell$ if there is $C_F > 0$ such that $\forall \eta' \in \HH$, $a\in\actions$,
$
    |r_\eta(a) - r_{\eta'}(a)|^2 \leq C_F \E_{o\sim f_\eta (a)} [
    \ell(a, o, \eta')  -  \ell_\eta^{min}(a) ].
$
We say an LLF problem is \textit{discriminative} if $(f^*,r^*,\ell)$ satisfies the above condition.
\end{definition}

This definition states that the hypothesis verifier can distinguish hypotheses based on feedback to the same extent as their reward differences. In other words, if two hypotheses differ in their corresponding rewards, then the hypothesis verifier can tell they are different. Therefore, problems where feedback encodes the reward and hypothesis verifier can decode it (e.g., classical RL) are subsumed as a special case of discriminative LLF. We discuss the relationship of LLF with discriminative feedback and IGL \citep{xie2022interaction} in ~\cref{sec:paradigms}.

A discriminative feedback example is when the unobserved reward is a function of the feedback. Concretely, suppose $r_\eta(a) = \E_{o\sim f_\eta(a)}[g(a,o)]$ for some known $g:\actions\times\OO\to[0,1]$.  Note that the reward mapping $\eta\mapsto r_\eta$ is known, but the reward function itself is still hidden from the agent (since $\eta^*$ is unknown).  
Consider $ \ell(a,o,\eta) \coloneqq ( g(a,o) - r_\eta(a))^2
= ( g(a,o) - \E_{o'\sim f_{\eta}(a)}[g(a,o')])^2 $. 
Then one can verify that $\eta \in\argmin_{\eta' \in\HH} \E_{o \sim f_\eta(a)}[ \ell(a,o,\eta') ]$ and show that this feedback-verifier pair is discriminative. (see \cref{app:reward informativeness}). In addition to this example, one can check that the forms of feedback used in \cref{sec:feedback examples}
are discriminative too (see \cref{app:proof of reasoning example}). 
Discriminative feedback can contain information other than reward as shown in \cref{sec:feedback examples}.


With this definition, we show that if feedback can discriminate rewards, the transfer eluder dimension is no larger than the eluder dimension for the reward class induced by $\HH$.  
\begin{proposition}\label{prop:compare_eluder_dim}

    For discriminative LLF problems with $C_F$ as in \cref{def:informative_feedback}, it holds that 

    $\dim_{TE}(\HH, C_F \ell, \epsilon) \le \dim_E(\RR_\HH,\epsilon)$, where $\RR_\HH = \{ r_\eta : \eta \in \HH \}$ is the effective reward class of $\HH$.

\end{proposition}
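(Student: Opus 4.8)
The plan is to show that transfer dependence (for the rescaled verifier loss $C_F\ell$) is implied by eluder dependence (for the reward class $\RR_\HH$), which immediately gives the claimed inequality on dimensions since a longer transfer-independent sequence would furnish a longer eluder-independent sequence. So fix a fresh action $a$ and a finite sequence $a_1,\dots,a_n$, and an $\epsilon' \ge \epsilon$; suppose $a$ is $\epsilon'$-dependent on $a_1,\dots,a_n$ with respect to $\RR_\HH$, and we must show $a$ is $\epsilon'$-transfer dependent on them with respect to $\HH$ and the loss $C_F\ell$.

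First I would take any two hypotheses $\eta,\eta'\in\HH$ satisfying the transfer-dependence premise with the rescaled loss:
\[
\textstyle\sum_{i=1}^n \left(\E_{o\sim f_{\eta'}(a_i)}[C_F\,\ell(a_i,o,\eta)] - C_F\,\ell_{\eta'}^{\min}(a_i)\right)\le (\epsilon')^2.
\]
Dividing by $C_F$ and applying reward-informativeness of the LLF problem (Definition~\ref{def:informative_feedback}) term by term — with the roles $\eta\leftarrow\eta'$ (the hypothesis inducing the feedback distribution) and $\eta'\leftarrow\eta$ (the competitor) — each summand dominates $|r_{\eta'}(a_i)-r_\eta(a_i)|^2$. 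Hence $\sum_{i=1}^n (r_\eta(a_i)-r_{\eta'}(a_i))^2 \le (\epsilon')^2$. Since $r_\eta,r_{\eta'}\in\RR_\HH$ and $a$ is $\epsilon'$-eluder-dependent on $a_1,\dots,a_n$ with respect to $\RR_\HH$, it follows that $|r_\eta(a)-r_{\eta'}(a)|\le\epsilon'$. As $\eta,\eta'$ were arbitrary subject to the premise, $a$ is $\epsilon'$-transfer dependent on $a_1,\dots,a_n$ with respect to $\HH$ and loss $C_F\ell$, as required.

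To conclude: if $a_1,\dots,a_d$ is any sequence witnessing $\dimTE(\HH, C_F\ell, \epsilon)$ — so for some $\epsilon'\ge\epsilon$ each $a_k$ is $\epsilon'$-transfer independent of its predecessors — then by the contrapositive of the implication just established, each $a_k$ is $\epsilon'$-eluder independent of its predecessors with respect to $\RR_\HH$, so $d \le \dim_E(\RR_\HH,\epsilon)$. Taking the supremum over such sequences gives $\dimTE(\HH, C_F\ell,\epsilon)\le\dim_E(\RR_\HH,\epsilon)$.

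The only subtle point — the step I'd be most careful about — is the bookkeeping of which hypothesis plays which role: the transfer-dependence condition sums verifier-loss gaps where the feedback distribution is $f_{\eta'}(a_i)$ and the loss is evaluated at $\eta$, and $\ell_{\eta'}^{\min}(a_i)=\min_{\eta''}\E_{o\sim f_{\eta'}(a_i)}[\ell(a_i,o,\eta'')]$ is exactly the baseline appearing in Definition~\ref{def:informative_feedback} with $\eta$ there set to $\eta'$. So reward-informativeness gives $|r_{\eta'}(a_i)-r_\eta(a_i)|^2\le C_F\big(\E_{o\sim f_{\eta'}(a_i)}[\ell(a_i,o,\eta)]-\ell_{\eta'}^{\min}(a_i)\big)$, which matches each rescaled summand after dividing by $C_F$; the symmetry $|r_\eta(a_i)-r_{\eta'}(a_i)|=|r_{\eta'}(a_i)-r_\eta(a_i)|$ then lets us feed the bound into the eluder definition, whose premise is symmetric in $r,r'$. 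No convexity, measurability, or approximation argument is needed beyond this; the result is essentially a change-of-variables between the two dependence notions made possible by the pointwise inequality in Definition~\ref{def:informative_feedback}.
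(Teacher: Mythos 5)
Your proof is correct and follows essentially the same route as the paper's: both reduce the claim to showing that the reward-informativeness inequality (applied with the roles of $\eta$ and $\eta'$ arranged so the feedback distribution is $f_{\eta'}$) converts the transfer-dependence premise under $C_F\ell$ into the squared-reward-gap premise of eluder dependence, so any transfer-independent sequence is eluder-independent. Your explicit bookkeeping of which hypothesis generates the feedback is exactly the step the paper's proof relies on implicitly (and, like the paper, it requires the informativeness condition to hold for every $\eta\in\HH$, not only $\eta^*$).
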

\cref{prop:compare_eluder_dim} implies that discriminative LLF problems are no harder than their reward-only counterparts, for example, running standard UCB over the reward class $\RR_\HH$, where the scalar reward is extracted from language feedback by an LLM.  
It is important to note that general LLF problems are not necessarily discriminative.  This generality is what distinguishes our LLF framework from existing ones such as IGL \citep{xie2021igl}: LLF can model settings in which language feedback contains much more \emph{useful} information than reward.  For instance, when feedback is not discriminative but reveals information about the optimal action, LLF reflects a reduced learning complexity, whereas a reward-only formulation discards this information and offers no corresponding advantage in IGL.

\section{\algo~Algorithm} \label{sec:algo}

To validate our characterization of learnability based on the transfer eluder dimension, we design a simple UCB-style algorithm, \algo, outlined in \cref{alg:LLF_UCB}.  \algo~uses feedback to guide exploration following the optimism principle. 
Given a hypothesis $\eta\in\HH$, let $\pi_\eta$ denote its optimal policy.  At step $t$, the algorithm maintains a confidence set $\HH_t$ of hypotheses that remain approximately consistent with observed actions and feedback. 
The algorithm then identifies a hypothesis $\eta_o$ that achieves maximal optimistic reward, and follows an optimal policy $\pi_o$ under this hypothesis.  With a slight abuse of notation, let $r_\eta(\pi) \coloneqq \sum_{a\in\actions} r_\eta(a)\pi(a)$ denote the expected reward of policy $\pi$. 
An additional design in \algo~compared to standard UCB is a stopping criterion (line 7). It checks for a consensus optimal action among all hypotheses in the confidence set.  
If the minimax regret $\min_{\pi\in\Pi} \max_{\eta\in\bar\HH} r_\eta(\pi_\eta) - r_\eta(\pi) = 0$, then the minimizer policy only selects actions that are simultaneously optimal for all candidate hypotheses (see \cref{lm:minimax solution}).
As discussed in Section \ref{sec:compare-to-reward}, feedback in a trivial LLF problem can directly reveal the optimal action but nothing about the reward.  In this case, the LLF problem is not discriminative, yet the stopping criterion ensures that the algorithm will not over-explore after identifying an optimal action. 

\subsection{Analysis}
\algo~is a concrete instantiation of how our conceptual LLF framework can inform algorithmic design, showing that LLF problems with finite transfer eluder dimensions can indeed be solved provably efficiently with a regret guarantee that depends sublinearly on the transfer eluder dimension.  
\begin{theorem}
\label{thm:regret}
  Under Assumptions \ref{as:reward mapping}--\ref{assum:unbiased-feedback},
for all $T\in\mathbb{N}$, ${\mathrm{Regret}}(T)$ of \algo is bounded by
\begin{small}
\begin{align*}
     \widetilde{O} &\left(  T^{3/4} \Big(\log N(\HH,\epsilon_T^\HH,d_\HH)\right)^{1/4}  \sqrt{ \dim_{TE} (\HH,\ell,\epsilon_T^\HH) } \Big),
\end{align*}
\end{small}%
where $N(\HH,\epsilon_T^\HH,d_\HH) $ denotes the $\epsilon_t^\HH$-covering number of $\HH$ based on the pseudo-metric $d_\HH$, $\dim_{TE}(\HH,\ell,\epsilon_T^\HH)$ denotes the $\epsilon_T^\HH$-transfer eluder dimension of $\HH$, and  $\epsilon_T^\HH = \max\left\{ \frac{1}{T^2}, \min_{a\in\actions}\inf\{|r_\eta(a)-r^*(a)|: \eta\in\HH, \eta\ne \eta^*\}\right\}$.
\end{theorem}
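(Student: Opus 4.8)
The plan is to follow the standard optimistic-elimination template for eluder-dimension regret bounds (as in \citep{russo2013eluder}), but with two twists specific to LLF: confidence sets are built in the \emph{verifier-loss} geometry rather than in squared reward error, and the transfer from feedback-loss concentration to reward-prediction accuracy is mediated by the transfer eluder dimension rather than the classical one. Concretely, \algo\ maintains at each round $t$ a confidence set $\HH_t \subseteq \HH$ consisting of hypotheses whose cumulative empirical verifier loss on the history $\{(A_s,O_s)\}_{s<t}$ is within a threshold $\beta_t$ of the minimizer, then plays optimistically: $A_t \in \argmax_{a}\max_{\eta\in\HH_t} r_\eta(a)$ (realized via the reward mapping from \cref{as:reward mapping}). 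I would structure the proof in four steps.

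\textbf{Step 1 (Confidence set validity).} Show that with high probability, for all $t$ simultaneously, $\eta^*\in\HH_t$. This is a uniform-concentration argument: by \cref{assum:unbiased-feedback}, $\eta^*$ minimizes the expected per-step verifier loss at every action, so the empirical-loss gap process $\sum_{s<t}\big(\ell(A_s,O_s,\eta^*) - \ell(A_s,O_s,\eta)\big)$ is a supermartingale-type quantity after centering; a self-normalized / covering-number concentration over $\HH$ (discretized at scale $\epsilon_T^\HH$ under the pseudo-metric $d_\HH$, which is why $\log N(\HH,\epsilon_T^\HH,d_\HH)$ appears) yields $\beta_t = \widetilde O(\log N(\HH,\epsilon_T^\HH,d_\HH))$ as a valid radius. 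Bounded losses in $[0,1]$ make the Azuma/Freedman machinery routine.

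\textbf{Step 2 (Per-step regret $\to$ transfer-width).} On the good event, optimism gives instantaneous regret $R_{\max}^*-r^*(A_t) \le r_{\tilde\eta_t}(A_t) - r_{\eta^*}(A_t) \le w_t(A_t)$, where $w_t(a):=\sup_{\eta,\eta'\in\HH_t}|r_\eta(a)-r_{\eta'}(a)|$ is the reward-prediction width of the current confidence set. The task is to bound $\sum_{t<T} w_t(A_t)$. Here the transfer eluder dimension enters exactly as the eluder dimension does in the classical pigeonhole lemma: if $w_t(A_t)>\epsilon$, then by definition $A_t$ is $\epsilon$-transfer independent of a large subset of its predecessors on which the cumulative verifier-loss gap of some pair in $\HH_t$ is $\le\epsilon^2$; since every hypothesis in $\HH_t$ has cumulative empirical loss gap $\le\beta_t$, a counting argument bounds the number of rounds with $w_t(A_t)>\epsilon$ by $O\big(\frac{\beta_t}{\epsilon^2}\dimTE(\HH,\ell,\epsilon)\big)$. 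Integrating over dyadic scales of $\epsilon$ down to $\epsilon_T^\HH$ gives $\sum_{t<T} w_t(A_t) \le \widetilde O\big(\sqrt{\beta_T\,\dimTE(\HH,\ell,\epsilon_T^\HH)\,T}\big)$ plus a lower-order term from truncation at scale $\epsilon_T^\HH$ — the floor $\epsilon_T^\HH \ge 1/T^2$ ensures this tail is negligible.

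\textbf{Step 3 (Accounting for the feedback/reward mismatch — the $T^{3/4}$ rate).} The reason the bound is $T^{3/4}$ rather than the usual $\sqrt T$ is that the confidence set is certified in \emph{feedback} loss, but transfer-dependence only guarantees reward accuracy once the \emph{expected} cumulative feedback-loss gap is small, whereas Step 1 controls the \emph{empirical} gap up to a deviation; converting empirical to expected loses a factor, and the width $w_t$ itself is only controlled in expectation over $O_t\sim f^*(A_t)$, not pathwise. One clean way: run in phases, or equivalently note that the effective per-round "budget" $\beta_t$ scales like a concentration radius that is $\sqrt{}$-type, so the pigeonhole count becomes $O(\sqrt{\beta_t}\,\dimTE/\epsilon^2)$ at the worst relevant scale, and optimizing the scale threshold $\epsilon \sim T^{-1/4}(\log N)^{1/4}$ against the trivial $\le 1$ bound on remaining rounds yields $\regret(T) \le \widetilde O\big(T^{3/4}(\log N(\HH,\epsilon_T^\HH,d_\HH))^{1/4}\sqrt{\dimTE(\HH,\ell,\epsilon_T^\HH)}\big)$. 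Combining with Step 1's failure probability (absorbed into $\widetilde O$ by choosing the confidence level $\sim 1/T$) completes the bound.

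\textbf{Main obstacle.} The delicate part is Step 3 / the interface between Steps 1–2: the verifier loss is an \emph{arbitrary} bounded function with no curvature, so unlike the linear or generalized-linear case there is no Bernstein-type "loss gap upper bounds squared prediction error" for free — we only have \cref{assum:unbiased-feedback} (first-order optimality) and the transfer-dependence \emph{definition} itself, which is precisely the object engineered to bridge this gap. Making the phased/scale-optimization argument rigorous while honoring that $w_t$ and the loss gaps live under different expectations (over $O_t$) is where the analysis must be careful, and it is the source of the suboptimal $T^{3/4}$ exponent; tightening it to $\sqrt T$ would require an additional assumption relating verifier loss to reward-gap \emph{quadratically} (as in \cref{def:informative_feedback}), which the theorem deliberately does not assume.
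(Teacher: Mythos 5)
Your proposal follows the same overall architecture as the paper's proof: confidence sets defined by cumulative \emph{empirical} verifier loss with radius $\beta_t$, a regret-to-width decomposition on the event $\eta^*\in\HH_t$, a pigeonhole argument that counts rounds with width exceeding $\epsilon$ against $\dimTE(\HH,\ell,\epsilon)$, integration over scales down to $\epsilon_T^\HH$, and a covering-number argument in the pseudometric $d_\HH$ to handle infinite $\HH$. Your Step 2 is essentially \cref{prop:bound-on-num-large-widths} and \cref{lem:bound-confidence-widths}, which yield $\sum_{t<T} w_{\VV_t}(A_t) \le \widetilde{O}\big(\sqrt{\beta_T\,\dimTE(\HH,\ell,\epsilon_T^\HH)\,T}\big)$.

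There is, however, a genuine gap in the quantitative accounting, and your Steps 1 and 3 contradict each other. In Step 1 you claim a valid radius $\beta_t=\widetilde{O}(\log N)$. For a verifier loss that is merely bounded in $[0,1]$ with no curvature, this is not achievable: the centered loss-difference process $\sum_{s<t}\big(\ell(A_s,O_s,\eta^*)-\ell(A_s,O_s,\eta)\big)$ is a bounded martingale, so the Azuma-type bound (\cref{lem:chernoff} with $\psi_t(\lambda)=\lambda^2/2$) only gives a deviation of order $\sqrt{t\log(N/\delta)}$, and the paper accordingly takes $\beta_t^*=\sqrt{2t\log(10N t^2/3\delta)}+2\alpha t$ in \cref{prop:confidence_set_validity}. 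If your Step 1 radius were correct, your Step 2 would already give $\widetilde{O}(\sqrt{T})$ regret and there would be nothing for Step 3 to explain. The true source of the $T^{3/4}$ is simply substituting $\beta_T=\widetilde{O}(\sqrt{T\log N})$ into the Step 2 bound $\sqrt{\beta_T\,\dimTE\,T}$, which gives $T^{3/4}(\log N)^{1/4}\sqrt{\dimTE}$; the empirical-to-expected conversion you worry about in Step 3 happens inside \cref{prop:bound-on-num-large-widths} and costs only a constant (the budget becomes $3\beta_t$ rather than $\beta_t$), not an extra rate factor, and the width $w_{\VV_t}(A_t)$ is a deterministic function of the confidence set, not an expectation over $O_t$. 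Your proposed single-scale balancing of $\beta_T d/\epsilon^2$ against $\epsilon T$ would, with $\beta_T\sim\sqrt{T}$, give the strictly worse rate $T^{5/6}$, so it cannot replace the sorted-width/dyadic-scale integration you already invoke in Step 2. Finally, you analyze only the optimistic branch of \algo; to cover the algorithm as stated you also need \cref{lm:minimax solution} and \cref{lem:continued minimax}, which show that once the consensus (exploitation) condition holds it persists and all subsequent per-step regret is zero, so the full algorithm's regret is dominated by that of the purely optimistic variant.
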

While the order $\widetilde{O}(T^{3/4})$ on the time horizon $T$ may appear suboptimal compared to classical $\widetilde{O}(\sqrt{T})$ rates for bandit learning with reward feedback, this slower rate is in fact a principled consequence of our minimal assumptions. Up to this point, we make no structural assumptions on the hypothesis verifier loss $\ell$ beyond boundedness.  If we have more structural knowledge of $\ell$, say, that it is a squared loss, then the bound can be tightened to match the order $\widetilde{O}(\sqrt{T})$ (see \cref{thm:fast_rate} in \cref{appendix:optimal_rate}).  We provide the proof of \cref{thm:regret} in \cref{appendix:proof_sketch}.

\begin{figure}[t]

\centering
\begin{minipage}[t]{0.48\textwidth}
\begin{algorithm}[H]  
\begin{small}

\begin{algorithmic}[1]

    \caption{{\algo}: Hypothesis Elimination\\ using Language-informed Exploration}\label{alg:LLF_UCB}

    \STATE {\bf Input} $\actions$, $\observations$, $T$, reward mapping $\eta\mapsto r_\eta$, hypothesis verifier loss $\ell:\actions\times\observations\times\HH\to[0,1]$, confidence levels $\{\epsilon_t\}_{t=0}^{T-1}$

    \STATE {\bf Initialize} $t=0$, $A_0 \sim\mathrm{Unif}(\actions)$, $\HH_0 = \HH$

    \FOR{$t=1,\ldots,T$}
        \STATE observe $O_{t-1}$

        \STATE $\HH_t \gets \HH_{t-1} \bigcap \{ \eta\in\HH:\, \frac{1}{t} \sum_i \ell(A_i, O_i, \eta) - \min_{\eta'\in\HH} \frac{1}{t} \sum_i \ell(A_i, O_i, \eta') \leq \epsilon_t \}$\\

\STATE $(\pi_p, \eta_p) \gets \displaystyle\argmin_{\pi\in\Pi} \max_{\eta\in\HH_t} \Big[r_\eta(\pi_\eta) - r_\eta(\pi)\Big]$ \\

        \IF{$r_{\eta_p} (\pi_{\eta_p}) - r_{\eta_p}(\pi_p) = 0$}

            \STATE $A_t \sim \pi_p(\cdot)$ \algcomment{// Exploitation step: exploit if consensus}  

        \ELSE

            \STATE $(\pi_o, \eta_o) \gets \displaystyle\argmax_{\pi \in \Pi} \displaystyle\max_{\eta \in \HH_t} r_\eta(\pi)$ \algcomment{// Exploration step} 

            \STATE $A_t\sim \pi_o(\cdot)$

        \ENDIF

    \ENDFOR

\end{algorithmic}
\end{small}
 \end{algorithm}
 \end{minipage}

\vspace{-5mm}
\end{figure} 
Directly querying an LLM for an action by prompting with the interaction history (as in Chain of Thought (CoT) reasoning with history) is similar to drawing actions from $\pi_\eta$ where $\eta$ is randomly sampled from $\argmin_{\eta'\in\HH} \sum_i \ell(A_i, O_i, \eta')$. In the RL setting, such a greedy algorithm does not necessarily explore and therefore does not always have low-regret. Since RL is a special case of discriminative LLF, we conjecture that this greedy algorithm also does not have regret guarantees for general LLF. In  our experiments, \algo~reliably outperforms this baseline.  

\subsection{Implementing \algo~with Large Language Models}

A key contribution of this paper is a practical implementation of \algo~ using an LLM.  This leads to a novel inference-time exploration strategy distinct from existing approaches to sequential decision-making with LLMs.  The latter often rely on CoT-style design and LLMs' inner capabilities to balance exploration and exploitation, which as we will show in experiments can be insufficient even for simple language-based tasks.  
To achieve principled learning based on hypothesis elimination, our practical implementation of \cref{alg:LLF_UCB} relies on LLMs to approximate three main steps in the theoretical version: (1) maintaining feedback-consistent hypothesis set; (2) minimax exploitation step; (3) UCB-inspired exploration step.   
The three approximations combined implement \cref{alg:LLF_UCB} in a compute-efficient manner.  We discuss computational cost in \cref{app:computational cost}.

{\bf Approximation of feedback-consistent hypothesis set.} \algo~(\cref{alg:LLF_UCB}, line 5) maintains a hypothesis set $\HH_t$ at iteration $t$ that contains all hypotheses $\eta$ consistent with observed feedback, which will be used to compute $\pi_p$ and $\pi_o$. 
We approximate searching over the hypothesis set by prompting an LLM to sample $N$ streams of thinking tokens as hypotheses that are consistent with observed feedback as an approximation of $\HH_t$ in line 5. 
When prompted to generate feedback-consistent hypotheses, the LLM implicitly implements a hypothesis verifier, effectively filtering out those that would yield a high hypothesis verifier loss. We chose this design to reduce the number of LLM calls.  An alternative design that explicitly prompts the LLM to produce hypothesis verifier loss values requires additional LLM calls and usually has no effect on filtering hypotheses already consistent with feedback as generated above. 

{\bf Approximate minimax exploitation step. } \algo~computes a minimax policy $\pi_p$ to check for a consensus optimal action (\cref{alg:LLF_UCB}, line 7). We approximate this step by searching over finitely many deterministic policies (i.e. actions). We obtain these candidates by prompting an LLM to generate a corresponding optimal action for each hypothesis above (therefore $N$ in total).
Then, we construct a score matrix $S_t\in [0, 1]^{N \times N}$ whose entries $\left[ S_t\right]_{\eta,a}$ correspond to the reward of hypothesis-action pairs $(\eta,a)$.  To fill $S_t$, we prompt a (possibly different) LLM to score each action above conditioned on a sampled hypothesis.  This LLM can be viewed as a reward mapping $r_\eta$ used in line 6.  Then, an approximate consensus action $a^*$ is determined as the one that satisfies $[S_t]_{\eta,a^*} \ge [S_t]_{\eta,a}$ for all generated $\eta$.

{\bf Approximate optimistic exploration step. }Using the same $S_t$, we approximate the exploration step in \algo~(\cref{alg:LLF_UCB}, lines 9-11) by selecting an action that achieves the highest score in $S_t$.  This implements exploration using the optimism in the face of uncertainty principle~\citep{auer2002ucb}. 
We discuss variants and tie-breaking rules in \cref{app:ablations}.

\section{Experiments}
\label{app:exp}
We validate \algo~ in two natural language tasks (\textsc{Battleship}, and \textsc{Minesweeper}) in \cite{tajwar2025training} that require learning from language feedback. Please see \cref{sec:exp evns} for the description for each problem.
We consider the following agents for comparison.

\textbf{CoT.} This agent generates one hypothesis and one action, and returns that action immediately. 

\textbf{\algo and \algo~(No exploitation step).} 
In addition to \algo, we implement an ablation agent that conducts optimistic exploration without the consensus-based exploitation step (line 7). We demonstrate that optimistic exploration alone is insufficient in our setup. 

\begin{figure}[t]
\begin{subfigure}[t]{0.23\textwidth}
    \centering
    \includegraphics[width=\linewidth]{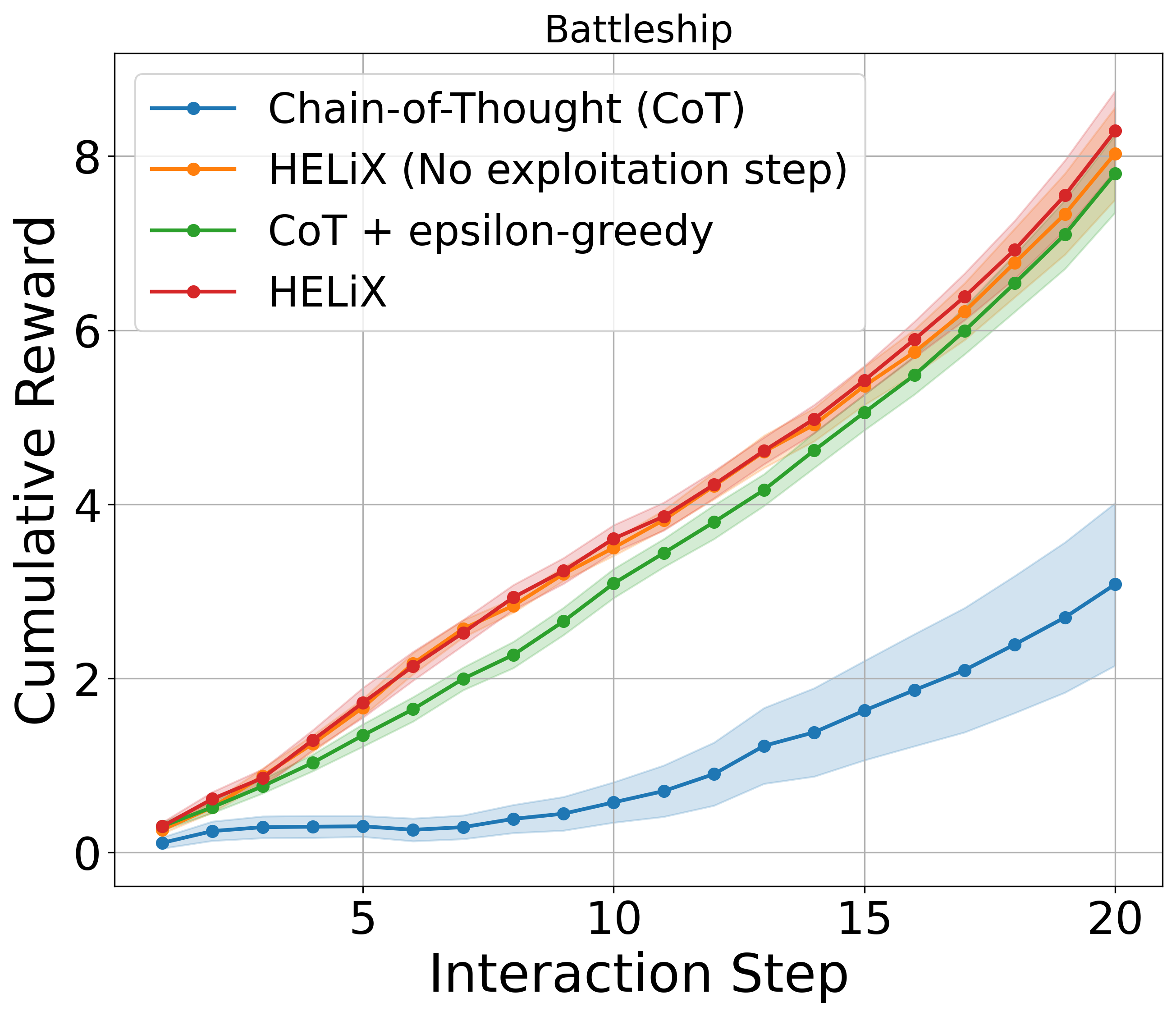}
    \caption{Battleship}  
    \label{fig:battleship}
\end{subfigure}
\begin{subfigure}[t]{0.23\textwidth}
    \centering
    \includegraphics[width=\linewidth]{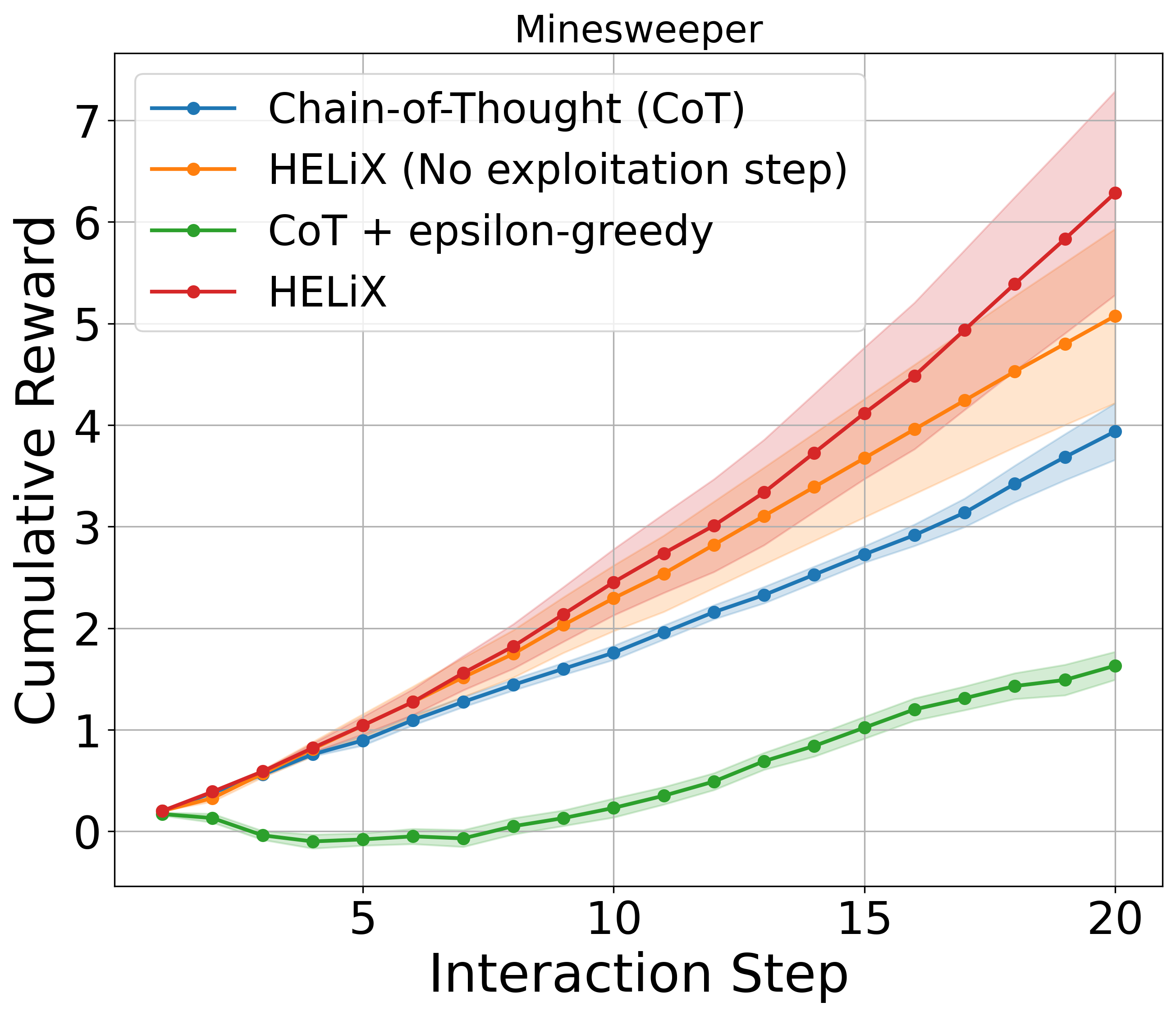}
    \caption{Minesweeper} 
    \label{fig:minesweeper}
\end{subfigure}
\caption{We show the cumulative reward that the agent is able to obtain during a fixed number of interactions with the environment.
Shaded area represents the standard error of cumulative reward across different scenarios. We use Claude-Sonnet-3.5 v2.
}
\label{fig:reward}
\end{figure}

\subsection{Discussion of Results} 

We plot the cumulative reward as a function of the number of environment interaction steps on  \textsc{Battleship} and \textsc{Minesweeper} in Figure~\ref{fig:reward}.  See 
a different variant of \algo{} in Appendices \ref{app:ablations} and additional studies in \ref{app:assumption_satisfaction}. 
We see that for all problems, the base CoT LLM generally performs worse than \algo. In environments where information-gathering is essential, such as \textsc{Minesweeper}, agents designed to conduct strategic exploration and exploitation tend to outperform the CoT baseline by a large margin.
On the other hand, \algo~consistently outperforms both the CoT baseline and the variant \algo~(no exploitation step).

\subsection{Limitations and Disclaimers}

Although the initial results are promising, our practical implementation relies on assumptions that warrant discussion. We assume that the LLM, when prompted, can select an optimal action under a given hypothesis.  We also assume that the LLM can produce fair scores across hypotheses for different actions. However, these assumptions may not hold for all LLMs~\citep{illusion-of-thinking}, and further investigation is needed to validate them.
Additionally, to capture the agent's uncertainty about the environment, we sample a set of hypotheses from the LLM. These hypotheses should be both diverse and faithful in reflecting the history of interactions. The extent to which existing LLMs can propose plausible hypotheses given historical information remains uncertain, with evidence pointing in both directions~\citep{zhou2024hypothesis,si2024can,ghareeb2025robin}.
Our theory-inspired algorithm highlights key properties an LLM should exhibit to function effectively as a decision-making agent, one that autonomously learns from environment feedback, proposes hypotheses, and explores accordingly. Further research is needed to verify whether current LLMs possess these properties. 

\section{Related Work}
\label{sec:related_work}

While using LLMs for general problem solving has been studied for a long time \citep{xie2022an,guo2024how,akyurek2023what}, relatively fewer prior works studied LLMs for sequential decision-making.  Prior methods for leveraging language feedback largely follow two routes.  One directly deploys LLMs as agents, incorporating feedback into subsequent prompts or an external memory buffer \citep{yao2022react,brooks2023large,shinn2023reflexion,wang2024voyager,krishnamurthy2024can,Nie2024EVOLvEEA,xi2025survey}.  The other processes feedback to update model weights via fine-tuning~\citep{chen2024learning,scheurer2022traininglanguagemodelslanguage,raparthy2023generalization,lee2023supervised,qu2025optimizing}. 
More recent work targets exploration more explicitly, such as directly learning exploration strategies through supervised fine-tuning~\citep{Nie2024EVOLvEEA}, preference-based learning~\citep{tajwar2025training}, or reinforcement learning~\citep{schmied2025llms}, or prompting LLMs to mimic a perfect Bayesian learner \citep{arumugam2025efficient}.  
However, these results have been empirical.  We aim to provide a formal framework and guarantees for learning from language feedback. 

Our framework is closely related to multi-armed \citep{lai1985bandit} and contextual bandits \citep{langford2007contextual} and the designs of no-regret algorithms
\citep{auer2002ucb,thompson1933thompson,russo2018tutorial}.  In particular, our algorithm follows the principle of optimism under uncertainty utilized by UCB \citep{auer2002ucb}.  
A key difference is that our algorithm does not observe rewards.  Instead we decode information in the feedback through a hypothesis verifier loss to construct confidence sets.  In contrast to our unobserved reward setting, recent UCB-like heuristics for LLM agents typically either treat hypotheses as code specifying an MDP~\citep{tang2024worldcoder}, and/or assume access to the ground-truth numerical reward~\citep{tang2024worldcoder,rithesh2024rex,Nie2024EVOLvEEA}.  

A separate line of work uses natural language as an auxiliary signal to improve learning. Early studies showed benefits from textual guidance such as manuals~\citep{branavan2012learning}. Subsequent approaches leveraged grounded language to shape behavior~\citep{gauthier2016paradigm}, guide exploration~\citep{harrison2017guiding}, or learn from feedback~\citep{andreas2017learning}.  
More recently, LDD~\citep{zhong2024policy} pre-trains agents on language-annotated demonstrations to learn dynamics, then fine-tunes with RL to improve sample efficiency and generalization. While empirically effective, these approaches do not provide a unifying framework or theoretical guarantees, which is the focus of our work.

More broadly, beyond scalar rewards, rich feedback models have been studied in bandits with side observations \citep{wang2003side,kocak2014efficient}, partial monitoring \citep{bartok2014partial}, and preference-based feedback \citep{furnkranz2012preference}. These settings commonly treat feedback as informative only insofar as it encodes the underlying reward, and emphasize accurate reward decoding.  For example, IGL~\citep{xie2021igl} posits a decoder capable of extracting reward estimates from a rich feedback vector, and treats remaining components as distractions.  In contrast, we highlight that language feedback can encode learning signals beyond reward, and identify regimes where LLF is strictly \emph{easier} than reward-based learning.

Finally, to characterize sample complexity in reward-aware RL, \citet{russo2013eluder} introduce the eluder dimension.  We extend this notion beyond reward learning (see \cref{fig:llf-vennd}), opening a new avenue to understanding agent learning in the era of generative AI.

\section{Conclusion}

In this paper, we develop a formal foundation for learning from language feedback (LLF), where agents learn from language feedback rather than scalar rewards.  We introduce the transfer eluder dimension as a complexity measure capturing how feedback shapes learning efficiency and show that sufficiently informative feedback can yield exponential gains over reward-based learning.  To demonstrate the practicality of this framework, we propose \algo, a no-regret algorithm with performance guarantees in terms of the transfer eluder dimension.  

\paragraph{Limitations and Open Questions}
One might wonder if the transfer eluder dimension forms a lower bound for LLF. The answer, however, is negative, as some LLF instances are trivial despite having infinite transfer eluder dimension.  For instance, our LLF framework does not preclude arbitrary rewards while feedback directly reveals an optimal action. The transfer eluder dimension is unbounded in this case, yet learning is easy (and \algo~solves it in one step).

Compared to the demonstration case in \cref{example:reasoning}, where binary rewards and a unique optimal action keep the transfer eluder dimension finite, this example exposes a mismatch between worst-case analysis and practice.  In particular, our guarantees assume worst-case hypothesis verifier behavior, whereas LLMs impose inductive biases in how they interpret feedback.  Empirically, when an optimal action is stated explicitly, LLMs often trust it and act without inferring the full reward structure.  \algo~captures this via the early stopping criterion (line 7), whereas n\"{a}ive reward-driven UCB fails.
This counterexample points to a gap in our current understanding: the intrinsic complexity of LLF may lie between worst-case reward identification and optimal behavior learning.  
Closing this gap by refining the transfer eluder dimension into a regret lower bound remains an important open question.   
An exciting direction is to introduce an information-theoretic complexity measure capturing the information between the optimal action and observed feedback \citep{lattimore19partialmonitoring}.

Finally, in our experiments, hypotheses are LLM-generated thoughts about the task from reasoning traces, but in general, the hypothesis class is a design choice.  Different hypothesis designs can be viewed as different learning targets \citep{lu2023reinforcement}, potentially changing both behavior and complexity; characterizing these tradeoffs is an important theoretical and empirical direction.

\section*{Acknowledgements}

We thank Dilip Arumugam, the EXAIT @ ICML workshop reviewers, the Netflix Machine Learning and Inference Research group, and the MSR CORE group for helpful feedback and insightful comments on early drafts of this paper.  
We gratefully acknowledge Benjamin Van Roy for his support in allowing this work to continue beyond the internship.  

\section*{Impact Statement}

This paper presents work whose goal is to advance the field
of Machine Learning. There are many potential societal
consequences of our work, none that we feel must be
specifically highlighted here.






\bibliography{references}

\newpage
\onecolumn

\appendix

\section{LLF and its relationship to existing paradigms}
\label{sec:paradigms}

To better understand the position of LLF among existing paradigms of learning from feedback, we provide an in-depth review in this section, as illustrated in Fig.~\ref{fig:llf-vennd}.  In all discussed paradigms, we focus our comparison on how different forms of feedback are subsumed within LLF, while other environment parameters are loosely assumed to be included in the LLF agent's hypothesis space. LLF covers the following learning paradigms commonly discussed in the literature:

\begin{table}[h!]
\caption{Comparison of different learning frameworks and their feedback signals. All these learning paradigms are subsumed under the LLF framework with the last column specifying possible verifier losses for an LLF agent.}
\centering
\begin{tabularx}{\textwidth}{|X|X|p{0.15\textwidth}|X|}
\hline
\textbf{Learning Framework} & \textbf{Feedback Type} & \textbf{Discriminative?} & \textbf{LLF Verifier} \\
\hline
Reinforcement Learning & Reward $r_{\eta^*} \in \mathbb{R}$ & Yes & $\ell(r_{\eta}(a), r_{\eta^*}(a))$ \\
\hline
Multi-objective RL & Reward vector $r_{\eta^*} \in \mathbb{R}^d$ & Yes & $\ell(r_{\eta}(a), r_{\eta^*}(a))$ \\
\hline
Interaction-Grounded Learning (IGL) & Rich feedback $y \in \mathcal{Y}$ s.t. $\exists \,\psi^*: \mathcal{Y} \times \mathcal{A} \rightarrow \mathbb{R} \approx r_{\eta^*}$  & Yes & Consistency loss: $\ell(y,a,\eta)$ (modeling $\psi^*$ is optional) \\
\hline
Preference-based RL & Comparison: $a_1 \stackrel{\eta^*}{\succ} a_2$ & No & $\mathbb{I}\left[\,a_1 \stackrel{\eta}{\succ} a_2\,\right]$ \\
\hline
Imitation Learning & Expert actions $a \in \mathcal{A}^*_{\eta^*}$ & No & $\mathbb{I}\left[\,a \in \mathcal{A}^*_\eta\,\right]$ \\
\hline
\end{tabularx}
\end{table}

\paragraph{Reinforcement learning (RL)} In RL, upon seeing an environment state $x_t \in \mathcal{X}$, the agent chooses an action $a_t \in \mathcal{A}$ and observes a scalar reward feedback $r_t \in \mathbb{R}$. The rewards and states observed by the agent at any decision step $t$, can depend on the past observed states and actions. In LLF, the agent's hypothesis $\eta \in \mathcal{H}$ returns a reward function $r_\eta: \mathcal{A} \times \mathcal{X} \rightarrow [0,1]$, while the feedback function is exactly the same: $f_\eta = r_\eta$. Hence, RL is trivially subsumed by LLF. 

\paragraph{Partial Monitoring Games} In Partial Monitoring~\citep{bartok2014partial}, the agent observes an abstract feedback signal (not necessarily reward for its chosen action) and must deduce reward-optimal actions indirectly. 
The function that maps actions to feedbacks (signal function) is assumed known to the agent, and the challenge is to explore and infer optimal actions indirectly by leveraging the known signal function.
In contrast, LLF assumes that the feedback function is unknown, and agents must interpret natural language feedback through a hypothesis verifier to ascertain semantic consistency with hypotheses. 
The unknown feedback mapping in LLF fundamentally alters the learning challenge, requiring ways to extract insights from potentially ambiguous language feedback, and thus capturing a broader class of interactive learning scenarios.

\begin{figure}[t]
\centering
\resizebox{\columnwidth}{!}{%
\begin{tikzpicture}[
    every node/.style={font=\sffamily},
    layer/.style={draw, thick, rounded corners=8pt, inner sep=12pt},
    leaf/.style={draw, thick, rounded corners=5pt, inner xsep=6pt, inner ysep=4pt,
                 fill=orange!45, align=center, minimum height=9mm},
]

\node[leaf, fill=orange!55] (inf) at (7,0.7) {Infeasible LLF};
\node[leaf, fill=orange!55, right=4mm of inf] (pref) {Preference-based RL\\\citep{wirth2017survey}};
\node[leaf, fill=orange!55, below=5mm of inf.south west, anchor=north west] (imit) {Imitation learning\\\citep{ng2000algorithms}\\\citep{ross2014reinforcement}};
\node[leaf, fill=orange!55, right=4mm of imit] (mo) {Multi-objective RL\\\citep{roijers2013survey}};

\coordinate (disc-center) at ($(inf.north)!0.5!(imit.south)$);
\node[leaf] (igl) at (2.0, 0 |- disc-center) {Interaction-grounded\\Learning~\citep{xie2021igl}};
\node[leaf, left=5mm of igl] (rl) {RL};

\node[layer, draw=none, fit=(rl)(igl)] (disc-inner) {};
\coordinate (disc-top)    at (disc-inner.west |- inf.north);
\coordinate (disc-bottom) at (disc-inner.east |- imit.south);
\coordinate (llf-top-pad) at ($(disc-inner.north) + (0, 25pt)$);

\begin{scope}[on background layer]
  \node[layer, fill=orange!10, inner ysep=20pt, inner xsep=50pt,
        fit=(rl)(igl)(inf)(pref)(imit)(mo)(disc-top)(disc-bottom)(llf-top-pad)] (llf) {};
  \node[layer, fill=orange!25, inner ysep=18pt, inner xsep=10pt,
      fit=(rl)(igl)] (disc) {};
\end{scope}

\node[font=\bfseries, anchor=north, yshift=-3pt] at (disc.north) {Discriminative LLF};
\node[font=\bfseries, anchor=north, yshift=-8pt] at (llf.north) {Learning From Language Feedback (LLF)};

\end{tikzpicture}%
}
\caption{\textbf{LLF and its relationship to existing paradigms}. LLF covers many existing paradigms: (1) reinforcement learning (RL): agent learning from a scalar reward signal, (2) interaction-grounded learning (IGL) \citep{xie2021igl}: agent observes a generic feedback vector that can decode a latent reward signal, (3) discriminative LLF: agent observes language feedback that discriminates between rewards, (4) multi-objective RL: extension of RL to problems with multiple objectives, combined via a utility function, (5) preference-based RL: feedback provides a comparison between two actions, (6) imitation learning: feedback provides expert demonstrations.} 
\label{fig:llf-vennd}
\end{figure}

\paragraph{Interaction-grounded Learning (IGL)} In IGL~\citep{xie2021igl}, the environment generates a latent scalar reward $r(x,a) \in [0,1]$ but only reveals a rich feedback vector $y \in \mathcal{Y}$. To enable learning, the IGL framework assumes reward decodability, i.e., the existence of a decoder $\psi \in \Psi$, such that $\psi: \mathcal{Y} \times \mathcal{A} \rightarrow [0,1]$, capable of extracting reward estimates for the agent. LLF naturally accommodates this by modeling both the latent reward $r_\eta$ and the feedback mapping $f_\eta$ (hence the feedback $y$), allowing the agent to reason about the consistency between the decoded rewards and the observed feedback vectors without needing to identify the true decoder $\psi^*$ or the true feedback function $f^*$.

\paragraph{Discriminative LLF} Discriminative LLF, defined formally in Definition~\ref{def:informative_feedback}, subsumes the special case where the latent reward function is itself a function of the observed feedback~\citep{xie2024text2reward}. This framework generalizes both RL and IGL, capturing scenarios where feedback is rich and structured (e.g., language) but ultimately reflects reward. As discussed in Section~\ref{sec:compare-to-reward}, this class of LLF problems can be no harder than the reward-only setting and may even improve sample efficiency by leveraging structure in the feedback to recover the reward signal more effectively.

\paragraph{Multi-objective RL (MORL)} MORL extends the standard RL framework to environments that return vector-valued rewards rather than a single scalar. The central challenge in MORL is balancing trade-offs across multiple objectives, often handled via scalarization methods (see single-policy learning approaches in \citep{roijers2013survey,zhang2020random}) or Pareto front exploration \citep{mossalam2016multi}. In LLF, this is naturally captured by allowing the agent's hypothesis to represent vector-valued reward functions. 
 Furthermore, the hypothesis verifier loss $\ell: \mathcal{A} \times \mathcal{O} \times \mathcal{H}$ can be extended accordingly. Since the reward vector may be under-determined with respect to the underlying utility function, we treat MORL as distinct from discriminative LLF (\cref{def:informative_feedback}), which assumes informativeness of feedback with respect to scalar reward.

\paragraph{Preference-based RL} In PbRL, the environment does not reveal scalar reward feedback. Instead, the agent receives pairwise preferences over actions (or trajectories), e.g., that action $a$ is preferred over action $a'$. These comparisons may be between actions selected by the agent or between one agent-chosen action and a reference provided by the environment. LLF captures this setting by modeling the feedback function $f_\eta$ as a binary comparator over pairs of actions such that $f_\eta(a,a') \in \{0,1\}$ indicates the binary preference. The underlying reward model can be implicitly defined in the hypothesis $\eta$ such that it induces such preferences. Thus, this preference-based structure fits within LLF.

\paragraph{Imitation learning (IL)} In IL, the agent learns from demonstrations of expert behavior rather than explicit feedback or rewards. To make a closer comparison with LLF, we can consider the interactive imitation learning setting, where the agent observes expert actions (corrections) for all environment observations. IL can be modeled within the LLF framework by considering expert actions as a form of feedback $f^*_\eta = a^*$. Any hypothesis $\eta \in \mathcal{H}$ considered by the LLF agent can evaluate a hypothesis verifier loss which corresponds to the discrepancy between the optimal action of the hypothesis $a^*_\eta$ and expert action $a^*$. IL is thus a special case of LLF where the feedback space is the action space itself, and consistency between the agent’s output and expert-labeled actions is the hypothesis verifier loss.


\section{Regret Analysis}
\label{appendix:full_proof}

\subsection{Proof Sketch}
\label{appendix:proof_sketch}
We sketch the proof of the general argument in \cref{thm:regret} in four main steps.  The full proof is presented in \cref{appendix:full_proof}. 

\textbf{Step 1: Define confidence sets}
For each hypothesis $\eta\in\HH$, we define the cumulative population prediction error $\LL_t(\eta) = \sum_{i=0}^{t-1} \left(\E_{O\sim f_{\eta^*} (A_i)}[\ell(A_i, O, \eta)] - \ell_{\eta^*}^{\min}(A_i)\right)$
and the cumulative empirical hypothesis verifier loss 
$L_t(\eta) = \sum_{i=0}^{t-1} \ell(A_i,O_i,\eta) = \sum_{i=0}^{t-1} \ell_i(\eta)$.  We define confidence sets 
$\HH_t = \{\eta\in\HH : L_t(\eta) \le \min_{\eta'\in\HH} L_t(\eta') + \beta_t\}$
where $\beta_t$ is a confidence parameter.

\textbf{Step 2: Regret decomposition}
We let the width of a subset $\VV\subseteq\HH$ at an action $a\in\actions$ be 
$w_{\VV}(a) = \sup_{\overline{\eta}\in\VV}\left|r_{\overline{\eta}}(a) - r^*(a)\right|$.  Then, we can decompose the regret in terms of version space widths:
$\mathrm{Regret}(T, \eta^*) \le \sum_{t=0}^{T-1} \E\left[w_{\VV_t}(A_t)\cdot\one\{\eta^*\in \VV_t\} + \one\{\eta^*\notin \VV_t\}\right].$

\textbf{Step 3: Bounding the sum of widths via transfer eluder dimension}  
The key step is to show that if the width $w_{\HH_t}(A_t) > \epsilon$ for some $\epsilon>0$, then $A_t$ must be $\epsilon$-dependent on only $O(\beta_t/\epsilon^2)$ disjoint historical action sequences, where $\beta_t$ is the confidence parameter.  By the definition of the transfer eluder dimension $d_{TE} = \dimTE(\HH,\ell,\epsilon)$, in any sequence of $N$ actions, there must be some action that is $\epsilon$-dependent on at least $\Omega(N/d)$ previous ones.  Combining these facts forces the number of large-width version spaces $\sum_{t=0}^{T-1}\one\{w_{\HH_t}(A_t) > \epsilon\}$ to be bounded by $O(\beta_T d/\epsilon^2)$.  Rearranging terms and choosing a suitable sequence of $\epsilon$ gives that with high probability,
$\sum_{t=0}^{T-1}w_{\VV_t}(A_t) \le O(d_{TE} + 2\sqrt{3d_{TE} \beta_T T}).$
Note that when the exploitation step is triggered, the per-step regret of all following steps become zero (Lemma \ref{lem:continued minimax}), and so the regret of \algo~is always bounded above by that without the exploitation step.  

\textbf{Step 4: Prove high-probability confidence set concentration} 
It remains to define suitable $\beta_t$'s and show that $\eta^*\in\VV_t$ for all $t\in\Nbb$ with high probability.  Depending on what structural assumptions are known for the hypothesis verifier loss $\ell$, we determine the rate of decay of $\beta_t$.  If we only make the minimal assumption that $\ell$ is bounded, then $\beta_T = \widetilde{O}(\sqrt{T})$.  Putting everything together proves \cref{thm:regret}.

\subsection{Complete Analysis}

We first define the version spaces used in the algorithm.  As shorthand notations, define 
$$\LL_t(\eta) = \sum_{i=0}^{t-1} \left(\E_{O\sim f_{\eta^*} (A_i)}[\ell(A_i, O, \eta)] - \ell_{\eta^*}^{\min}(A_i)\right)$$ 
to be the cumulative population prediction error and 
$$L_t(\eta) = \sum_{i=0}^{t-1} \ell(A_i,O_i,\eta) = \sum_{i=0}^{t-1} \ell_i(\eta)$$
to be the cumulative empirical hypothesis verifier loss.  A small value of $L_t(\eta)$ means $\eta$ is close to consistent with observed feedback.  
Let $\VV_t\subseteq\HH$ be the version space of all hypotheses still plausible after $t$ rounds of interactions.  Concretely,
\begin{equation}\label{eqn:version_space}
    \VV_t = \{\eta\in\HH : L_t(\eta) \le \min_{\eta'\in\HH} L_t(\eta') + \beta_t\},
\end{equation}
where $\beta_t > 0$ is an appropriately chosen confidence parameter so that we do not throw away the true hypothesis $\eta^*$ due to noise. 

A useful approach to bounding the regret is to decompose it in terms of version spaces.  We define the width of a subset $\VV\subseteq\HH$ at an action $a\in\actions$ by 
$$w_{\VV}(a) = \sup_{\overline{\eta}\in\VV}\left|r_{\overline{\eta}}(a) - r^*(a)\right|.$$
\begin{proposition}[Regret decomposition]
\label{prop:regret-decomp}
    Fix any sequence $\{\VV_t: t\in\mathbb{N}\}$, where $\VV_t\subseteq\HH$ is measurable with respect to $\sigma(H_t)$.  Then for any $T\in\mathbb{N}$, 
    $$\mathrm{Regret}(T, \eta^*) \le \sum_{t=0}^{T-1} \E\left[w_{\VV_t}(A_t)\cdot\one\{\eta^*\in \VV_t\} + \one\{\eta^*\notin \VV_t\}\right].$$
\end{proposition}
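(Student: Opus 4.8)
The plan is to pass from the cumulative regret to a sum of conditional per-step regrets and bound each term by splitting on the $\sigma(H_t)$-measurable event $\{\eta^*\in\VV_t\}$. Writing $R_{\max}^*=r^*(\pi_{\eta^*})$ and abbreviating $r_\eta(\pi)=\E_{a\sim\pi}[r_\eta(a)]$, the tower property gives
\begin{equation*}
\mathrm{Regret}(T,\eta^*)=\sum_{t=0}^{T-1}\E\big[\,R_{\max}^*-r^*(A_t)\,\big]=\sum_{t=0}^{T-1}\E\big[\,\E[\,R_{\max}^*-r^*(A_t)\mid H_t\,]\,\big],
\end{equation*}
so it suffices to bound the inner conditional per-step regret by $\E[w_{\VV_t}(A_t)\mid H_t]\,\one\{\eta^*\in\VV_t\}+\one\{\eta^*\notin\VV_t\}$, since both $\VV_t$ and the event $\{\eta^*\in\VV_t\}$ are $\sigma(H_t)$-measurable. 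The key structural point I would stress is that this per-step bound uses \emph{only} that $A_t$ is produced by the optimistic (exploration) or pessimistic (exploitation) rule instantiated on the set $\VV_t$, together with $r^*\in[0,1]$; it never refers to how $\VV_t$ is built from the data (e.g.\ the cumulative-loss threshold $\beta_t$). Consequently the decomposition is valid for any $\sigma(H_t)$-measurable confidence-set sequence over which these two rules are run, which is exactly the version-space sequence $\VV_t=\HH_t$ of \cref{alg:LLF_UCB} under an arbitrary schedule $\{\beta_t\}$.

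On the complementary event $\{\eta^*\notin\VV_t\}$ there is nothing to do beyond boundedness: since $r^*\in[0,1]$ we have $R_{\max}^*-r^*(A_t)\le R_{\max}^*\le 1$, contributing at most $\one\{\eta^*\notin\VV_t\}$. The substantive case is $\{\eta^*\in\VV_t\}$, where I would treat the two branches of the action rule in turn. In the exploration branch $A_t\sim\pi_o$ with $(\pi_o,\eta_o)=\argmax_{\pi}\max_{\eta\in\VV_t}r_\eta(\pi)$; since $\eta^*\in\VV_t$, optimism gives $R_{\max}^*=r_{\eta^*}(\pi_{\eta^*})\le\max_{\eta\in\VV_t}r_\eta(\pi_\eta)=r_{\eta_o}(\pi_o)$, whence
\begin{equation*}
R_{\max}^*-r^*(\pi_o)\le r_{\eta_o}(\pi_o)-r^*(\pi_o)=\E_{A_t\sim\pi_o}\big[r_{\eta_o}(A_t)-r^*(A_t)\big]\le\E_{A_t\sim\pi_o}\big[w_{\VV_t}(A_t)\big],
\end{equation*}
where the final inequality uses $\eta_o\in\VV_t$, so that $r_{\eta_o}(A_t)-r^*(A_t)\le\sup_{\bar\eta\in\VV_t}|r_{\bar\eta}(A_t)-r^*(A_t)|=w_{\VV_t}(A_t)$. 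In the exploitation branch the minimax value $\min_{\pi}\max_{\eta\in\VV_t}[r_\eta(\pi_\eta)-r_\eta(\pi)]$ equals zero, so by \cref{lm:minimax solution} the policy $\pi_p$ is simultaneously optimal for every $\eta\in\VV_t$; because $\eta^*\in\VV_t$, this forces $r^*(A_t)=R_{\max}^*$ almost surely and the per-step regret is $0\le\E[w_{\VV_t}(A_t)\mid H_t]$. Either branch thus yields $\E[R_{\max}^*-r^*(A_t)\mid H_t]\,\one\{\eta^*\in\VV_t\}\le\E[w_{\VV_t}(A_t)\mid H_t]\,\one\{\eta^*\in\VV_t\}$.

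Combining the two cases bounds the conditional per-step regret by $\E[w_{\VV_t}(A_t)\mid H_t]\,\one\{\eta^*\in\VV_t\}+\one\{\eta^*\notin\VV_t\}$; taking outer expectations and summing over $t$ proves the claim. I expect the exploitation branch to be the main obstacle, since one must invoke \cref{lm:minimax solution} to convert the vanishing minimax value into simultaneous optimality over all of $\VV_t$ and then use $\eta^*\in\VV_t$ to certify zero regret on that event. The exploration branch is then routine, the only substantive observation being that the optimistic hypothesis $\eta_o$ itself lies in $\VV_t$, which is precisely what lets the width $w_{\VV_t}$ absorb the optimism gap $r_{\eta_o}-r^*$.
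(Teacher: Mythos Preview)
Your proof is correct and follows essentially the same optimism-based route as the paper. The only difference is packaging: the paper works pointwise with the upper confidence bound $U_t(a)=\sup_{\eta\in\VV_t}r_\eta(a)$ and proves the proposition only for the exploration branch (deferring the exploitation step to a later remark invoking \cref{lem:continued minimax}), whereas you argue at the policy level with the optimistic pair $(\pi_o,\eta_o)$ and fold the exploitation branch in directly via \cref{lm:minimax solution}, which makes the bound self-contained for the full \algo\ action rule.
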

\begin{proof}
    We define the upper bound $U_t(a) = \sup\{r_\eta(a): \eta\in\VV_t\}$ and let $a^* \in\argmax_{a\in\actions} r^*(a)$. 
    When $\eta^*\in\VV_t$, the bound $r^*(a) \le U_t(a)$ hold for all actions.  This implies
    \begin{align*}
        r^*(\eta^*) - r^*(A_t) &\le \left(U_t(a^*) - r^*(A_t)\right)\cdot\one\{\eta^*\in \VV_t\} + \one\{\eta^*\notin\VV_t\} \\
        &\leq w_{\VV_t}(A_t)\cdot\one\{\eta^*\in \VV_t\} + \one\{\eta^*\notin\VV_t\} + [U_t(a^*) - U_t(A_t)] \cdot\one\{\eta^*\in \VV_t\}.
    \end{align*}
    Since the algorithm selects an action $A_t$ that maximizes $U_t(a)$, the conclusion follows by taking the expectation and summing over all $t=0,\ldots,T-1$.  
\end{proof}

%
This proposition reduces upper bounding the regret to bounding the expected sum of widths $\sum_{t=0}^{T-1}\E[w_{\VV_t}(A_t)]$ if the version spaces $\VV_t$ are constructed such that they contain $\eta^*$ with high probability.  

We first introduce a class of Martingale exponential inequalities that will be useful throughout our analysis, including bounding the sum of widths and proving the high-confidence events $\eta^*\in\VV_t$.  
For random variables $(X_t|t\in\Nbb)$ adapted to the filtration $(\FF_t|t\in\Nbb)$, let us assume that $\E[\exp(\lambda X_t)]$ is finite for all $\lambda$ and $\E[X_t|\FF_{t-1}] = 0$.  
We assume that there is a uniform upper bound on the cumulant generating function (i.e., log moment generating function) for the conditional distribution of $X_t$. 
\begin{lemma}[Cumulant generating function]
\label{lem:chernoff}
    If there is a sequence of convex functions $\{\psi_t:[0,\infty) \to \Re\}_{t=0}^\infty$ with $\psi_t(0)=0$ such that, for all $t\in\Nbb$ and all $\lambda\in[0,\infty)$, 
    $$\log \E\left[e^{\lambda |X_t|} | \FF_{t-1} \right] \le \psi_t(\lambda),$$
    then for all $\delta\in(0,1)$ and $T\in\Nbb$, with probability $1-\delta$,
    $$\left|\sum_{t=0}^{T-1} X_t \right| \le \inf_{\lambda\in[0,\infty)}\left\{\frac{\sum_{t=0}^{T-1}\psi_t(\lambda) + \log(2/\delta)}{\lambda}\right\}.$$
\end{lemma}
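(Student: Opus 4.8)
The plan is to run the standard exponential-supermartingale (Chernoff-style) argument for martingale difference sequences: convert the per-step cumulant bound into a nonnegative supermartingale, apply Markov's inequality, symmetrize to get a two-sided tail bound for each fixed $\lambda$, and then optimize over $\lambda$ using the fact that the resulting bound is a deterministic function of $\lambda$.

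First I would fix $\lambda \in [0,\infty)$. Since $e^{\lambda X_t} \le e^{\lambda |X_t|}$ and $e^{-\lambda X_t} \le e^{\lambda |X_t|}$, the hypothesis yields $\log \mathbb{E}[e^{\pm\lambda X_t}\mid \mathcal{F}_{t-1}] \le \psi_t(\lambda)$, handling both signs at once. For each sign define $M_n^{\pm} = \exp\!\big(\pm\lambda\sum_{t=0}^{n-1} X_t - \sum_{t=0}^{n-1}\psi_t(\lambda)\big)$ with $M_0^{\pm}=1$. Because $M_n^{\pm}$ is $\mathcal{F}_{n-1}$-measurable and $M_{n+1}^{\pm} = M_n^{\pm}\exp(\pm\lambda X_n - \psi_n(\lambda))$, one gets $\mathbb{E}[M_{n+1}^{\pm}\mid\mathcal{F}_{n-1}] = M_n^{\pm} e^{-\psi_n(\lambda)}\,\mathbb{E}[e^{\pm\lambda X_n}\mid\mathcal{F}_{n-1}] \le M_n^{\pm}$, so by induction $\mathbb{E}[M_n^{\pm}] \le 1$ for every $n$. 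Markov's inequality then gives, for any $\lambda>0$, $\mathbb{P}\big(\pm\sum_{t=0}^{T-1}X_t \ge \tfrac{1}{\lambda}\big(\sum_{t=0}^{T-1}\psi_t(\lambda)+\log(2/\delta)\big)\big) = \mathbb{P}(M_T^{\pm}\ge 2/\delta) \le \delta/2$, and a union bound over the two signs shows that for each fixed $\lambda>0$, with probability at least $1-\delta$, $\big|\sum_{t=0}^{T-1}X_t\big| \le \tfrac{1}{\lambda}\big(\sum_{t=0}^{T-1}\psi_t(\lambda)+\log(2/\delta)\big)$.

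It remains to move the optimization over $\lambda$ inside the probability. Write $g(\lambda) = \tfrac{1}{\lambda}\big(\sum_{t=0}^{T-1}\psi_t(\lambda)+\log(2/\delta)\big)$ for $\lambda>0$ and $g(0)=+\infty$ (consistent with $\psi_t(0)=0$), so that $\inf_{\lambda\in[0,\infty)} g(\lambda) = \inf_{\lambda>0} g(\lambda)=:c$. The point is that $g$ is deterministic, so $c$ is a constant, not a random variable. Taking $\lambda_k>0$ with $g(\lambda_k)\downarrow c$, the events $\big\{\big|\sum_t X_t\big|>g(\lambda_k)\big\}$ increase to $\big\{\big|\sum_t X_t\big|>c\big\}$, hence $\mathbb{P}\big(\big|\sum_t X_t\big|>c\big) = \lim_k \mathbb{P}\big(\big|\sum_t X_t\big|>g(\lambda_k)\big) \le \delta$; if the infimum is attained one simply instantiates the previous display at the minimizing $\lambda$. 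This is exactly the claimed bound.

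The one place to be careful --- and the only real obstacle --- is this last step: one cannot take a union bound over the uncountable family of $\lambda$'s, so it is essential to use that the right-hand side $g(\lambda)$ does not depend on the sample path, which lets the optimization commute with the probability via a monotone limit (or direct instantiation at the optimum). The supermartingale verification, the $\mathcal{F}_{t-1}$-measurability bookkeeping, and the two-sided union bound are all routine.
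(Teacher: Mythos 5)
Your proof is correct and follows essentially the same route as the paper: a Chernoff-type exponential moment bound obtained by iterated conditioning (which you package as the supermartingale property $\mathbb{E}[M_T^{\pm}]\le 1$), Markov's inequality, a union bound over the two signs, and optimization over $\lambda$. Your explicit justification for moving the infimum over $\lambda$ inside the probability---valid because the bound $g(\lambda)$ is deterministic, so no union bound over an uncountable family is needed---is a point the paper's proof glosses over, but it does not change the substance of the argument.
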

\begin{proof}
    Let $S_T = \sum_{t=0}^{T-1} X_t$.  By Markov's inequality, for all $u\in\Re$ and $\lambda\in[0,\infty)$,
    \begin{align*}
        \Pbb\left( S_T \ge u\right) &= \Pbb\left( e^{\lambda S_T} \ge e^{\lambda u}\right) \le \frac{\E[e^{\lambda S_T}]}{e^{\lambda u}} = \frac{\E[ \E[e^{\lambda S_T}|\FF_{T-1}]]}{e^{\lambda u}} = \frac{\E[ e^{\lambda\sum_{t=0}^{T-2}X_t} \E[e^{\lambda X_{T-1}}|\FF_{T-1}]]}{e^{\lambda u}} \\
        &\le \frac{\E[e^{\lambda\sum_{t=0}^{T-2}X_t} ]\exp(\psi_{T-1}(\lambda))}{e^{\lambda u}} \le \cdots \le \frac{\exp(\sum_{t=0}^{T-1}\psi_t(\lambda))}{e^{\lambda u}}.
    \end{align*}
    This gives
    $$\Pbb\left(S_T\ge u\right) \le \exp\left( - \lambda u + \sum_{t=0}^{T-1}\psi_t(\lambda)\right)$$
    for all $\lambda\in[0,\infty)$.  Applying the same argument to $-X_t$, we have
    $$\Pbb\left(S_T\le -u\right) = \Pbb\left(-S_T\ge u\right) \le \exp\left( - \lambda u + \sum_{t=0}^{T-1}\psi_t(\lambda)\right).$$
    Solving for $u$ to achieve a $\delta/2$ probability for each side, and taking the infimum over $\lambda\in[0,\infty)$, we have with probability at least $1-\delta$, 
    $$S_T \le \inf_{\lambda\in[0,\infty)} \left\{ \frac{\sum_{t=0}^{T-1}\psi_t(\lambda) + \log(2/\delta)}{\lambda} \right\}.$$
\end{proof}
%
We now proceed to bounding the sum of widths $\sum_{t=0}^{T-1} \E[w_{\VV_t}(A_t)]$ when the event $\eta^*\in\VV_t$ holds.  As a first step, we show that there cannot be many version spaces $\VV_t$ with a large width. 
For all $t\in\Nbb$ and $\eta,\eta'\in\HH$, we define the martingale difference 
$$Z_t(\eta,\eta') = \E_{O\sim f_{\eta^*}(A_t)} \left[\ell(A_t,O,\eta) - \ell(A_t,O,\eta') | \GG_{t-1}\right] -  \left( \ell(A_t,O_t,\eta) - \ell(A_t,O_t,\eta') \right).$$
Notice that $Z_t$ have expectation zero and constitutes a martingale difference sequence adapted to the filtration $(\GG_t|t\in\Nbb)$ where $\GG_t$ is the $\sigma$-algebra generated by all observations $\{(a_0,o_1),\ldots,(a_t,o_t)\}$ up to time $t$.
\begin{proposition}
\label{prop:bound-on-num-large-widths}
    If the conditions in \cref{lem:chernoff} holds for $(Z_t|t\in\Nbb)$ adapted to $(\GG_t|t\in\Nbb)$ with cumulative generating function bound $(\psi_t|t\in\Nbb)$, $(\beta_t\ge 0| t\in\mathbb{N})$ in \eqref{eqn:version_space} is a nondecreasing sequence such that for all $t\in\Nbb$, $\beta_t\ge \inf_{\lambda\in[0,\infty)} \left\{ \frac{\sum^{t-1}_{i=0} \psi_i(\lambda) + \log(10 t^2/3\delta)}{\lambda} \right\}$, then for all $\delta\in(0,1)$, with probability at least $1-\delta$, 
    $$\sum_{t=0}^{T-1} \one\{w_{\VV_t}(A_t) > \epsilon\}\cdot\one\{\eta^*\in \VV_t\} \le \left(\frac{3\beta_T}{\epsilon^2} + 1\right)\dim_{TE}(\HH,\ell,\epsilon)$$
    for all $T\in\mathbb{N}$ and $\epsilon>0$. 
\end{proposition}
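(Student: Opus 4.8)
The plan is to run the eluder-dimension argument of \citet{russo2013eluder} in the transfer setting, by splitting the bound into a probabilistic ingredient — converting membership in the empirical version space $\VV_t$ into a bound on the \emph{population} prediction error $\LL_t$ — and a purely combinatorial ``greedy packing'' ingredient controlling how often $A_t$ can have large version-space width — and then combining them. Throughout I would condition on the single good event on which \cref{lem:chernoff}, applied to the martingale differences $Z_i(\cdot,\eta^*)$ with the stated per-round confidence budget $3\delta/5t^2$ (note $\sum_{t\ge1}3\delta/5t^2<\delta$), gives $\big|\sum_{i<t}Z_i(\eta,\eta^*)\big|\le\beta_t$; the one subtlety here is that this bound must hold for the \emph{data-dependent} witness $\bar\eta$ produced below, so $\psi_t$ has to be taken to dominate the cumulant generating function uniformly over $\HH$ (via a cover w.r.t.\ $d_\HH$), which is where a $\log N(\HH,\cdot,d_\HH)$ factor is injected into $\beta_t$.

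\textbf{Ingredient 1: version space $\Rightarrow$ small $\LL_t$.} Using \cref{assum:unbiased-feedback} to identify $\ell_{\eta^*}^{\min}(A_i)=\E_{O\sim f_{\eta^*}(A_i)}[\ell(A_i,O,\eta^*)]$, one gets the exact decomposition $\LL_t(\eta)=\big(L_t(\eta)-L_t(\eta^*)\big)+\sum_{i<t}Z_i(\eta,\eta^*)$. For $\eta\in\VV_t$ the first term is at most $\beta_t$ (since $L_t(\eta^*)\ge\min_{\eta'}L_t(\eta')$), and the second is at most $\beta_t$ in absolute value on the good event, so $\LL_t(\eta)\le 2\beta_t$ for every $\eta\in\VV_t$ and every $t$.

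\textbf{Ingredient 2: large width $\Rightarrow$ few transfer dependencies, plus a packing lemma.} If $w_{\VV_t}(A_t)>\epsilon$, I would pick $\bar\eta\in\VV_t$ with $|r_{\bar\eta}(A_t)-r^*(A_t)|>\epsilon$. Were $A_t$ $\epsilon$-transfer dependent on each of $K$ disjoint subsequences $S_1,\dots,S_K$ of $(A_0,\dots,A_{t-1})$, then instantiating the definition of transfer dependence at the pair $(\bar\eta,\eta^*)$ forces $\sum_{i\in S_j}\big(\E_{O\sim f_{\eta^*}(A_i)}[\ell(A_i,O,\bar\eta)]-\ell_{\eta^*}^{\min}(A_i)\big)>\epsilon^2$ for each $j$; summing over $j$ and using nonnegativity of these per-step terms (by definition of $\ell_{\eta^*}^{\min}$) gives $K\epsilon^2<\LL_t(\bar\eta)\le 2\beta_t\le 2\beta_T\le 3\beta_T$ (the extra slack absorbing the cover discretization error folded into $\beta_t$), so $A_t$ is transfer dependent on fewer than $3\beta_T/\epsilon^2$ disjoint subsequences of its history. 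Separately I would prove the standard combinatorial fact that in \emph{any} action sequence $(b_1,\dots,b_m)$ some $b_j$ is $\epsilon$-transfer dependent on at least $m/d-1$ disjoint subsequences of $(b_1,\dots,b_{j-1})$, where $d:=\dimTE(\HH,\ell,\epsilon)$: process $b_1,\dots,b_m$ left to right, appending each to the first maintained subsequence of which it is $\epsilon$-transfer \emph{independent} and opening a fresh one only when it is transfer dependent on all current subsequences; every subsequence then has each element transfer independent of its predecessors, hence length $\le d$, so there are $\ge m/d$ of them, and whichever $b_j$ opened the last one is transfer dependent on $\ge m/d-1$ of its predecessors' subsequences.

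\textbf{Combine, and the main obstacle.} Applying the packing lemma to the subsequence of rounds $t<T$ with $w_{\VV_t}(A_t)>\epsilon$ and $\eta^*\in\VV_t$ — of length $N:=\sum_{t<T}\one\{w_{\VV_t}(A_t)>\epsilon\}\,\one\{\eta^*\in\VV_t\}$ — yields some such round whose action is transfer dependent on $\ge N/d-1$ disjoint subsequences of its history, while Ingredient 2 caps that number below $3\beta_T/\epsilon^2$; rearranging gives $N\le(3\beta_T/\epsilon^2+1)\,\dimTE(\HH,\ell,\epsilon)$ on the $(1-\delta)$ event, simultaneously for all $T$ and $\epsilon>0$. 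I expect the real work to be in Ingredient 1: the version space is defined through the empirical loss whereas transfer (in)dependence is phrased through the population loss under $f_{\eta^*}$, and $\bar\eta$ is chosen only after seeing the data, so \cref{lem:chernoff} genuinely has to be boosted to hold uniformly over a discretization of $\HH$, and the discretization error then has to be controlled (via $d_\HH$-Lipschitzness of $\ell$) tightly enough to land on the stated constant rather than a larger multiple of $\beta_T/\epsilon^2$; the combinatorial packing step, by contrast, is routine once the greedy construction is set up carefully.
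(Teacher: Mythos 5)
Your proposal is correct and follows essentially the same route as the paper's proof: the same martingale decomposition converting membership in $\VV_t$ into a bound on the cumulative population error $\LL_t(\bar\eta)$ (your bookkeeping even yields $2\beta_t$ where the paper settles for $3\beta_t$), the same counting of disjoint $\epsilon$-transfer-dependent subsequences, and the same greedy packing argument over the large-width rounds. Your flagged subtlety --- that the concentration bound must hold uniformly over the data-dependent witness $\bar\eta$, requiring a cover of $\HH$ and a $\log N(\HH,\cdot,d_\HH)$ inflation of $\beta_t$ --- is a legitimate point that the paper's proof of this proposition glosses over (it only union-bounds over $t$), though the paper does fold the covering number into $\beta_T^*$ downstream when constructing the version spaces.
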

\begin{proof}
    We first show that if $w_{\VV_t}(A_t) > \epsilon$ and $\eta^*\in\VV_t$, then with high probability, $A_t$ is $\epsilon$-dependent on fewer than $O(\beta_t/\epsilon^2)$ disjoint subsequences of $(A_0,A_1,\ldots,A_{t-1})$.  
    If $w_{\VV_t}(A_t) > \epsilon$ and $\eta^*\in\VV_t$, there exists $\overline{\eta}\in\VV_t$ such that $\left|r_{\overline{\eta}}(A_t) - r_{\eta^*}(A_t)\right| > \epsilon$.  By definition, if $A_t$ is $\epsilon$-dependent on a subsequence $(A_{i_1},\ldots,A_{i_k})$ of $(A_0,\ldots,A_{t-1})$, then we have that
    $$\sum_{j=1}^k \left(\E_{O\sim f_{\eta^*} (A_{i_j})}[\ell(A_{i_j}, O, \overline{\eta})] - \ell_{\eta^*}^{\min}(A_{i_j})\right) > \epsilon^2.$$
    It follows that if $A_t$ is $\epsilon$-dependent on $K$ disjoint subsequences of $(A_0,\ldots,A_{t-1})$ then 
    $$\sum_{i=0}^{t-1} \left(\E_{O\sim f_{\eta^*} (A_{i})}[\ell(A_i, O, \overline{\eta})] - \ell_{\eta^*}^{\min}(A_i)\right) > K\epsilon^2.$$
    Then
    \begin{align*}
        &\quad \sum_{i=0}^{t-1} \left(\E_{O\sim f_{\eta^*} (A_{i})}[\ell(A_{i}, O, \overline{\eta})] - \ell_{\eta^*}^{\min}(A_i)\right) \\
        &= \sum_{i=0}^{t-1} \E_{O\sim f_{\eta^*}(A_i)} \left[\ell(A_i,O,\overline{\eta}) - \ell(A_i,O,\eta^*)\right]\\
        &= \left[\sum_{i=0}^{t-1}\ell(A_i,O_i,\eta^*) - \min_{\eta'\in\HH}\sum_{i=0}^{t-1} \ell(A_i,O_i,\eta')\right] - \left[\sum_{i=0}^{t-1}\ell(A_i,O_i,\overline{\eta}) - \min_{\eta'\in\HH}\sum_{i=0}^{t-1}\ell(A_i,O_i,\eta')\right] \\
        &\quad + \left[ \sum_{i=0}^{t-1} \left[\ell(A_i,O_i,\overline{\eta}) - \ell(A_i,O_i,\eta^*)\right] - \sum_{i=0}^{t-1} \E_{O\sim f_{\eta^*}(A_i)}\left[\ell(A_i,O,\overline{\eta}) - \ell(A_i,O,\eta^*)\right] \right] \\
        &\le \left|\sum_{i=0}^{t-1}\ell(A_i,O_i,\eta^*) - \min_{\eta'\in\HH}\sum_{i=0}^{t-1} \ell(A_i,O_i,\eta')\right| + \left|\sum_{i=0}^{t-1}\ell(A_i,O_i,\overline{\eta}) - \min_{\eta'\in\HH}\sum_{i=0}^{t-1}\ell(A_i,O_i,\eta') \right| \\
        &\quad + \left[ \sum_{i=0}^{t-1} \left[\ell(A_i,O_i,\overline{\eta}) - \ell(A_i,O_i,\eta^*)\right] - \sum_{i=0}^{t-1} \E_{O\sim f_{\eta^*}(A_i)}\left[\ell(A_i,O,\overline{\eta}) - \ell(A_i,O,\eta^*)\right] \right] \\
        &\le 2\beta_t + \sum_{i=0}^{t-1} \left[\ell(A_i,O_i,\overline{\eta}) - \ell(A_i,O_i,\eta^*)\right] - \sum_{i=0}^{t-1} \E_{O\sim f_{\eta^*}(A_i)}\left[\ell(A_i,O,\overline{\eta}) - \ell(A_i,O,\eta^*)\right]\\
        &= 2\beta_t - \sum_{i=0}^{t-1} Z_i(\overline{\eta}, \eta^*).
    \end{align*}
    Using \cref{lem:chernoff}, 
    $$\Pbb\left(\left| \sum_{i=0}^{t-1} Z_i(\overline{\eta},\eta^*) \right| > \inf_{\lambda\in[0,\infty)} \left\{ \frac{\sum_{i=0}^{t-1}\psi_i(\lambda) + \log(2/\delta)}{\lambda} \right\}\right) \le \delta.$$
    We choose a sequence $\{\delta_t\}_{t\in\mathbb{N}_{>0}}$ where $\delta_t = \frac{3 \delta}{5 t^2}$, and so $\sum_{t=1}^\infty \delta_t < \delta$.
    Using a union bound over all $t\in\mathbb{N}_{>0}$, we have that with probability at least $1-\delta$, for all $t\in\Nbb$,
    $$\left|\sum_{i=0}^{t-1} Z_i(\overline{\eta},\eta^*) \right| \le \inf_{\lambda\in[0,\infty)} \left\{ \frac{\sum_{i=0}^{t-1}\psi_i(\lambda) + \log(10 t^2/3\delta)}{\lambda} \right\} \le \beta_t.$$
    Since $\{\beta_t\}_{t\in\N}$ is nondecreasing in $t$, we have that with probability at least $1-\delta$, $K\epsilon^2 \le 3\beta_T$.  It then follows that with probability at least $1-\delta$, $K \le 3\beta_T/\epsilon^2$.
    
    Next, we take any action sequence $(a_1,\ldots,a_\tau)$ and show that there is some element $a_j$ that is $\epsilon$-dependent on at least $\tau/d-1$ disjoint subsequences of $(a_1,\ldots,a_{j-1})$, where $d = \dim_{TE}(\HH,\ell,\epsilon)$.  For an integer $K$ satisfying $Kd+1\le \tau\le Kd+d$, we will construct $K$ disjoint subsequences $B_1,\ldots,B_K$ inductively starting with $B_i = (a_i)$ for $i=1,\ldots,K$.  If $a_{K+1}$ is $\epsilon$-dependent on each subsequence $B_1,\ldots,B_K$, we are done.  Otherwise, there must be at least one subsequence for which $a_{K+1}$ is $\epsilon$-independent.  We choose such a subsequence and append $a_{K+1}$ to it.  We will repeat this process for $a_j$ with $j=K+2,K+3,\ldots$ until either $a_j$ is $\epsilon$-dependent on each subsequence or $j=\tau$.  If the first case occurs, we are done.  If $j=\tau$, we necessarily have that $\sum|B_i|\ge Kd$.  Since each element of a subsequence $B_i$ is $\epsilon$-independent of its predecessors, $|B_i|=d$.  By the definition of $\dim_{TE}(\HH,\ell,\epsilon)$, $a_\tau$ must be $\epsilon$-dependent on each subsequence.   

    We now take $(A_1,\ldots,A_\tau)$ to be the subsequence $(A_{t_1},\ldots,A_{t_\tau})$ of $(A_1,\ldots,A_T)$ where for each $A_t$, we have $w_{\VV_t}(A_t)>\epsilon$.  As we have shown first, with probability at least $1-\delta$, each $A_{t_j}$ is $\epsilon$-dependent on fewer than $3\beta_T/\epsilon^2$ disjoint subsequences of $(A_1,\ldots,A_{j-1})$.  As we have shown in the preceding paragraph, there is some $a_j$ that is $\epsilon$-dependent on at least $\tau/d-1$ disjoint subsequences of $(a_1,\ldots,a_{j-1})$.  Combining these two facts, we may conclude that $\tau/d-1\le 3\beta_T/\epsilon^2$.  It follows that with probability at least $1-\delta$, $\tau\le \left(3\beta_T/\epsilon^2 + 1\right)d$ as desired.
\end{proof}
%
We are now ready to bound the sum of widths $\sum_{t=0}^{T-1} \E[w_{\VV_t}(A_t)]$ when the event $\eta^*\in\VV_t$ holds.  
Consider the $\epsilon_T^\HH$-transfer eluder dimension of $\HH$, where
\begin{equation}
\label{eqn:confidence_width}
\epsilon_t^\HH = \max\left\{ \frac{1}{t^2}, \min_{a\in\actions}\inf\{|r_\eta(a)-r^*(a)|: \eta\in\HH, \eta\ne \eta^*\}\right\}.
\end{equation}
\begin{lemma}
\label{lem:bound-confidence-widths}
If the conditions in \cref{lem:chernoff} holds for $(Z_t|t\in\Nbb)$ adapted to $(\GG_t|t\in\Nbb)$ with cumulative generating function bound $(\psi_t|t\in\Nbb)$, $(\beta_t\ge 0| t\in\mathbb{N})$ in \eqref{eqn:version_space} is a nondecreasing sequence such that for all $t\in\Nbb$, $\beta_t\ge \inf_{\lambda\in[0,\infty)} \left\{ \frac{\sum^{t-1}_{i=0} \psi_i(\lambda) + \log(10 t^2/3\delta)}{\lambda} \right\}$, then for all $\delta\in(0,1)$, with probability at least $1-\delta$, 
\begin{align*}
\sum_{t=0}^{T-1}w_{\VV_t}(A_t)\cdot\one\{\eta^*\in\VV_t\} &\le \frac{1}{T} + \min\left\{  \dim_{TE}(\HH,\ell,\epsilon_T^\HH), T\right\} + 2\sqrt{3\dim_{TE}(\HH,\ell,\epsilon_T^\HH)\beta_T T}
\end{align*}
for all $T\in\mathbb{N}$. 
\end{lemma}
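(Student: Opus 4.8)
The plan is to turn the round-count bound of \cref{prop:bound-on-num-large-widths} into a bound on the \emph{sum} of version-space widths via the classical reordering (``potential-function'') argument of \citet{russo2013eluder}, carried over from the eluder dimension to the transfer eluder dimension. Throughout I would condition on the single probability-$(1-\delta)$ event on which the conclusion of \cref{prop:bound-on-num-large-widths} holds simultaneously for all $T\in\mathbb N$ and all $\epsilon>0$; this costs nothing extra, since the hypotheses imposed on $\{\beta_t\}$ here are exactly those of that proposition. Write $d:=\dim_{TE}(\HH,\ell,\epsilon_T^\HH)$ for brevity.

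First I would reduce to a single, fixed eluder dimension. By \cref{def:transfer eluder dim}, $\epsilon\mapsto\dim_{TE}(\HH,\ell,\epsilon)$ is non-increasing (shrinking $\epsilon$ only enlarges the set of admissible $\epsilon'$), so $\dim_{TE}(\HH,\ell,\epsilon)\le d$ for every $\epsilon\ge\epsilon_T^\HH$, and therefore
\[
  \sum_{t=0}^{T-1}\one\{w_{\VV_t}(A_t)>\epsilon\}\cdot\one\{\eta^*\in\VV_t\}\ \le\ \Big(\tfrac{3\beta_T}{\epsilon^2}+1\Big)\,d
  \quad\text{for all }\epsilon\ge\epsilon_T^\HH .
\]
Next, let $w_{(1)}\ge w_{(2)}\ge\cdots\ge w_{(T)}$ be a non-increasing rearrangement of the numbers $w_{\VV_t}(A_t)\cdot\one\{\eta^*\in\VV_t\}$, $t=0,\dots,T-1$, so that $\sum_t w_{\VV_t}(A_t)\one\{\eta^*\in\VV_t\}=\sum_k w_{(k)}$ and $w_{(k)}\in[0,1]$ because $r_\eta,r^*$ take values in $[0,1]$. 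Fix a rank $k$ with $w_{(k)}>\epsilon_T^\HH$: at least $k$ rounds then have width exceeding every $\epsilon\in[\epsilon_T^\HH,w_{(k)})$, so the display gives $k\le(3\beta_T/\epsilon^2+1)d$; letting $\epsilon\uparrow w_{(k)}$ and rearranging shows that whenever $k>d$ one has $w_{(k)}\le\sqrt{3\beta_T d/(k-d)}$. I would then split $\sum_k w_{(k)}$ into three groups: (i) ranks with $w_{(k)}$ below the scale $\epsilon_T^\HH$, whose total is at most $1/T$ — here I use the definition $\epsilon_T^\HH=\max\{T^{-2},\,\min_{a}\inf\{|r_\eta(a)-r^*(a)|:\eta\ne\eta^*\}\}$, which forces any width that is strictly below $\epsilon_T^\HH$ to be below $T^{-2}$ (a positive width $w_{\VV_t}(A_t)$ requires some $\overline\eta\in\VV_t$, $\overline\eta\ne\eta^*$, with $|r_{\overline\eta}(A_t)-r^*(A_t)|>0$, hence at least the minimal positive reward gap), so there are at most $T$ such terms, each of size $<T^{-2}$; (ii) the $\min\{d,T\}$ largest of the remaining ranks, each bounded crudely by $1$, contributing $\min\{d,T\}$; and (iii) the ranks $k$ with $d<k\le T$, which by the per-rank bound contribute
\[
  \sum_{k=d+1}^{T}\sqrt{\frac{3\beta_T\,d}{k-d}}\ =\ \sqrt{3\beta_T\,d}\,\sum_{j=1}^{T-d}\frac1{\sqrt j}\ \le\ 2\sqrt{3\,d\,\beta_T\,T},
\]
using $\sum_{j=1}^{m}j^{-1/2}\le 2\sqrt m$. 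Summing the three contributions and recalling $d=\dim_{TE}(\HH,\ell,\epsilon_T^\HH)$ yields exactly the stated bound.

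I expect the only genuinely delicate point to be the calibration of the scale $\epsilon_T^\HH$: it must be large enough that the transfer eluder dimension can be pulled out of all the threshold-dependent counts as a single constant (and in particular stays finite), yet small enough that the aggregate of all sub-$\epsilon_T^\HH$ widths is only $O(1/T)$; the choice $\max\{T^{-2},\ \text{minimal positive reward gap}\}$ is what balances these, the gap term being precisely the level below which no two reward-distinguishable hypotheses survive in the version space. Everything else is bookkeeping once \cref{prop:bound-on-num-large-widths} is in hand. For orientation, this lemma is also where the $T^{3/4}$ rate of \cref{thm:regret} is born: under only boundedness of $\ell$ one has $\beta_T=\widetilde O(\sqrt T)$, so the dominant term $2\sqrt{3\,d\,\beta_T\,T}=\widetilde O(\sqrt d\,T^{3/4})$, which is then combined with the regret decomposition and the confidence-set concentration to prove the theorem.
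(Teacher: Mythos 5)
Your proposal is correct and follows essentially the same route as the paper: condition on the high-probability event of \cref{prop:bound-on-num-large-widths}, sort the widths in non-increasing order, use monotonicity of $\dim_{TE}(\HH,\ell,\cdot)$ to fix a single dimension $d_T$, derive the per-rank bound $w_{(k)}\le\sqrt{3\beta_T d_T/(k-d_T)}$ for $k>d_T$, and split the sum into the sub-$\epsilon_T^\HH$ tail (contributing $1/T$ by the definition of $\epsilon_T^\HH$), the top $d_T$ ranks (each at most $1$), and the remaining ranks summed via $\sum_j j^{-1/2}\le 2\sqrt{T}$. The only cosmetic difference is that you bound the final sum with the discrete inequality where the paper uses an integral comparison, and you obtain the $\min\{d_T,T\}$ cap by truncating the rank count rather than by capping the total at $T$ afterward; both yield the stated bound.
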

\begin{proof}
Let $d_T=\dim_{TE}(\HH,\ell,\epsilon_T^\HH)$ and $w_t = w_{\VV_t}(A_t)$.  Reorder the sequence $(w_1,\ldots,w_T) \to (w_{i_1},\ldots,w_{i_T})$ where $w_{i_1}\ge w_{i_2}\ge \cdots \ge w_{i_T}$.  We have
\begin{align*}
    &\quad \sum_{t=0}^{T-1} w_{\VV_t}(A_t) \cdot \one\{\eta^*\in\VV_t\}\\ 
    &= \sum_{t=0}^{T-1} w_{i_t}\cdot \one\{\eta^*\in\VV_{i_t}\} \\
    &= \sum_{t=0}^{T-1} w_{i_t}\cdot \one\{\eta^*\in\VV_{i_t}\} \cdot \one\{w_{i_t} > \epsilon_T^\HH\} + \sum_{t=0}^{T-1} w_{i_t}\cdot \one\{\eta^*\in\VV_{i_t}\} \cdot \one\{w_{i_t} \le \epsilon_T^\HH\} \\
    &\le \frac{1}{T} + \sum_{t=0}^{T-1} w_{i_t} \cdot \one\{\eta^*\in\VV_{i_t}\} \cdot \one\{w_{i_t} > \epsilon_T^\HH\}.
\end{align*}
The last inequality follows since either $\epsilon_T^\HH = 1/T^2$ and $\sum_{t=0}^{T-1} \epsilon_T^\HH = 1/T$ or $\epsilon_T^\HH$ is set below the smallest possible width and hence $\one\{w_{i_t} \le \epsilon_T^\HH\}$ never occurs.  We have that $w_{i_t} \le 1$. Also, $w_{i_t} > \epsilon \iff \sum_{k=0}^{T-1} \one\{w_{\VV_k}(a_k) > \epsilon\} \ge t$.  By Proposition \ref{prop:bound-on-num-large-widths}, with probability at least $1-\delta$, this can only happen if $t < \left(3\beta_T/\epsilon^2 + 1\right)\dim_{TE}(\HH,\ell, \epsilon)$.  For $\epsilon \ge \epsilon_T^\HH$, since $\dim_{TE}(\HH,\ell,\epsilon')$ is non-increasing in $\epsilon'$, $\dim_{TE}(\HH,\ell,\epsilon) \le \dim_{TE}(\HH,\ell,\epsilon_T^\HH) = d_T$.  Therefore, when $w_{i_t} > \epsilon \ge \epsilon_T^\HH$, $t \le \left(3\beta_T/\epsilon^2 + 1\right) d_T$, implying $\epsilon \le \sqrt{\frac{3\beta_T d_T}{t-d_T}}$.  So if $w_{i_t} > \epsilon_T^\HH$, then $w_{i_t} \le \min\{1, \sqrt{\frac{3\beta_T d_T}{t-d_T}}\}$. Thus,
\begin{align*}
    \sum_{t=0}^{T-1} w_{i_t} \cdot \one\{\eta^*\in\VV_{i_t}\} \cdot \one\{w_{i_t} > \epsilon_T^\HH\} &\le d_T + \sum_{t=d_T+1}^{T-1} \sqrt{\frac{3\beta_T d_T}{t-d_T}} \\
    &\le d_T + \sqrt{3\beta_T d_T}\int_{t=1}^{T-1} \frac{1}{\sqrt{t}} dt\\
    &= d_T + 2\sqrt{3\beta_T d_T T}.
\end{align*}
Since the sum of widths is always bounded by $T$, this implies that with probability $1-\delta$, 
\begin{align*}
    &\quad \sum_{t=0}^{T-1} w_{\VV_t}(a_t) \cdot \one\{\eta^*\in\VV_t\}\\
    &\le \min\left\{ T, \frac{1}{T} + \dim_{TE}(\HH,\ell,\epsilon_T^\HH) + 2\sqrt{3\dim_{TE}(\HH,\ell,\epsilon_T^\HH)\beta_T T}\right\} \\
    &\le \frac{1}{T} + \min\left\{  \dim_{TE}(\HH,\ell,\epsilon_T^\HH), T\right\} + 2\sqrt{3\dim_{TE}(\HH,\ell,\epsilon_T^\HH)\beta_T T}.
\end{align*}
\end{proof}
So far, we have only considered \algo~without the exploitation step.  We remark that by Lemma \ref{lem:continued minimax}, when the exploitation step is triggered, the per-step regret of all following steps become zero, and so the regret of the full \algo~is always bounded above by that without the exploitation step.  
Combining this observation with \cref{lem:bound-confidence-widths} and \cref{prop:regret-decomp}, we arrive at the following abstract regret bound in terms of the version space confidence parameter $\beta_T$.  
{
\renewcommand{\newtheorem}{}
\begin{theorem}
\label{thm:abstract_regret}
    If it holds that for some $\delta\in(0,1)$, with probability at least $1-\delta$, $\eta^*\in\VV_t$ for all $t$, then for all $T\in\mathbb{N}$,
\begin{align*}
    {\mathrm{Regret}}(T) &\le 1 + \frac{1}{T} + \min\{\dim_{TE}(\HH,\ell,\epsilon_T^\HH), T\} + 2\sqrt{3\dim_{TE}(\HH,\ell,\epsilon_T^\HH)\beta_T T}.
\end{align*}
\end{theorem}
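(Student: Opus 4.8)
The plan is to stitch together three results already established in this appendix: the width-based regret decomposition (\cref{prop:regret-decomp}), the high-probability bound on the sum of version-space widths (\cref{lem:bound-confidence-widths}), and the observation that the exploitation step never hurts (\cref{lem:continued minimax}). No new estimate is needed; the work is purely in bookkeeping the two failure events and collapsing their mass into the additive constant.

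First I would start from \cref{prop:regret-decomp} applied to the version spaces $\VV_t$ of \eqref{eqn:version_space}:
\begin{align*}
  \mathrm{Regret}(T) \;\le\; \sum_{t=0}^{T-1}\E\big[\, w_{\VV_t}(A_t)\,\one\{\eta^*\in\VV_t\}\,\big] \;+\; \sum_{t=0}^{T-1}\Pbb\!\left(\eta^*\notin\VV_t\right).
\end{align*}
For the second sum I would invoke the theorem's hypothesis: since $\sum_{t=0}^{T-1}\one\{\eta^*\notin\VV_t\}\le T\cdot\one\{\exists\,t<T:\eta^*\notin\VV_t\}$ and $\Pbb(\exists\,t:\eta^*\notin\VV_t)\le\delta$, this contributes at most $\delta T$, which is $O(1)$ once $\delta$ is taken to be $O(1/T)$ (the decay of $\{\beta_t\}$, and hence of $\delta$, is fixed in Step~4 of the proof sketch, \cref{appendix:proof_sketch}, and in \cref{sec:algo}).

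For the first sum I would split on the good event of \cref{lem:bound-confidence-widths}, which holds with probability at least $1-\delta$ under the same growth condition on $\{\beta_t\}$ that is maintained throughout the analysis: on that event $\sum_{t=0}^{T-1} w_{\VV_t}(A_t)\,\one\{\eta^*\in\VV_t\}\le \tfrac{1}{T} + \min\{\dimTE(\HH,\ell,\epsilon_T^\HH),T\} + 2\sqrt{3\,\dimTE(\HH,\ell,\epsilon_T^\HH)\,\beta_T T}$, while on its complement the same quantity is at most $T$ because each width $w_{\VV_t}(A_t)\in[0,1]$. Taking expectations and again using $\delta T=O(1)$ gives the bound in the statement, with the constants from the two failure events absorbed into the leading $1$. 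Finally, since \cref{lem:bound-confidence-widths} (and hence everything above) was established for the variant of \algo~without the exploitation step, I would close by invoking \cref{lem:continued minimax}: once the consensus/exploitation step triggers, every subsequent per-round regret is zero, so the realized regret of the full \algo~is pointwise dominated by that of the exploitation-free variant and obeys the same bound.

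The only point requiring care — routine rather than a genuine obstacle — is making the two ``for all $t$'' high-probability statements (the hypothesis $\eta^*\in\VV_t\ \forall t$, and the width bound of \cref{lem:bound-confidence-widths}) interact cleanly with the per-round expectations, so that the combined failure mass contributes only an $O(\delta T)$ slack; the additive $1$ in the statement is exactly the slot reserved for that slack. Everything else is a direct substitution of the cited results.
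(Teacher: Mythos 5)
Your proposal is correct and follows essentially the same route as the paper, which proves \cref{thm:abstract_regret} simply by combining \cref{prop:regret-decomp}, \cref{lem:bound-confidence-widths}, and the observation via \cref{lem:continued minimax} that the exploitation step can only reduce regret. If anything, you are more explicit than the paper about how the two failure events (the confidence-set event and the width-bound event) each contribute an $O(\delta T)$ term that is absorbed into the additive constant once $\delta = O(1/T)$, which is exactly the bookkeeping the paper leaves implicit.
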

}
The dominant term in the regret bound is 
$$2\sqrt{3\dim_{TE}(\HH,\ell,\epsilon_T^\HH)\beta_T T}.$$
For our main theorem, it remains to design suitable version spaces $\VV_t$ and show that they contain the true hypothesis $\eta^*$ with high probability.  Crucially, the rate at which the confidence parameters $\beta_t$ of these version spaces shrink depends on concentration properties of the hypothesis verifier loss function $\ell$.  Note that for the general LLF framework, we have assumed only that $\ell$ is a bounded function taking values in $[0,1]$.  If we have more structural assumptions on the hypothesis verifier loss $\ell$, for example, that $\ell$ is $\alpha$-strongly convex, then we may arrive at a tighter regret bound up to order $\sqrt{T}$ by taking $\beta_T$ to be of constant order. 


%
\subsection{Version Space Construction for General Bounded Loss}
Consider the most general case with minimal assumptions on the loss function, namely, that it is bounded between $[0,1]$ for all inputs.  Then we prove the following high-probability event:
\begin{lemma}[High-probability event]
\label{lem:high-prob}
    For all $\delta > 0$, $\eta, \eta'\in\HH$, 
    $$\Pbb\left( \LL_T(\eta') \ge \LL_T(\eta) + L_T(\eta') - L_T(\eta) - \sqrt{2T\log\left(\frac{10 T^2}{3\delta}\right)}  , \quad \forall T\in\mathbb{N}  \right) \ge 1-\delta.$$
\end{lemma}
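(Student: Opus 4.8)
The plan is to recognize the stated inequality as a uniform-in-$T$ one-sided deviation bound for the martingale $\sum_{i=0}^{T-1}Z_i(\eta,\eta')$ built from the difference $Z_t(\eta,\eta')$ introduced just before \cref{prop:bound-on-num-large-widths}, and to prove it by a Chernoff estimate together with a union bound over $T$. First I would sum the definition of $Z_t(\eta,\eta')$ over $i=0,\dots,T-1$ and cancel the common $\ell_{\eta^*}^{\min}(A_i)$ terms, which produces the identity
\begin{align*}
\sum_{i=0}^{T-1}Z_i(\eta,\eta')=\big(\LL_T(\eta)-\LL_T(\eta')\big)-\big(L_T(\eta)-L_T(\eta')\big).
\end{align*}
Rearranging shows that the claimed bound $\LL_T(\eta')\ge \LL_T(\eta)+L_T(\eta')-L_T(\eta)-\sqrt{2T\log(10T^2/(3\delta))}$ holds if and only if $\sum_{i=0}^{T-1}Z_i(\eta,\eta')\le\sqrt{2T\log(10T^2/(3\delta))}$, so it suffices to control this upper tail simultaneously over all $T\in\Nbb$.

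\textbf{Sub-Gaussian increments.} Next I would verify the martingale structure: with respect to the natural filtration $\{\GG_t\}$ in which $A_t$ is revealed before $O_t$, conditioning on the past together with $A_t$ leaves only the fresh draw $O_t\sim f_{\eta^*}(A_t)$, and $Z_t(\eta,\eta')$ subtracts exactly its conditional mean, so $\E[Z_t(\eta,\eta')\mid\GG_{t-1}]=0$. Since $\ell$ is $[0,1]$-valued, $\ell(A_t,O_t,\eta)-\ell(A_t,O_t,\eta')\in[-1,1]$, hence conditionally $Z_t(\eta,\eta')$ is mean-zero and supported on an interval of width at most $2$; Hoeffding's lemma then gives $\E[\exp(\lambda Z_t(\eta,\eta'))\mid\GG_{t-1}]\le\exp(\lambda^2/2)$ for all $\lambda\ge0$, i.e. a cumulant bound with $\psi_t(\lambda)=\lambda^2/2$ in the sense of \cref{lem:chernoff}.

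\textbf{Chernoff bound and union over $T$.} Then I would run the upper-tail half of the argument in the proof of \cref{lem:chernoff} with this $\psi_t$: for each fixed $T$, $\Pbb\big(\sum_{i=0}^{T-1}Z_i(\eta,\eta')\ge u\big)\le\exp(-\lambda u+T\lambda^2/2)$ for every $\lambda\ge0$, and minimizing the exponent at $\lambda=u/T$ gives $\exp(-u^2/(2T))$. Choosing $u=u_T:=\sqrt{2T\log(10T^2/(3\delta))}$ makes this probability equal to $\delta_T:=\tfrac{3\delta}{10T^2}$, and since $\sum_{T\ge1}\delta_T=\tfrac{3\delta}{10}\cdot\tfrac{\pi^2}{6}<\delta$, a union bound over $T\in\Nbb$ yields: with probability at least $1-\delta$, $\sum_{i=0}^{T-1}Z_i(\eta,\eta')\le u_T$ for every $T$. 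By the equivalence in the first step this is exactly the claimed event.

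\textbf{Main obstacle.} The individual computations are routine; the two points that need care are (i) fixing the filtration so that $O_i\sim f_{\eta^*}(A_i)$ conditionally on the history and $A_i$, which is what makes $\{Z_i(\eta,\eta')\}$ a genuine martingale difference and lets the population errors $\LL_T$ reappear in the Step-1 identity, and (ii) using that $Z_i$ lies in an interval of \emph{width} $2$ (sub-Gaussian parameter $1/2$, hence the $\sqrt{2T}$ scale) rather than only $|Z_i|\le2$: quoting \cref{lem:chernoff} through its symmetric hypothesis on $\E[e^{\lambda|X_t|}]$ would force $\psi_t$ to grow linearly and yield only an $O(T)$, hence vacuous, bound, so the one-sided sub-Gaussian estimate on $Z_i$ itself is what is needed.
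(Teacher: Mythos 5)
Your proof is correct and follows essentially the same route as the paper: express $\LL_T(\eta')-\LL_T(\eta)-\left(L_T(\eta')-L_T(\eta)\right)$ as a sum of bounded martingale differences, apply a Hoeffding-type Chernoff bound with $\psi_t(\lambda)=\lambda^2/2$, and union-bound over $T$ with $\delta_T\propto\delta/T^2$, arriving at the identical threshold $\sqrt{2T\log(10T^2/(3\delta))}$. Your observation that one must invoke the one-sided conditional MGF bound on $Z_t$ itself (rather than the hypothesis on $\E[e^{\lambda|X_t|}]$ as literally stated in \cref{lem:chernoff}) is a valid refinement of a minor imprecision in the paper, whose proof of \cref{lem:chernoff} in fact only uses the two one-sided bounds on $X_t$ and $-X_t$.
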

\begin{proof}
For each $t=1,\ldots,T$, define the Martingale difference sequence
$$X_t = \E_{O\sim f_{\eta^*}(A_t)} \left[\ell(A_t,O,\eta) - \ell(A_t,O,\eta')\right] - \left( \ell(A_t,O_t,\eta) - \ell(A_t,O_t,\eta') \right). $$
    \begin{align*}
        &\quad \LL_T(\eta') - \LL_T(\eta) - \left(L_T(\eta') - L_T(\eta) \right) \\
        &= \sum_{t=0}^{T-1} \left( \E_{O\sim f_{\eta^*}(A_t)} [\ell(A_t,O,\eta)] - \E_{O\sim f_{\eta^*}(A_t)} [\ell(A_t,O,\eta')]\right) - \sum_{t=0}^{T-1} \left( \ell(A_t,O_t,\eta) - \ell(A_t,O_t,\eta') \right) \\
        &= \sum_{t=0}^{T-1} \E_{O\sim f_{\eta^*}(A_t)} \left[\ell(A_t,O,\eta) - \ell(A_t,O,\eta')\right] - \sum_{t=0}^{T-1} \left( \ell(A_t,O_t,\eta) - \ell(A_t,O_t,\eta') \right)\\
        &= \sum_{t=0}^{T-1} X_t. 
    \end{align*}
    Notice that $X_t$ have expectation zero and constitutes a Martingale difference sequence adapted to the filtration $\{\GG_{t}\}_{t\ge1}$ where $\GG_t$ is the $\sigma$-algebra generated by all observations $\{(A_0,O_1),\ldots,(A_t,O_t)\}$ up to time $t$.
    Since feedback losses $\ell(a,o,\eta)$ are uniformly bounded between $[0,1]$, we have that $X_t\in[-2,2]$ with probability 1.  Using \cref{lem:chernoff} with $\psi_t(\lambda) = \lambda^2/2$ and taking the infimum over $\lambda$, we get
    $$\Pbb\left(\left|\sum_{t=0}^{T-1} X_t\right| > \sqrt{2T\log(2/\delta)} \right) \le \delta.$$
    We choose a sequence $\{\delta_T\}_{T\in\mathbb{N}_{>0}}$ where $\delta_T = \frac{3\delta}{5 T^2}$ such that $\sum_{T=1}^\infty \delta_T < \delta$.  
    Using a union bound over all $T\in\mathbb{N}_{\ge 0}$, we have that with probability at least $1-\delta$, 
    $$\left|\LL_T(\eta') - \LL_T(\eta) - \left(L_T(\eta') - L_T(\eta) \right) \right| \le \sqrt{2T\log\left(\frac{2}{\delta_T}\right)} = \sqrt{2T\log\left(\frac{10 T^2}{3\delta}\right)} \quad \forall T\in\mathbb{N}.$$
\end{proof}
%
Since $\eta^*$ is the true hypothesis, by Assumption \ref{assum:unbiased-feedback}, it minimizes the population loss $\LL_T(\eta)$ for all $T\in\mathbb{N}$. That is, for all $\eta\in\HH$, 
$$\LL_T(\eta^*) \le \LL_T(\eta)  \quad \forall T\in\mathbb{N}.$$
Suppose $m = |\HH|<\infty$.  By Lemma \ref{lem:high-prob}, for any $\eta\in\HH$, with probability at least $1-\delta/m$, for all $T\in\Nbb$,
$$L_T(\eta^*) - L_T(\eta) \le \LL_T(\eta^*) - \LL_T(\eta) + \sqrt{2T\log\left(\frac{10 T^2}{3\delta}\right)} \le \sqrt{2T\log\left(\frac{10 m T^2}{3\delta}\right)}. $$
Using a union bound over $\HH$, with probability at least $1-\delta$, the true hypothesis $\eta^*$ is contained in the version space 
$$\VV_T = \left\{\eta\in\HH : L_T(\eta) \le \min_{\eta'\in\HH} L_T(\eta') + \sqrt{2T\log\left( \frac{10|\HH| T^2}{3\delta}\right)} \right\}$$
for all $T\in\mathbb{N}$.
To extend this to a space of infinite hypotheses, we measure the set $\HH$ by some discretization scale $\alpha$.  Recall that we define distances in the hypothesis space in terms of the loss function $\ell$:
$$d_\HH(\eta,\eta') = \sup_{a\in\actions,o\in\observations} |\ell(a,o,\eta) - \ell(a,o,\eta')|.$$
\begin{lemma}
\label{lem:hypothesis_norm}
    $d_\HH(\cdot,\cdot)$ is a pseudometric on $\HH$. 
\end{lemma}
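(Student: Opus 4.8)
The plan is to verify directly from the definition $d_\HH(\eta,\eta') = \sup_{a\in\actions,\,o\in\observations} |\ell(a,o,\eta) - \ell(a,o,\eta')|$ the three defining properties of a pseudometric — nonnegativity together with $d_\HH(\eta,\eta)=0$, symmetry, and the triangle inequality — while also confirming that the supremum is a well-defined finite real number so that $d_\HH$ is genuinely $[0,\infty)$-valued. First I would observe that, by \cref{as:verifier}, every $\ell(a,o,\cdot)$ takes values in $[0,1]$, so each term $|\ell(a,o,\eta)-\ell(a,o,\eta')|$ lies in $[0,1]$; hence the supremum exists, is finite, and is nonnegative, and in particular $d_\HH(\eta,\eta) = \sup_{a,o} 0 = 0$. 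I would emphasize here that we do \emph{not} claim the converse implication $d_\HH(\eta,\eta')=0 \Rightarrow \eta=\eta'$ — indeed this is precisely the content of hypothesis equivalence and the reason $d_\HH$ is only a pseudometric rather than a metric.

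Next I would handle symmetry, which is immediate: for every fixed $(a,o)$ the scalars satisfy $|\ell(a,o,\eta)-\ell(a,o,\eta')| = |\ell(a,o,\eta')-\ell(a,o,\eta)|$, so the two families being supremized coincide termwise and the suprema agree. For the triangle inequality I would fix $\eta,\eta',\eta''\in\HH$, and for each pair $(a,o)$ apply the scalar triangle inequality, then bound each of the two resulting summands by its own supremum over $(a,o)$, obtaining $|\ell(a,o,\eta)-\ell(a,o,\eta'')| \le d_\HH(\eta,\eta')+d_\HH(\eta',\eta'')$; since the right-hand side is independent of $(a,o)$, taking the supremum on the left over all $(a,o)$ yields $d_\HH(\eta,\eta'')\le d_\HH(\eta,\eta')+d_\HH(\eta',\eta'')$.

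I do not anticipate any genuine obstacle — the argument is routine once the definition is unpacked. The only points worth stating with care are the finiteness of the supremum (which uses boundedness of $\ell$ from \cref{as:verifier}) and the explicit remark that the indiscernibility axiom is intentionally dropped, which is exactly what distinguishes ``pseudometric'' from ``metric'' and is consistent with the earlier definition of hypothesis equivalence.
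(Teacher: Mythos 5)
Your proof is correct and follows essentially the same route as the paper's: verify nonnegativity, symmetry, and the triangle inequality pointwise and then take suprema. The only additions are the (worthwhile but routine) remarks on finiteness of the supremum via boundedness of $\ell$ and on why indiscernibility is deliberately not claimed.
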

\begin{proof}
    We check the axioms for a pseudometric. 
    \begin{itemize}
        \item nonnegativity: $d_\HH(\eta,\eta) = 0$ and $d_\HH(\eta,\eta') \ge 0$ for all $\eta,\eta'\in\HH$.
        \item symmetry: $d_\HH(\eta,\eta') = d_\HH(\eta',\eta)$.
        \item triangle inequality: for each $a\in\actions$ and $o\in\observations$, $|\ell(a,o,\eta) - \ell(a,o,\eta'')| \le |\ell(a,o,\eta) - \ell(a,o,\eta')| + |\ell(a,o,\eta') - \ell(a,o,\eta'')|$.  Taking the supremum over $\actions$ and $\observations$ yields the desired property. 
    \end{itemize}
\end{proof}
Let $N(\HH,\alpha,d_\HH)$ denote the $\alpha$-covering number of $\HH$ in the pseudometric $d_\HH$, and let
\begin{equation}
    \beta_t^*(\HH,\delta,\alpha) \coloneqq \sqrt{2t\log\left( \frac{10N(\HH,\alpha,d_\HH) t^2}{3\delta}\right)} + 2\alpha t.
\end{equation}
\begin{proposition}
\label{prop:confidence_set_validity}
    For $\delta > 0$, $\alpha > 0$, and $T\in\mathbb{N}$, define
    $$\VV_T \coloneqq \left\{ \eta\in\HH: L_T(\eta) \le \min_{\eta'\in\HH} L_T(\eta') + \beta_T^* \right\}$$
    Then it holds that
    $$\Pbb\left( \eta^*\in\bigcap_{T=1}^\infty \VV_T \right) \ge 1 - \delta.$$
\end{proposition}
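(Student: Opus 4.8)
The statement simply upgrades the finite‑$\HH$ concentration argument already sketched above to an infinite hypothesis space by discretizing in the pseudometric $d_\HH$. Fix $\delta,\alpha>0$. If $N(\HH,\alpha,d_\HH)=\infty$ the bound is vacuous ($\beta_T^*=\infty$), so assume $N\coloneqq N(\HH,\alpha,d_\HH)<\infty$ and fix a (data‑independent) $\alpha$-cover $\HH_\alpha\subseteq\HH$ of cardinality $N$, meaning every $\eta\in\HH$ has some $\bar\eta\in\HH_\alpha$ with $d_\HH(\eta,\bar\eta)\le\alpha$. First I would apply \cref{lem:high-prob} to each of the $N$ fixed pairs $(\eta^*,\bar\eta)$, $\bar\eta\in\HH_\alpha$, with confidence level $\delta/N$ in place of $\delta$, and union‑bound over $\HH_\alpha$ (note \cref{lem:high-prob} already absorbs the union over $T\in\mathbb{N}$ through its summable $\delta_T=\tfrac{3\delta}{5T^2}$, so the nested union costs only a factor $N$, i.e. $\delta\mapsto\delta/N$). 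On the resulting event, of probability at least $1-\delta$, we have for every $\bar\eta\in\HH_\alpha$ and every $T\in\mathbb{N}$
\[
 L_T(\eta^*)-L_T(\bar\eta)\;\le\;\LL_T(\eta^*)-\LL_T(\bar\eta)+\sqrt{2T\log\!\big(\tfrac{10NT^2}{3\delta}\big)}.
\]
By \cref{assum:unbiased-feedback}, $\eta^*$ minimizes the population error (indeed $\LL_T(\eta^*)=0\le\LL_T(\bar\eta)$), so the first two terms on the right are nonpositive and $L_T(\eta^*)-L_T(\bar\eta)\le\sqrt{2T\log(10NT^2/3\delta)}$ for all $\bar\eta\in\HH_\alpha$, all $T$.

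\textbf{Passing from the cover to all of $\HH$.} This step is deterministic. Fix an arbitrary competitor $\eta'\in\HH$ and choose $\bar\eta'\in\HH_\alpha$ with $d_\HH(\eta',\bar\eta')\le\alpha$. Since $d_\HH(\eta',\bar\eta')=\sup_{a,o}|\ell(a,o,\eta')-\ell(a,o,\bar\eta')|\le\alpha$, each of the $T$ summands defining $L_T$ shifts by at most $\alpha$, hence $L_T(\bar\eta')\le L_T(\eta')+\alpha T$. Chaining with the previous display,
\[
 L_T(\eta^*)\;\le\;L_T(\bar\eta')+\sqrt{2T\log\!\big(\tfrac{10NT^2}{3\delta}\big)}\;\le\;L_T(\eta')+\alpha T+\sqrt{2T\log\!\big(\tfrac{10NT^2}{3\delta}\big)}\;\le\;L_T(\eta')+\beta_T^*,
\]
using $\beta_T^*=\sqrt{2T\log(10NT^2/3\delta)}+2\alpha T$ (so there is even $\alpha T$ of slack). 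Taking the infimum over $\eta'\in\HH$ gives $L_T(\eta^*)\le\min_{\eta'\in\HH}L_T(\eta')+\beta_T^*$, i.e. $\eta^*\in\VV_T$, and this holds simultaneously for all $T\in\mathbb{N}$ on the event above; that is exactly $\Pbb\big(\eta^*\in\bigcap_{T\ge1}\VV_T\big)\ge1-\delta$.

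\textbf{Where the effort lies.} There is no real obstacle here — the only probabilistic ingredient is \cref{lem:high-prob}, and the rest is bookkeeping — but two points need care. First, the cover $\HH_\alpha$ must be chosen in advance, not adaptively, so that each pair $(\eta^*,\bar\eta)$ is a genuine fixed‑hypothesis pair to which \cref{lem:high-prob} applies; the extension to all of $\HH$ then needs no further randomness. Second, one must discretize in precisely the pseudometric $d_\HH$ (\cref{lem:hypothesis_norm}), since $d_\HH(\eta',\bar\eta')\le\alpha$ is exactly what controls both the empirical gap $|L_T(\eta')-L_T(\bar\eta')|\le\alpha T$ and — if instead one also routes $\eta^*$ through a cover point — the population gap $|\LL_T(\eta^*)-\LL_T(\bar\eta^*)|\le\alpha T$; the $2\alpha t$ (rather than $\alpha t$) term in $\beta_t^*$ is the budget that covers this latter route, and the $\log N$ factor inside $\beta_t^*$ is simply the price of the union bound over the $N$ cover points. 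The balancing of $\alpha$ against $t$ (to keep $2\alpha_t t$ of order $\widetilde O(\sqrt t)$) is deferred to the later choice of parameters and does not affect this proposition.
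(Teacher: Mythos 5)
Your proof is correct and follows essentially the same route as the paper's: union-bound \cref{lem:high-prob} over a fixed $\alpha$-cover of $\HH$ in the pseudometric $d_\HH$, invoke $\LL_T(\eta^*)\le\LL_T(\cdot)$ from \cref{assum:unbiased-feedback}, and transfer from the cover to all of $\HH$ via the $\alpha T$-Lipschitzness of the cumulative losses in $d_\HH$. The only (harmless) difference is bookkeeping: you apply population minimality at the cover point and thus pay only $\alpha T$ of discretization error, whereas the paper transfers both the population and empirical terms and pays $2\alpha T$ --- both fit inside the $2\alpha T$ slack built into $\beta_T^*$.
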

\begin{proof}
    Let $\HH^\alpha \subseteq \HH$ be an $\alpha$-cover of $\HH$ in the pseudometric $d_\HH$. In other words, for any $\eta\in\HH$, there is an $\eta^\alpha\in\HH^\alpha$ such that $d_\HH(\eta,\eta^\alpha) \le \alpha$.  A union bound over $\HH^\alpha$ gives that with probability at least $1-\delta$, 
    \begin{align*}
        \left( \LL_T(\eta^\alpha) - L_T(\eta^\alpha)\right) - \left( \LL_T(\eta^*) - L_T(\eta^*)\right) &\le \sqrt{2T\log\left( \frac{10|\HH^\alpha| T^2}{3\delta}\right)} \\
        \implies \left( \LL_T(\eta) - L_T(\eta)\right) - \left( \LL_T(\eta^*) - L_T(\eta^*)\right) &\le \sqrt{2T\log\left( \frac{10|\HH^\alpha| T^2}{3\delta}\right)}\\ 
        &\quad + \underbrace{\left( \LL_T(\eta) - L_T(\eta)\right) - \left( \LL_T(\eta^\alpha) - L_T(\eta^\alpha)\right)}_{\text{discretization error}}.
    \end{align*}
    The discretization error can be expanded and bounded as
    \begin{align*}
        \sum_{t=0}^{T-1} \left[ \E_{O\sim f_{\eta^*}(A_t)} \left[ \ell(A_t,O,\eta) - \ell(A_t,O,\eta^\alpha)\right] - \ell(A_t,O_t,\eta) + \ell(A_t,O_t,\eta^\alpha) \right] \le 2\alpha T.
    \end{align*}
    Since $\eta^*$ is a minimizer of $\LL_T(\cdot)$, we have that with probability at least $1-\delta$,
    $$L_T(\eta^*) - L_T(\eta) \le \sqrt{2T\log\left( \frac{10|\HH^\alpha| T^2}{3\delta}\right)} + 2\alpha T.$$
    We take the infimum over the size of $\alpha$-covers, which results in the bound
    $$L_T(\eta^*) - L_T(\eta) \le \sqrt{2T\log\left( \frac{10 N(\HH,\alpha,d_\HH) T^2}{3\delta}\right)} + 2\alpha T.$$
\end{proof}
%
Taking $\delta = \frac{1}{T}$ and plugging $\beta_T = \beta^*_T(\HH,\delta,\epsilon_T^\HH)$ into the abstract regret bound in \cref{thm:abstract_regret} proves the following main theorem. 
\begin{theorem*}[Expanded statement of \cref{thm:regret}]
For all $T\in\mathbb{N}$, 
\begin{align*}
    {\mathrm{Regret}}(T) &\le 1 + \frac{1}{T} + \min\{\dim_{TE}(\HH,\ell,\epsilon_T^\HH), T\}\\ 
    &\quad + 
    2\sqrt{3\sqrt{2}\log\left(\frac{10N(\HH,\alpha,d_\HH) T^2}{3\delta}\right)^{1/2} \dim_{TE}(\HH,\ell,\epsilon_T^\HH) T^{3/2} + 6 \dim_{TE}(\HH,\ell,\epsilon_T^\HH)}. 
\end{align*}
\end{theorem*}
\begin{proof}
    \begin{align*}
        &\quad {\mathrm{Regret}}(T)\\ 
        &\le 1 + \frac{1}{T} + \min\{\dim_{TE}(\HH,\ell,\epsilon_T^\HH), T\} + 2\sqrt{3\dim_{TE}(\HH,\ell,\epsilon_T^\HH)\beta^*_T(\HH,\delta,\epsilon_T^\HH) T} \\
        &= 1 + \frac{1}{T} + \min\{\dim_{TE}(\HH,\ell,\epsilon_T^\HH), T\} + \\
        &\quad + 2\sqrt{3\dim_{TE}(\HH,\ell,\epsilon_T^\HH) \left(\sqrt{2T\log\left( \frac{10N(\HH,\epsilon_T^\HH,d_\HH) T^2}{3\delta}\right)} + 2\epsilon_T^\HH T\right) T} \\
        &= 1 + \frac{1}{T} + \min\{\dim_{TE}(\HH,\ell,\epsilon_T^\HH), T\} + \\
        &\quad + 2\sqrt{3\sqrt{2} \log\left(\frac{10N(\HH,\alpha,d_\HH) T^2}{3\delta}\right)^{1/2} \dim_{TE}(\HH,\ell,\epsilon_T^\HH) T^{3/2} + 6\epsilon_T^\HH \dim_{TE}(\HH,\ell,\epsilon_T^\HH) T^2} \\
        &\le 1 + \frac{1}{T} + \min\{\dim_{TE}(\HH,\ell,\epsilon_T^\HH), T\} + \\
        &\quad + 2\sqrt{3\sqrt{2} \log\left(\frac{10N(\HH,\alpha,d_\HH) T^2}{3\delta}\right)^{1/2} \dim_{TE}(\HH,\ell,\epsilon_T^\HH) T^{3/2} + 6 \dim_{TE}(\HH,\ell,\epsilon_T^\HH) },
    \end{align*}
    where the last inequality follows since $\epsilon_T^\HH \le 1/T^2$ by definition.  
\end{proof}
The leading term in the regret bound is of order
$$T^{3/4} \left(\log N(\HH,\epsilon_T^\HH,d_\HH)\right)^{1/4} \sqrt{ \dim_{TE}(\HH,\ell,\epsilon_T^\HH) }.$$

\begin{remark}
    As noted earlier on, while the order $\widetilde{O}(T^{3/4})$ on the time horizon $T$ may appear suboptimal compared to classical $\widetilde{O}(\sqrt{T})$ optimal rates for bandit learning with direct reward feedback, this slower rate is in fact a principled consequence of our minimal assumptions.  Specifically, our analysis makes no structural assumptions on the hypothesis verifier loss $\ell$ beyond boundedness.  If we have more structural knowledge of $\ell$, say, that it is $\alpha$-strongly convex, then the bound can be tightened to match the optimal order $\widetilde{O}(\sqrt{T})$.  A notable instance is when $\ell$ is a squared loss.  A refined analysis on the drift of conditional mean losses allows us to choose the confidence parameters $\beta_T$ for the version spaces to be of order $\widetilde{O}(\log(1/\delta))$, which results in the tight $\widetilde{O}(\sqrt{T})$ regret rate.  
\end{remark}

\subsection{Rate-Optimal Bound for Squared Loss}

\label{appendix:optimal_rate}

In this section, we consider a special case of a hypothesis verifier $\ell$, taking the discriminative example introduced in Section \ref{sec:compare-to-reward} and detailed in Section \ref{app:reward informativeness}.  

\begin{theorem}
\label{thm:fast_rate}
    Suppose $r_\eta(a) = \E_{o\sim f_\eta(a)} [g(a,o)]$ for some known $g:\actions\times\observations \to [0,1]$ and $\ell(a,o,\eta) = (g(a,o) - r_\eta(a))^2 = (g(a,o) - \E_{o'\sim f_\eta(a)}[g(a,o')])^2$.  Suppose for all $t\in\mathbb{N}$, $g(A_t,O_t) - \E_{O'\sim f_\eta(A_t)}[g(A_t,O')]$ conditioned on $(\GG_t, A_t)$ is $\sigma$-sub-Gaussian.  For all $T\in\mathbb{N}$, the regret of LLF-UCB satisfies
    \begin{align*}
    {\mathrm{Regret}}(T) &\le \widetilde{O}\left( \sqrt{ T \log N(\HH,\epsilon_T^\HH,d_\HH) \dim_{TE}(\HH,\ell,\epsilon_T^\HH) } \right),
    \end{align*}
    where $N(\HH,\epsilon_T^\HH,d_\HH) $ denotes the $\epsilon_t^\HH$-covering number of $\HH$ based on the pseudo-metric $d_\HH$, $\dim_{TE}(\HH,\ell,\epsilon_T^\HH)$ denotes the $\epsilon_T^\HH$-transfer eluder dimension of $\HH$, and  $\epsilon_T^\HH = \max\left\{ \frac{1}{T^2}, \min_{a\in\actions}\inf\{|r_\eta(a)-r^*(a)|: \eta\in\HH, \eta\ne \eta^*\}\right\}$.
\end{theorem}

\section{Proofs for Supporting Lemmas and Propositions}

\subsection{Proof for Proposition~\ref{prop:compare_eluder_dim}}
\begin{proof}
    Let $\tilde\ell = C_F \ell$.  Let $d_{TE} = \dim_{TE}(\HH,\tilde\ell,\epsilon)$ be the shorthand for the $\epsilon$-transfer eluder dimension of $\HH$ with respect to $\tilde\ell$.  
    Then, there exists a length $d_{TE}$ sequence of elements in $\actions$ such that for some $\tilde\epsilon\ge \epsilon$, every action element is $\tilde\epsilon$-transfer independent of its predecessors.  We denote such a sequence as $(a_0,\ldots,a_{d_{TE}-1})$.  By definition of the transfer eluder dimension, for any $k\in\{0,\ldots,d_{TE}-2\}$, there exists a pair of hypotheses $\eta,\eta'\in\HH$ satisfying 
    $$\sum_{i=0}^{k}\left(\E_{o\sim f_{\eta'}}(a_i)[\tilde\ell(a_i,o,\eta)] - \tilde\ell_{\eta'}^{\min}(a_i)\right)\le{\tilde\epsilon}^2$$
    but $|r_{\eta}(a_{k+1}) - r_{\eta'}(a_{k+1})| > \tilde\epsilon$.  
    Using the definition for reward-discriminative verifiers, 
    \begin{align*}
        \sum_{i=0}^{k} \left(r_\eta(a_i)-r_{\eta'}(a_i) \right)^2 &\le C_F\sum_{i=0}^{k}\left(\E_{o\sim f_{\eta'}}(a_i)[\ell(a_i,o,\eta)] - \ell_{\eta'}^{\min}(a_i)\right)\\
        &=\sum_{i=0}^{k}\left(\E_{o\sim f_{\eta'}}(a_i)[\tilde\ell(a_i,o,\eta)] - \tilde\ell_{\eta'}^{\min}(a_i)\right) \le \tilde\epsilon^2 .
    \end{align*}
    By the definition of the (regular) eluder dimension, every action in the sequence $(a_0,\ldots,a_{d_{TE}-1})$ is $\epsilon$-independent of its predecessors.  Therefore, $d_{TE} \le \dim_E(\RR,\epsilon)$ since the latter is the length of the longest sequence of independent actions.  We may conclude that $\dim_E(\RR,\epsilon) \ge \dim_{TE}(\HH,C_F\ell,\epsilon)$.
    
    
\end{proof}

\subsection{Proof for Lemma \ref{lm:minimax solution}}
\begin{lemma} \label{lm:minimax solution}
    Consider some $\bar{\HH}$.
    Suppose  $\min_{\pi\in\Pi} \max_{\eta\in\bar\HH} r_\eta(\pi_\eta) - r_\eta(\pi) = 0$.  Let $\hat\pi$ be a minimizer. Let $\actions^*_\eta$ denote the set of optimal actions with respect to $r_\eta$.
    Then $\textrm{supp}(\hat\pi) \subseteq \actions^*_\eta$, for all $\eta \in \bar{\HH}$. 
\end{lemma}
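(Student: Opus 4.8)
The plan is to show that making the minimax objective vanish forces the minimizing policy $\hat\pi$ to place all of its probability mass on actions that are simultaneously reward-optimal for every hypothesis in $\bar\HH$. The argument is a short chain of inequalities followed by a ``nonnegative random variable with zero mean'' observation.

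First I would record the elementary fact that for any $\eta$, $r_\eta(\pi_\eta) = \max_{\pi\in\Pi} r_\eta(\pi) = \max_{a\in\actions} r_\eta(a)$: since $\Pi = \Delta(\actions)$ and $\pi\mapsto r_\eta(\pi) = \E_{a\sim\pi}[r_\eta(a)]$ is linear, its maximum over the simplex is attained at a vertex, i.e., at a deterministic action. In particular $r_\eta(\pi_\eta) - r_\eta(\pi)\ge 0$ for every policy $\pi$ and every $\eta$, so $\max_{\eta\in\bar\HH}\bigl[r_\eta(\pi_\eta)-r_\eta(\pi)\bigr]\ge 0$ for all $\pi$, and the hypothesis of the lemma is precisely that this lower bound is attained with equality by $\hat\pi$.

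Next, because $\hat\pi$ achieves the value $0$, we have $\max_{\eta\in\bar\HH}\bigl[r_\eta(\pi_\eta)-r_\eta(\hat\pi)\bigr]=0$, hence $r_\eta(\pi_\eta)-r_\eta(\hat\pi)\le 0$ for every $\eta\in\bar\HH$. Combining with the reverse inequality from the first step yields $r_\eta(\hat\pi) = r_\eta(\pi_\eta) = \max_{a\in\actions} r_\eta(a)$ for all $\eta\in\bar\HH$. Finally I would convert this into a support statement: fix $\eta\in\bar\HH$, set $R^\star = \max_{a\in\actions} r_\eta(a)$, and observe that $R^\star - r_\eta(A)$ with $A\sim\hat\pi$ is a nonnegative random variable with $\E_{A\sim\hat\pi}[R^\star - r_\eta(A)] = R^\star - r_\eta(\hat\pi) = 0$; therefore it is zero on $\textrm{supp}(\hat\pi)$, i.e. $r_\eta(a) = R^\star$ for every $a\in\textrm{supp}(\hat\pi)$, which is exactly $\textrm{supp}(\hat\pi)\subseteq\actions^\star_\eta$. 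Since $\eta\in\bar\HH$ was arbitrary, the claim follows.

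There is no substantive obstacle here; the only points requiring a little care are the first step (justifying $r_\eta(\pi_\eta) = \max_a r_\eta(a)$, so that each per-hypothesis term is genuinely nonnegative and the minimax value is well-defined and $\ge 0$) and the elementary but necessary step that a nonnegative random variable with zero expectation vanishes on the support of its law — which is immediate since $\actions$ is finite.
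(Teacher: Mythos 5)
Your proof is correct and rests on the same two facts as the paper's: that the zero minimax value forces $r_\eta(\hat\pi) = r_\eta(\pi_\eta) = \max_{a\in\actions} r_\eta(a)$ for every $\eta\in\bar\HH$, and that a mixture attaining the maximum of a linear functional must be supported on maximizers. The paper packages the second step as a proof by contradiction (shifting the mass $p'$ from a suboptimal action onto $\AA^*_\eta$ to get a strictly better policy), whereas you argue directly via the zero-mean nonnegative random variable; your version also makes explicit the intermediate equality $r_\eta(\hat\pi)=r_\eta(\pi_\eta)$ that the paper's contradiction implicitly relies on.
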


\begin{proof}
    We prove by contradiction.
    Suppose $\hat\pi$ takes some action $a'$ outside of $\actions^*_\eta$ for some $\eta\in\bar{\HH}$ with probability $p'$. Let $\pi' = \hat{\pi} - p' \one[a = a'] + p' \text{Unif}[ a\in \actions^*_\eta]$. Then it follows $r_\eta(\pi') > r_\eta(\hat\pi)$, which is a contradiction. Therefore, $\textrm{supp}(\hat\pi) \subseteq \actions^*_\eta$, for all $\eta \in \HH$.
\end{proof}

\subsection{Proof of the Discriminative Feedback Example} \label{app:reward informativeness}

 Suppose $r_\eta(a) = \E_{o\sim f_\eta(a)}[g(a,o)]$ for some known $g:\actions\times\OO\to[0,1]$.  Note that the reward mapping $\eta\mapsto r_\eta$ is known, but the reward function itself is still hidden from the agent (since $\eta^*$ is unknown).  
We define $ \ell(a,o,\eta) \coloneqq ( g(a,o) - r_\eta(a))^2
= ( g(a,o) - \E_{o'\sim f_{\eta}(a)}[g(a,o')])^2 $, which gives
\begin{align*}
\E_{o \sim f_\eta(a)}[ \ell(a,o,\eta') ] 
 = \E_{o\sim f_\eta(a)} \left[( g(a,o) - \E_{o'\sim f_{\eta'}(a)}[g(a,o')] )^2\right].
\end{align*}
One can easily verify that $\eta \in\argmin_{\eta' \in\HH} \E_{o \sim f_\eta(a)}[ \ell(a,o,\eta') ]$.
With this definition, we have that
\begin{align*}
    | r_\eta(a) - r_{\eta'}(a) |^2
    &= ( \E_{o\sim f_\eta(a)}[g(a,o)] - \E_{o\sim f_{\eta'}(a)}[g(a,o)] )^2 \\
    &= ( \E_{o\sim f_\eta(a)}[g(a,o) - \E_{o'\sim f_{\eta'}(a)}[g(a,o')] ] )^2 \\
    &\leq  \E_{o\sim f_\eta(a)}[( g(a,o) - \E_{o'\sim f_{\eta'}(a)}[g(a,o')] )^2 ] \\
    &= \E_{o \sim f_\eta(a)}[ \ell(a,o,\eta') ]
\end{align*}
This shows the feedback is discriminative.

\subsection{Proof of Reasoning Example} \label{app:proof of reasoning example}

\paragraph{binary indicator of whether all steps are correct} This problem is equivalent to a bandit problem with $|\SS|^L$ arms. Here $f_\eta(a) = r(a)$, so the transfer eluder dimension reduces to the standard eluder dimension, which is bounded by the size of the action space. 

\paragraph{index of the first incorrect step }

Here we prove for $\epsilon < 1/2L$.
Given the rubric of $\eta^*$, partition the action space into $L$ sets, where $\actions_l = \{ (s_1, \dots, s_L) | \text{$s_1, \dots, s_{l-1}$ are correct and $s_l$ is incorrect} \} $ for $l=1,\dots,L$, where $\actions_0$ denotes sequences where $s_1$ is incorrect. By this definition, we have $\actions_i \bigcap \actions_j = \emptyset$, for $i\neq j$, and $\actions^* \bigcup ( \bigcup_{l=1}^L \actions_l) = \actions$, where $\actions^* = \{ a^* \} $

Suppose we have an independent action sequence $(a_1, \dots, a_K)$ in the sense of  \cref{def:transfer eluder dim} where each action is $\epsilon$-independent of their predecessors. We show it can have no more than $|\SS|$ actions from each $\actions_l$ for $l \in [1,L]$. By definition of the feedback, for $a \in \actions_l$, $f_\eta^*(a) = l$. Suppose we have more than $|\SS|$ actions from $\actions_l$. It implies that a token must be used twice at the $l$th position. Say it's $s_l$ and it's shared by $a^1, a^2 \in \actions_l$. Then we show $a^2$ is $\epsilon$-dependent on $a^1$ when $\epsilon < 1/L$. 
For $\eta \in \HH$, satisfying $\E_{o \sim f^*(a^0) } [| o - f_\eta(a^0)|^2/L^2] = | l - f_\eta(a^0)|^2/L^2 \leq \epsilon^2$, we have  $l -  L \epsilon \leq f_\eta(a^0) \leq l + L \epsilon $.
Since $\epsilon < 1/2L$ and $f_\eta(a^0)$ is an integer, this implies $f_\eta(a^0) = l$. That is, for such an $\eta$ satisfying the constraint given by $a^0$, $s_l$ is incorrect. This implies 
$f_\eta(a^1) \leq l$. Therefore, $r_\eta(a^0) = r_\eta(a^1) = 0$. 

Therefore, the length of independent action sequences is bounded by $|\SS| L + |\actions^*| = |\SS| L + 1$ .

\paragraph{give correction for the first mistake} 
In this case, the feedback not only returns the index of the first incorrect step $l$, but also reveals the correct reasoning action $s^*_l$. Let $a^*_\eta = (s_1(\eta), \ldots,s_L(\eta))$ denote the $L$ reasoning steps based on the hypothesis $\eta$. The reward function of any action $a$ and hypothesis $\eta$ is $r_\eta(a) = \mathbb{I}\{a^*_\eta = a\}$. For an action $a = (s_1,\ldots,s_L)$ and feedback $o \coloneqq (l, s_l(\eta))$ generated based on $f_{\eta}(a)$, we have $s_j = s_j(\eta)$ for all $j < l$ and $s_l \ne s_l(\eta)$. Now, given any feedback $o \coloneqq (l, s^*_l)$, we define the following loss $\ell(a,o,\eta) = \frac{1}{L}\left(\sum_{j=1}^{l-1} \mathbb{I}\{s_j(\eta) = s_j\} + \mathbb{I}\{s_l(\eta) = s^*_l\}\right)$. This verifer loss evaluates whether $\eta$ and $\eta'$ have the same first $l$ reasoning steps. 

For $\epsilon < 1$, suppose an action sequence $(a_1,\ldots,a_K)$ where each action is $\epsilon$-independent of their predecessors. If action $a$ is $\epsilon$-independent, there exists $\eta,\eta'$ such that $\sum_{i=1}^K \mathbb{E}_{o_i \sim f_{\eta'}(a)}[l(a_i,o_i,\eta)] \le \epsilon$ and $|r_\eta(a) - r_{\eta'}(a)|>\epsilon$. By definition of the feedback and loss, we know $\eta$,$\eta'$ have the same initial $\max_i l_i$ reasoning steps. However, we know that $r_\eta(a) \ne r_{\eta'}(a)$ indicating at least one index $l > \max_{i} l_i$ where $s_l \in \{s_l(\eta), s_l(\eta')\}$ and $s_l(\eta) \ne s_l(\eta')$, resulting in feedback $o = (l, s_l(\eta'))$ for $a$. Thus, the sequence of indices in feedback $o_1,o_2,\ldots$ is monotonic. As we have $L$ reasoning steps, for any pair $\eta,\eta'$, the sequence length is bounded by $L$.

\paragraph{demonstration}
Here, the feedback directly demonstrates correct reasoning sequence $a^* = (s_1^*,\ldots,s^*_L)$ and is independent of the agent's action sequence. For action $a=(s_1,\ldots,s_L)$ and hypothesis $\eta$, we define the loss as $\ell(a,o,\eta) = \mathbb{I}\{o=a^*_\eta\}$. Therefore, for any $\eta, \eta'$ and $\epsilon < 1$, if $a$ satisfies: $\mathbb{E}_{o \sim f_{\eta'}(a)}\ell(a,o,\eta) \le \epsilon$, we have $a^*_\eta = a^*_{\eta'}$, implying $r_\eta(a) = r_{\eta'}(a)$ for all $a \in |\mathcal{S}|^L$ and a transfer Eluder dimension of $1$.

\subsection{Proof for Lemma \ref{lem:continued minimax}}

\begin{lemma}
\label{lem:continued minimax}
Suppose for some $t_0 \ge 0$, we have that $\min_{\pi\in\Pi}\max_{\eta\in\HH_{t_0}}|r_\eta(\pi_\eta)-r_\eta(\pi)| = 0$ in \cref{alg:LLF_UCB}.  Then for all $t>t_0$, $\min_{\pi\in\Pi}\max_{\eta\in\HH_{t}}|r_\eta(\pi_\eta)-r_\eta(\pi)| = 0$.
\end{lemma}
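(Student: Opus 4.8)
The plan is to exploit the fact that the confidence sets produced by \cref{alg:LLF_UCB} are \emph{nested}: line~5 sets $\HH_t \gets \HH_{t-1} \cap \{\cdots\}$, so $\HH_t \subseteq \HH_{t-1} \subseteq \cdots \subseteq \HH_{t_0}$ for every $t \ge t_0$, regardless of whether the exploitation step is ever triggered (feedback is still observed and the set is still intersected every round). Thus it suffices to show the purely structural fact that if the minimax value is zero over a hypothesis set $\bar\HH$, it remains zero over every subset $\bar\HH' \subseteq \bar\HH$, and then apply this with $\bar\HH = \HH_{t_0}$ and $\bar\HH' = \HH_t$.

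First I would record the elementary observation that, for each $\eta$, $\pi_\eta$ is by definition an optimal policy for $r_\eta$, so $r_\eta(\pi_\eta) - r_\eta(\pi) \ge 0$ for all $\pi\in\Pi$. Hence the absolute value in the statement is redundant: $\min_{\pi}\max_{\eta\in\bar\HH}|r_\eta(\pi_\eta)-r_\eta(\pi)| = \min_{\pi}\max_{\eta\in\bar\HH}\bigl[r_\eta(\pi_\eta)-r_\eta(\pi)\bigr] \ge 0$. So to conclude that this quantity equals $0$ over $\HH_t$ it is enough to exhibit a single policy achieving value $\le 0$.

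Next I would invoke \cref{lm:minimax solution} with the set $\HH_{t_0}$: since by hypothesis $\min_{\pi\in\Pi}\max_{\eta\in\HH_{t_0}} r_\eta(\pi_\eta)-r_\eta(\pi) = 0$, any minimizer $\hat\pi$ satisfies $\mathrm{supp}(\hat\pi) \subseteq \AA^*_\eta$ for all $\eta\in\HH_{t_0}$, where $\AA^*_\eta$ denotes the $r_\eta$-optimal actions. Now fix any $t > t_0$. For every $\eta \in \HH_t$ we have $\eta\in\HH_{t_0}$ by nestedness, hence $\mathrm{supp}(\hat\pi)\subseteq\AA^*_\eta$, which gives $r_\eta(\hat\pi) = \max_{a} r_\eta(a) = r_\eta(\pi_\eta)$, i.e.\ $r_\eta(\pi_\eta) - r_\eta(\hat\pi) = 0$. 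Taking the maximum over $\eta\in\HH_t$ yields $\max_{\eta\in\HH_t}\bigl[r_\eta(\pi_\eta) - r_\eta(\hat\pi)\bigr] = 0$, so $\min_{\pi\in\Pi}\max_{\eta\in\HH_t}\bigl[r_\eta(\pi_\eta) - r_\eta(\pi)\bigr] \le 0$; combined with the nonnegativity noted above, it is exactly $0$, as claimed. There is no real obstacle here — the monotone shrinkage of $\HH_t$ does all the work — and the only point to check carefully is that the algorithm never re-enlarges the confidence set, which is immediate from line~5.
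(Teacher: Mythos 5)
Your proof is correct, and it rests on the same key fact as the paper's: the confidence sets are nested, $\HH_t \subseteq \HH_{t_0}$ for all $t > t_0$, because line~5 of \cref{alg:LLF_UCB} always intersects with the previous set regardless of which branch is later taken. The only difference is how the conclusion is extracted from nestedness. The paper argues by induction and simply observes that for every fixed $\pi$, $\max_{\eta\in\HH_{t+1}}\bigl[r_\eta(\pi_\eta)-r_\eta(\pi)\bigr] \le \max_{\eta\in\HH_{t}}\bigl[r_\eta(\pi_\eta)-r_\eta(\pi)\bigr]$, so the min-max value is non-increasing under set inclusion and therefore stays at zero; no witness policy is needed. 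You instead route through \cref{lm:minimax solution} to produce an explicit minimizer $\hat\pi$ supported on $\bigcap_{\eta\in\HH_{t_0}}\AA^*_\eta$ and verify that it still attains value zero over the smaller set $\HH_t$. Both arguments are valid; yours is marginally longer but has the small benefit of exhibiting the concrete consensus policy the algorithm would actually play, while the paper's is the minimal monotonicity argument. Your side remarks --- that the absolute value is redundant since $r_\eta(\pi_\eta) \ge r_\eta(\pi)$, and that the confidence set is never re-enlarged --- are both correct and worth making explicit.
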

\begin{proof}
    We prove by induction.  Suppose the conclusion holds for $t>t_0$, we prove that it holds for $t+1$ as well.  At time $t$, the induction hypothesis implies that $\min_{\pi\in\Pi}\max_{\eta\in\HH_{t}}|r_\eta(\pi_\eta)-r_\eta(\pi)| = 0$.  Since $\HH_{t+1} \subseteq \HH_t$, $\max_{\eta\in\HH_{t+1}}|r_\eta(\pi_\eta)-r_\eta(\pi)| \le \max_{\eta\in\HH_{t}}|r_\eta(\pi_\eta)-r_\eta(\pi)|$ for all $\pi\in\Pi$.  Thus, $\min_{\pi\in\Pi}\max_{\eta\in\HH_{t+1}}|r_\eta(\pi_\eta)-r_\eta(\pi)| \le \min_{\pi\in\Pi}\max_{\eta\in\HH_{t}}|r_\eta(\pi_\eta)-r_\eta(\pi)| = 0$.  
\end{proof}



\section{Additional Examples}
We give an additional example of the hypothesis verifier below.  In this example, reward-only learning requires exponential time ($2^L$), whereas the transfer eluder dimension is $1$: LLF gives an exponential speedup.
\begin{example}[Bitwise feedback on 0-1 string]
\label{exmp:0-1-bandit}
    Consider an action set $\actions = \{0,1\}^L$.  The space of hypotheses $\HH$ contains all possible length-$L$ 0-1 strings.  Each hypothesis $\eta$ contains a particular fixed target string $s(\eta)$ and the corresponding text instruction to provide reward and feedback about the target.  The reward function $r_\eta$ corresponding to a hypothesis $\eta$ is such that $r(a) = 1$ if $a = s(\eta)$ and $r(a) = 0$ otherwise.  In other words, rewards are sparse and every suboptimal arm incurs a regret of 1.  Feedback to an action $a = (a_1, \dots, a_L)$ is bitwise, which tells in words the correctness of each bit in the 0-1 string (i.e. whether $a_i=s_i$ for $s(\eta)=(s_1,\ldots,s_L)$. 
    Equivalently, we can abstract the feedback as $f_\eta(a) = (\one\{a_i=s_i\})_{i=1}^L$ and define the loss function $\ell(a,o,\eta) = \frac{1}{L}\sum_{i=1}^L \one\{o_i \ne \one\{a_i = s_i\}\}$ to measure the discrepancy between the feedback and the correctness indicated by hypothesis $\eta$.  
    For any $\epsilon < \frac{1}{L}$, the transfer eluder dimension $\dimTE(\HH,\ell,\epsilon) = 1$, as for any action $a'$, the expected loss $\E_{O\sim f_{\eta'}(a')}[\ell(a',O,\eta)] < \frac{1}{L}$ iff $\eta=\eta'$. 
\end{example}

\section{Extensions}

\subsection{Special Case of Reward-Agnostic Feedback}

Text feedback may contain information beyond what is relevant to the reward.  In particular, one could imagine a special case, where feedback does not reveal much about the reward, but still provides enough to identify an optimal action over time.  One simple example is when the feedback directly reveals the optimal action, regardless of the action chosen.  In this case, the transfer eluder dimension as defined could be arbitrarily large, but ideally an efficient LLF agent should choose the optimal action in the following steps instead of trying to identify the mean reward for each action.  

\subsection{Extension to Contextual Bandits}
\label{appendix:contextual}

Our formulation can be modified slightly to accommodate learning with a context.  While the feedback in the context-less setting may be viewed similar to a context, the main difference is that the optimal actions in the context-less setting do not change between iterations; on the other hand, in the contextual setting, the optimal actions in each time step depend on the context presented to the agent at that point.
In a contextual problem, a Markov process $X_t$ independently takes values in a set $\XX$ that the agent views as contexts.  We may define the full set of actions to be the set of context-action pairs $\actions \coloneqq \{(x,a): x\in\XX, a\in\actions(x)\}$, where $\actions(x)$ is the set of available actions under the context $x$.  Instead of having a fixed action space $\actions$ across time, consider time-varying action sets $\actions_t \coloneqq \{(X_t,a): a\in\actions(X_t)\}$.  At each time $t$, an action $a_t\in\actions_t$ will be selected.  In accordance, the policy $\pi = \{\pi_t | t\in\mathbb{N}\}$ is now a sequence of functions indexed by time, each mapping the history $H_t = (\actions_0,A_0,R_0,\ldots,\actions_{t-1},A_{t-1},R_{t-1},\actions_t)$ to a distribution over $\actions$ with support $\actions_t$.  Our analysis for the context-free setting directly carries over.

\subsection{Alternative Formulation of Feedback Generation}


The LLF formulation we have presented so far assumes that feedback arises from a fixed mapping $\eta\mapsto f_\eta$ with each hypothesis $\eta\in\HH$.  While this ``model-based'' view simplifies both the design of exploration strategies and the complexity analysis via the transfer eluder dimension, it imposes a structural constraint that may be too restrictive in settings where feedback is generated by a more complex or even adversarial process.  An alternative, entirely ``model-free'' formulation allows feedback to be generated arbitrarily from an oracle in a streaming fashion, without the need to explicitly model a feedback mapping $\eta\mapsto f_\eta$.  Concretely, at each time $t$, the agent executes an action $A_t\in\actions$ and observes feedback $O_t\in\observations$.  We denote the history of interactions as $I_t = (A_0,O_0,\ldots,A_t,O_t)$ and write $\mathcal{I}$ for the set of all possible histories.  A (history-dependent) policy $\pi:\mathcal{I}\to\Delta(\actions)$ maps each history $h\in\mathcal{I}$ to a distribution over actions.  

This streaming-oracle perspective subsumes both stochastic and adversarial feedback models, and can capture scenarios where the dependence on $\eta$ is unknown or too complex to parameterize.  In this setting, one must replace the hypothesis-indexed complexity measures by complexity metrics defined directly over the space of oracles or possible histories.  Although this general approach will likely incur additional technical overhead, it also broadens the applicability of our LLF framework to encompass richer feedback protocols beyond the hypothesis-testing paradigm.  An interesting future direction is to develop performance guarantees under the more general feedback generation model.  

\subsection{Relaxations of \cref{assum:unbiased-feedback}}
\label{appendix:relaxations}

Our verifier assumption can be naturally extended to allow noise and small approximation error in the verifier loss. In \cref{assum:unbiased-feedback}, unbiasedness is defined for the average loss over the distribution of observed feedback. In particular, an LLM-based hypothesis verifier may occasionally misjudge individual feedback instances, but as long as its errors are not systematically biased—i.e., it is correct on average over the distribution of feedback—it satisfies the assumption. If we assume a $\Delta$-approximately correct verifier to model potential LLM mistakes (formalized by modifying \cref{assum:unbiased-feedback} to $\E_{O\sim f_\eta(a)}[\ell(a,O,\eta)] \le \ell_\eta^{\min}(a) + \Delta$), then the regret bound acquires an additional linear bias term of order $O(\sqrt{\Delta})$. In other words, $\Delta$ controls how much the regret guarantee degrades under verifier error. 

The regret analysis can also be extended to accommodate an approximate reward mapping $\tilde{r}_\eta$ with $\sup_{a,\eta}|\tilde{r}_\eta(a)-r_\eta(a)| \le \delta$. In that case, the regret decomposition will accrue an additional approximation error $2\delta$ at each step, accumulating to an additive order $O(\delta T)$ term on the regret upper bound. Another subtlety this creates is the exploitation step, as an approximate minimax criterion no longer guarantees zero regret every step onwards. If we modify HELiX to take the exploitation step when the approximate minimax value is zero, i.e., $r_{\eta_p}(\pi_{\eta_p}) - r_{\eta_p}(\pi_p) = 0$, then the regret bound accrues an additional additive term $2\delta T$. Putting these together, the regret guarantee does degrade gracefully via an additive $O(\delta T)$ term.  
Concretely, suppose the algorithm observes, instead of $\eta\mapsto r_{\eta}$, a perturbed version $\eta\mapsto\tilde{r}_{\eta}$ that satisfies $\sup_{a\in\actions, \eta\in\HH}|r_\eta(a) - \tilde{r}_\eta(a)| \le \delta$ uniformly for some $\delta > 0$.  Define $\widetilde{U}_t(a) = \sup_{\eta\in\VV_t}\tilde{r}_\eta(a)$, then $\sup_{a\in\actions}|\widetilde{U}_t(a) - U_t(a)| \le \delta$.  Since the algorithm selects an action $A_t$ that maximizes $\tilde{U}_t(a)$, we have
\begin{align*}
    r^*(\eta^*) - r^*(A_t) &\le \left(U_t(a^*) - r^*(A_t)\right)\cdot\one\{\eta^*\in \VV_t\} + \one\{\eta^*\notin\VV_t\} \\
    &\le w_{\VV_t}(A_t)\cdot\one\{\eta^*\in \VV_t\} + \one\{\eta^*\notin\VV_t\} + [U_t(a^*) - U_t(A_t)] \cdot\one\{\eta^*\in \VV_t\} \\
    &\le w_{\VV_t}(A_t)\cdot\one\{\eta^*\in \VV_t\} + \one\{\eta^*\notin\VV_t\} +  [\widetilde{U}_t(a^*) - \widetilde{U}_t(A_t) + 2\delta] \cdot\one\{\eta^*\in \VV_t\} \\
    &\le w_{\VV_t}(A_t)\cdot\one\{\eta^*\in \VV_t\} + \one\{\eta^*\notin\VV_t\} + 2\delta \cdot\one\{\eta^*\in \VV_t\}.  
\end{align*}
Taking the expectation and summing over $t=0,\ldots, T-1$, we obtain 
$$\mathrm{Regret}(T, \eta^*) \le \sum_{t=0}^{T-1} \E\left[w_{\VV_t}(A_t) \cdot\one\{\eta^*\in \VV_t\} + \one\{\eta^*\notin \VV_t\}\right] + 2\delta T.$$

It remains to handle the approximation in the exploitation step.  If, in the exploitation step, $(\pi_p, \eta_p)$ satisfies 
$$\tilde{r}_{\eta_p}(\tilde{\pi}_{\eta_p}) - \tilde{r}_{\eta_p}(\pi_p) = 0,$$
where $\tilde{\pi}_\eta$ is an optimal policy under $\tilde{r}_\eta$, then by \cref{lm:minimax solution}, $\tilde{r}_\eta(\pi_p) = \tilde{r}_\eta(\pi_\eta)$ for all $\eta \in \HH_t$.  Taking $\pi_p$ results in per-step regret at most
$$r_\eta(\pi_\eta) - r_\eta(\pi_p) \le \tilde{r}_\eta(\pi_\eta) - \tilde{r}_\eta(\pi_p) + 2\delta \le 2\delta$$
for all $\eta \in\HH_t$.  Thus, the regret incurred after taking the exploitation step is $O(\delta T)$.

\section{Experiment Details} 
\label{app:exp details}

\subsection{Environments} \label{sec:exp evns}


\textbf{\textsc{Battleship}} Battleship is a 2D grid environment where the agent must locate and sink three hidden ships within 20 turns. The agent fires at one cell per turn and receives hit/miss feedback, ship type (5-cell ship, 4-cell ship, and 3-cell ship), and a map showing all previous hits and misses. Success requires strategic exploration to find ships and exploitation to sink them efficiently. We use 20 scenarios (maps of ship layout) to evaluate all agents.  We use a hidden per-step reward to evaluate an agent's performance.  For instance, the feedback ``a ship was hit but not sunk'' corresponds to 0.5 point.  We do not communicate this numerical reward information to the agent. 

\textbf{\textsc{Minesweeper}} Minesweeper is a 2D grid puzzle with hidden mines. At each turn, the agent chooses to reveal one cell, aiming to uncover all safe cells within 20 turns without hitting a mine. 
Revealed cells show the number of adjacent mines, and a `0' triggers automatic revelations of surrounding safe cells. 
Sequential reasoning and updating of hypotheses based on observed clues are essential for success.  Hidden rewards are calculated by assigning 0.2 to choosing a square that does not have a mine, and 1.0 to fully solving the game.  Invalid moves incur a -0.2 penalty. The agent receives feedback in the form of a partially revealed map after each action.

\begin{figure*}[t]
    \centering
    \includegraphics[width=0.9\linewidth]{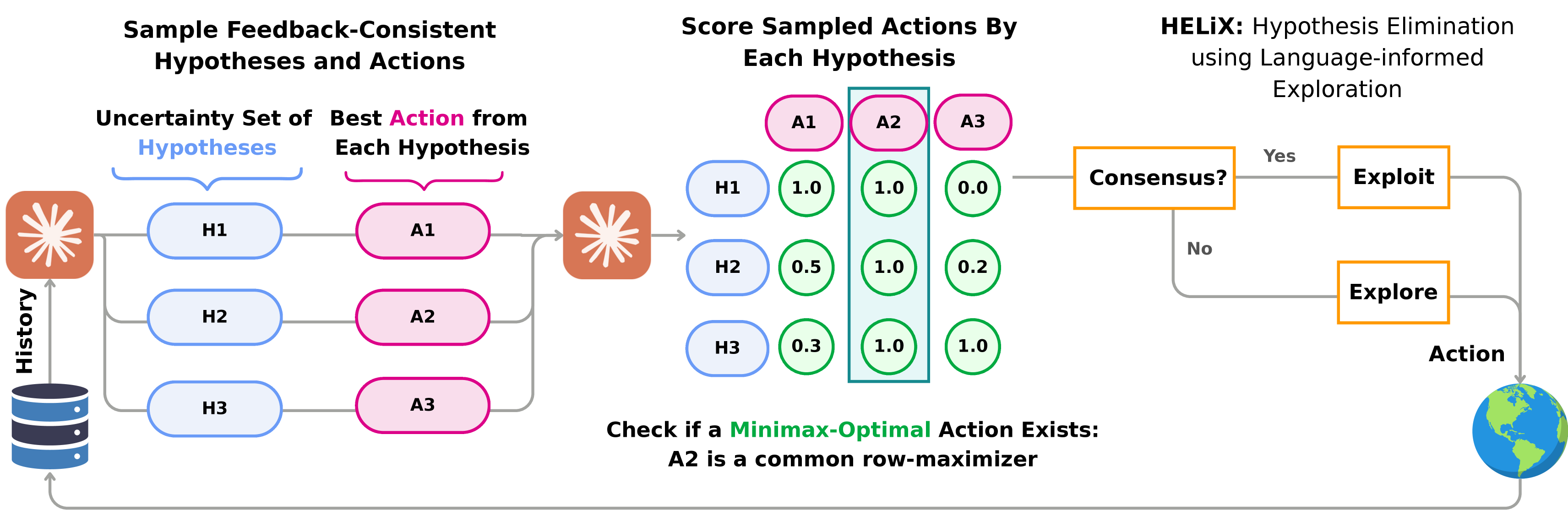}
    \caption{Our algorithm extends the traditional paradigm of model-based exploration to the LLM setting.  Here, the ``model'' is represented by the LLM's thoughts, which we interpret as their hypotheses about the external world. We ground this think-then-act behavior in the interactive decision-making framework and introduce a new algorithm that conducts efficient exploration from language feedback.}
    \label{fig:alg-diagram}
\end{figure*}

\subsection{Baselines}
\label{app: baselines}

\textbf{Chain-of-Thought (CoT)} 
LLMs with advanced reasoning capabilities can produce CoTs that often contain guesses and reasoning traces of the environment~\citep{wei2022cot,deepseekai2025,gandhi2025cognitive}. 
We propose to leverage LLMs' knowledge about the world to enhance decision-making.  In particular, we treat an LLM's thinking tokens before deciding on an action as ``hypotheses''.  These thinking tokens can be sampled by prompting the LLM to output its reasoning before an action with prompts in the form of ``\texttt{<Hypothesis> <Action>}''. Using a targeted prompt, we sample a single action in the style of hypothesis (thinking tokens) then act, similar to ReAct~\citep{yao2022react}. 

\textbf{CoT + epsilon-greedy} We implement an epsilon-greedy exploration baseline. We use the ``propose exploration action prompt ($\pi_{ref}$)'' in Section~\ref{sec:prompt_templates} to obtain a random action. We then use CoT as described above to get a greedy action. Because in both our battleship and minesweeper environment, we only take at most 20 interaction steps, we set the initial $\epsilon$ to be fairly large, and then anneal based on step count $\epsilon_t = \epsilon_{t-1} * \max(0, 1 - \frac{t}{20})$, where $\epsilon_0=0.5$. 


\subsection{\algo Implementation Details}
\label{app: implement details}

\textbf{\algo.} 
We provide a practical implementation of \algo~ using LLMs.  We provide the pseudocode for the practical implementation in Algorithm~\ref{sec:code_impl} and illustrate a corresponding flow-graph in Figure~\ref{fig:alg-diagram}. 
The algorithm takes as inputs the following LLM-based components:
\begin{itemize}
    \item[1.] $\pi_{\mathrm{LLM}}: \bigcup_{t=0}^\infty (\mathcal{A}\times \mathcal{O})^t  \rightarrow \Delta(\mathcal{H} \times \mathcal{A})$.  This is an LLM with a chain-of-thought prompt that asks it to analyze the current observation through thinking tokens and  produce a valid action. We may view this policy as producing the best action conditioned on a hypothesis consistent with the feedback history. 
    
    \item[2.] $R_{\mathrm{LLM}}: \mathcal{H} \times \mathcal{A} \rightarrow [0, 1]$. This is a reward mapping to evaluate how good/bad the action is under a given hypothesis. We implement this by prompting an LLM to score an action conditioned on a sampled hypothesis. This can be viewed as a hypothesis-conditioned reward model.
\end{itemize}
\algo~(\cref{alg:LLF_UCB}) maintains a hypothesis space $\HH_t$ at iteration $t$, which contains all hypotheses $\eta$ that are consistent with observed feedback.  Then, \algo~ searches over all possible policies by computing $\pi_p$ and $\pi_o$. We approximate these two steps with finite sets of candidates, $\hat\HH_t$ and $\hat{\mathcal{A}}_t$, respectively.

We make the assumption that state-of-the-art LLMs are capable of producing valid hypotheses when instructed with a chain-of-thought prompt and history.  In other words, they provide hypotheses that are plausible explanations of the interaction history of actions and feedback.
At each step, we use $\pi_{\mathrm{LLM}}$ to produce hypothesis-conditioned actions. We first ask the LLM to generate a diverse set of hypotheses.  For each hypothesis, we prompt an LLM to generate corresponding optimal actions. Unlike a common chain-of-thought approach that asks LLMs to produce only one hypothesis and one action, we ask the LLM to output $N$ hypotheses and actions. This set of hypotheses accounts for the agent's uncertainty about the environment. 
For computational efficiency, we sample these $N$ hypotheses and actions in one LLM call rather than $N$ calls, introducing conditional dependencies between them.
These LLM calls produce an approximate hypothesis space $\hat\HH_t$ of size $N$ and an approximate policy space $\hat{\mathcal{A}}_t$ (of deterministic actions) of size $N$.

We approximate the minimax and maximization steps in \cref{alg:LLF_UCB} with $\hat\HH_t$ and $\hat{\mathcal{A}}_t$.  Concretely, we construct a score matrix $S_t\in [0, 1]^{N \times N}$ whose entries $\left[ S_t\right]_{\eta,a}$ correspond to the reward of hypothesis-action pairs $(\eta,a)$.  The rows of this score matrix correspond to hypotheses in $\hat\HH_t$ and columns correspond to actions in $\hat{\mathcal{A}}_t$.  This matrix is visualized in the middle portion in Figure~\ref{fig:alg-diagram}. We use the reward mapping $R_{\mathrm{LLM}}$ to produce scores. The diagonal entries of $S_t$ are close to 1.0 because the action $a_i$ conditionally sampled from $\eta_i$ should be scored the highest under $\eta_i$. 
Given the score matrix $S_t$, we first check whether the exploitation step in \cref{alg:LLF_UCB} can be triggered. We do so by first forming a set of actions that are optimal to all hypotheses:
$$\hat{\mathcal{A}}_t^* \leftarrow \bigcap_{\eta\in\hat\HH_t} \arg\max \left[ S_t\right]_{\eta}.$$

If a given action $a^{\star}$ satisfies $R_{\mathrm{LLM}}(\eta, a^{\star}) \geq R_{\mathrm{LLM}}(\eta, a)$ for all $\eta \in H_t$ and $a \in A_t$, then $a^*$ is identified as a consensus action and exploited immediately.  This corresponds to the exploitation step in the theoretical ~\cref{alg:LLF_UCB}, and we visualize this step in \cref{fig:alg-steps}, Panel A. By~\cref{lm:minimax solution}, if an action solves the minimax problem, it must also be an optimal action for all remaining hypotheses simultaneously.
If there are multiple consensus actions, we perform tie-breaking detailed below. This step is implemented as a set intersection operation over the sets of highest-scoring actions from each hypothesis, with an illustrative example in \cref{fig:alg-steps}, Panel B.  

\begin{center}
\begin{minipage}{0.55\textwidth}
\begin{algorithmic}
    \IF{$\hat{\mathcal{A}}_t^* \not= \varnothing$}
        \STATE $A_{t+1} \gets \texttt{tie-break}(\hat{\mathcal{A}}_t^*)$ \; 
        \algcomment{// Exploitation step}
    \ELSE
    \STATE $A_{t+1} \gets \displaystyle\argmax_{a \in \hat{\mathcal{A}}_t} \left( \arg\max_{\eta \in \hat\HH_t}  \displaystyle \left[S_t\right]_\eta \right)$ \; \algcomment{// Exploration step}
    \ENDIF
\end{algorithmic}
\end{minipage}
\end{center}

\subsection{Additional Results}
\label{app:ablations}
We implement two variants of ~\algo~ with slightly different action selection procedures.  

\textbf{\algo~(No exploitation step).} We use the same procedure as \algo{} to generate $N$ actions and $N$ hypotheses, followed by scoring each action under every hypothesis. Unlike in \algo, we directly select actions with the highest score across all hypotheses.  If there are multiple actions, we tie-break by preferring hypotheses and actions generated earlier in the output.

\textbf{\algo~(With $\pi_{\mathrm{ref}}$).} $\pi_{\mathrm{ref}}: \varnothing \rightarrow  \Delta(\mathcal{A})$ is a user-provided reference policy to sample actions, analogous to a baseline policy. The design of the reference policy may vary. In this work, we adopt a random reference policy by asking an LLM to produce a set of random actions that are different from those generated by $\pi_{\mathrm{LLM}}$.  
%
These random actions are only used if there are ties during the exploration step.  We apply a tie-breaking step by re-scoring with a reference policy.
We sample $M$ actions from $\pi_{\text{ref}}$: $\tilde a \sim \pi_{\text{ref}}$ and use the same $R_{\text{LLM}}$ to score each one conditioned on the sampled feedback-consistent hypotheses $\hat\HH_t$, forming a rectangular score matrix $S_t\in [0, 1]^{N \times (N + M)}$.
%
\begin{center}
\begin{minipage}{0.55\textwidth}
\begin{algorithmic}
    \IF{$\hat{\mathcal{A}}_t^* \not= \varnothing$}
        \STATE $A_{t+1} \gets \texttt{tie-break}(\hat{\mathcal{A}}_t^*)$ \; \algcomment{// Exploitation step}

    \ELSE
        \STATE $\tilde\HH_t \gets \arg\max_{\eta \in \hat\HH_t} \left( \displaystyle\max \left[S_t\right]_\eta \right)$ 
        \STATE \algcomment{// Exploration step with re-scoring:}
        \STATE $A_{t+1} \gets \displaystyle\argmax_{a \in \hat{\mathcal{A}}_t} \left( \displaystyle\max_{\eta \in \tilde{\HH}_t} \Big[ \left[S_t\right]_{\eta,a} - \mathbb{E}_{\tilde a\sim \pi_{\mathrm{ref}}} \big[\left[S_t\right]_{\eta,\tilde{a}} \big] \Big] \right)$ 
    \ENDIF
\end{algorithmic}
\end{minipage}
\end{center}
The re-scoring or re-centering step is widely used in RL, such as baseline methods~\citep{weaver2013optimal,Sutton1998}, ReMax~\citep{li2023remax}, RLOO~\citep{ahmadian2024back}, and GRPO~\citep{deepseekai2025}. This procedure interprets the score as the advantage of an action $a$ relative to those sampled from a reference policy $\pi_\mathrm{ref}$, under a given hypothesis $\eta$. There are multiple reasons why an advantage is useful for tie-breaking: 1) LLMs may not score consistently across hypotheses.  Comparing score differences can help cancel out these inconsistencies.
2) When we use a uniformly random $\pi_\mathrm{ref}$, the advantage implicitly examines the quality of the hypotheses and favors more discriminative ones.  A permissive hypothesis that assigns approximately the same score to all actions (e.g., ``Fire a shot anywhere on the map”) lacks discriminative power. 
In contrast, a discriminative hypothesis assigns higher scores to actions that align with its intent (e.g., ``Fire a shot along the edge of the map”), yielding a higher advantage over random actions.
With re-scoring, we favor actions with high advantages over random actions.  The re-scoring only changes the exploration step, where we use the $\pi_{\text{ref}}$ to help decide which step to take.

We plot the cumulative reward of this additional variant along with those of previously introduced agents in \Cref{fig:reward_full}.
\begin{figure}[t]
\centering
\begin{subfigure}[t]{0.48\textwidth}
    \centering
    \includegraphics[width=\linewidth]{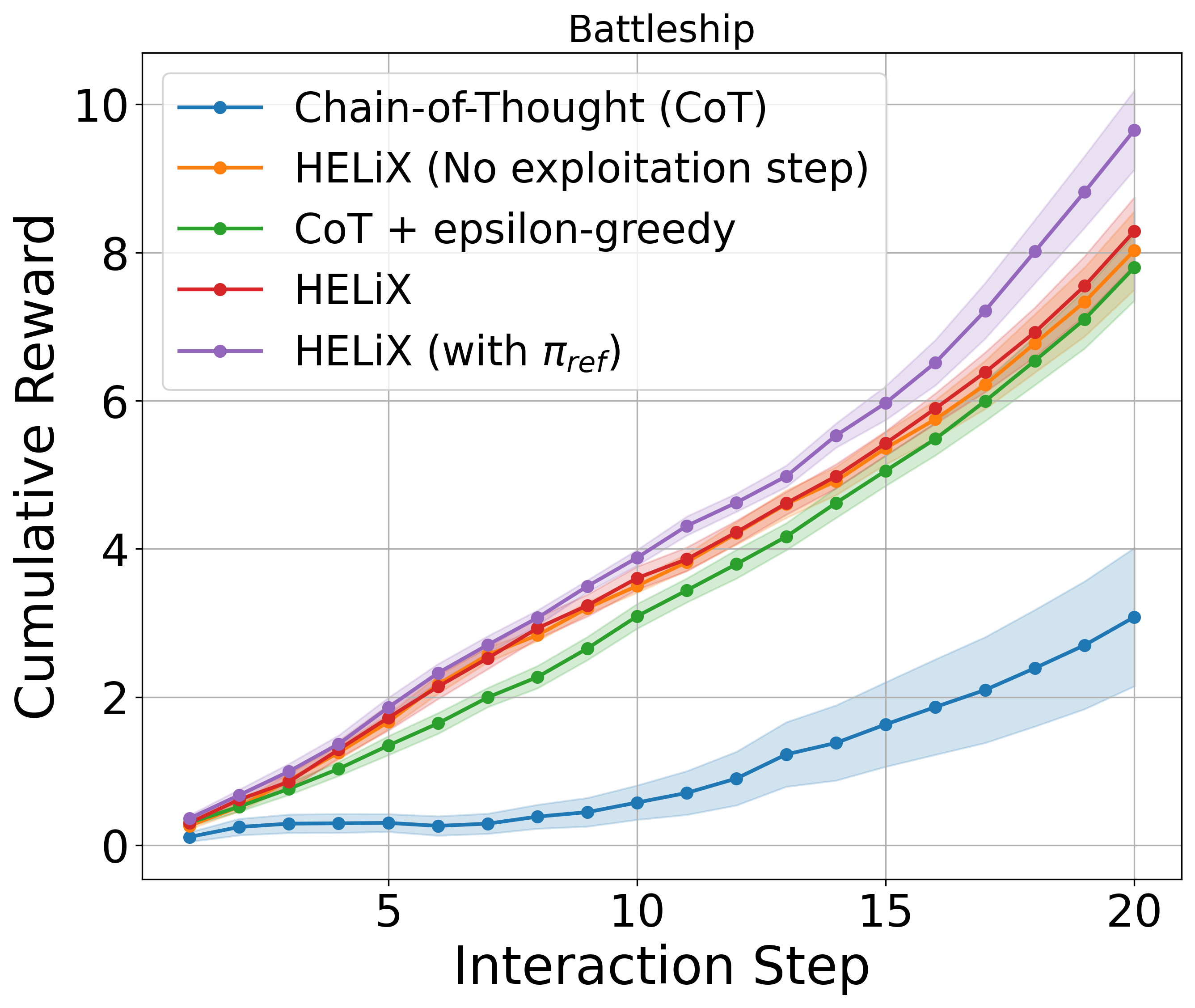}
    \caption{Battleship}  
    \label{fig:battleship_full}
\end{subfigure}
\begin{subfigure}[t]{0.48\textwidth}
    \centering
    \includegraphics[width=\linewidth]{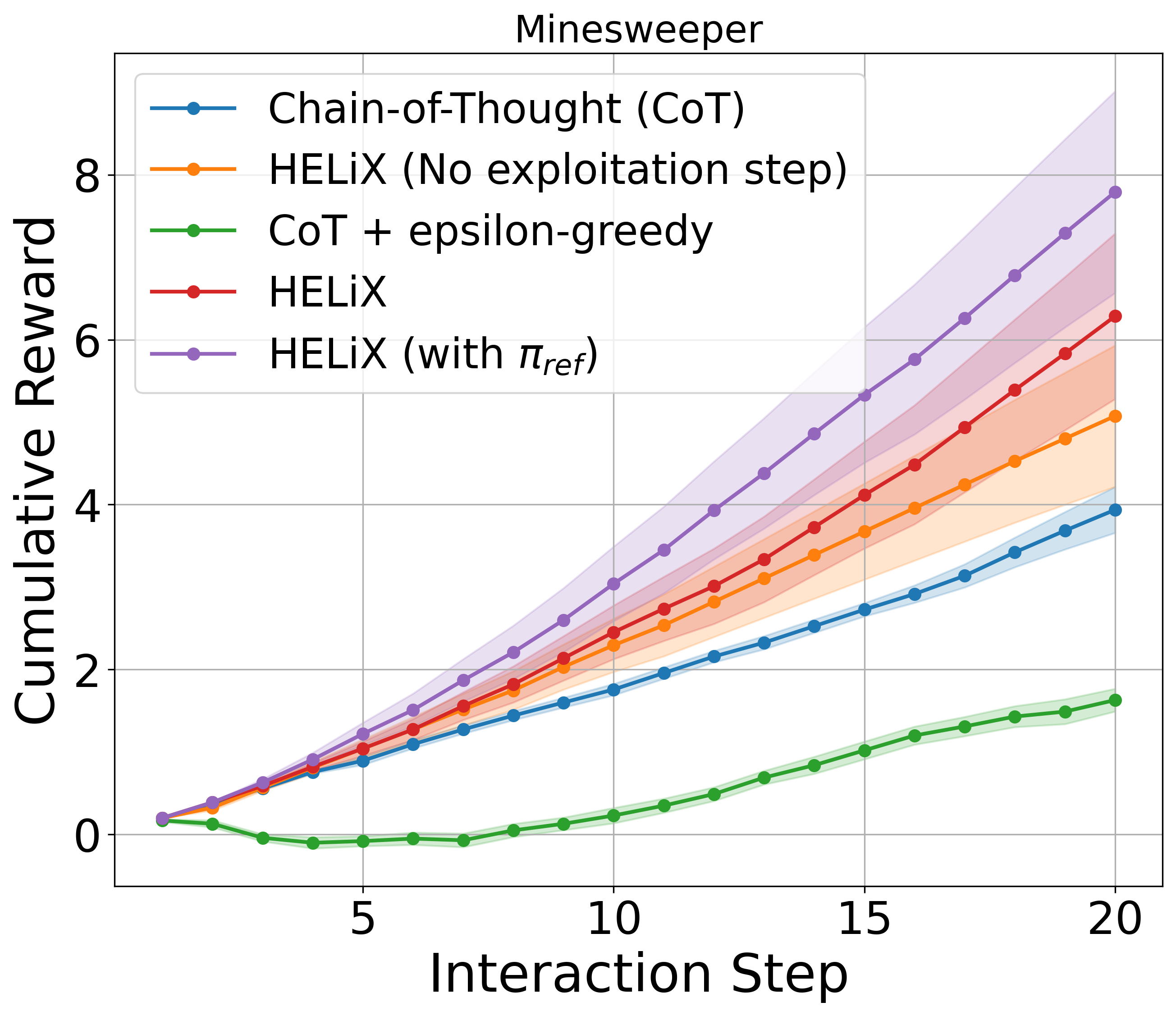}
    \caption{Minesweeper} 
    \label{fig:minesweeper_full}
\end{subfigure}
\caption{We show the cumulative reward that the agent is able to obtain during a fixed number of interactions with the environment.
Shaded area represents the standard error of cumulative reward across different scenarios. We use Claude-Sonnet-3.5 v2.
}
\label{fig:reward_full}
\end{figure}
In \textsc{Battleship} and \textsc{Minesweeper}, \algo~with $\pi_{\mathrm{ref}}$ performs significantly better than the CoT baseline and other variants of ~\algo, showing empirical benefits of the explicit re-scoring. 

\textbf{A Note on Tie-breaking.}
If ties remain after re-scoring (or without re-scoring, in the case where we don't use $\pi_{\mathrm{ref}}$), we further tie-break by preferring hypotheses and actions generated earlier in the output.  This is due to empirical observations that LLMs have a preference to produce the best plan and action first, followed by less likely plans and actions~\citep{dracheva2024different}.

\begin{figure}[t]
    \centering
    \includegraphics[width=\linewidth]{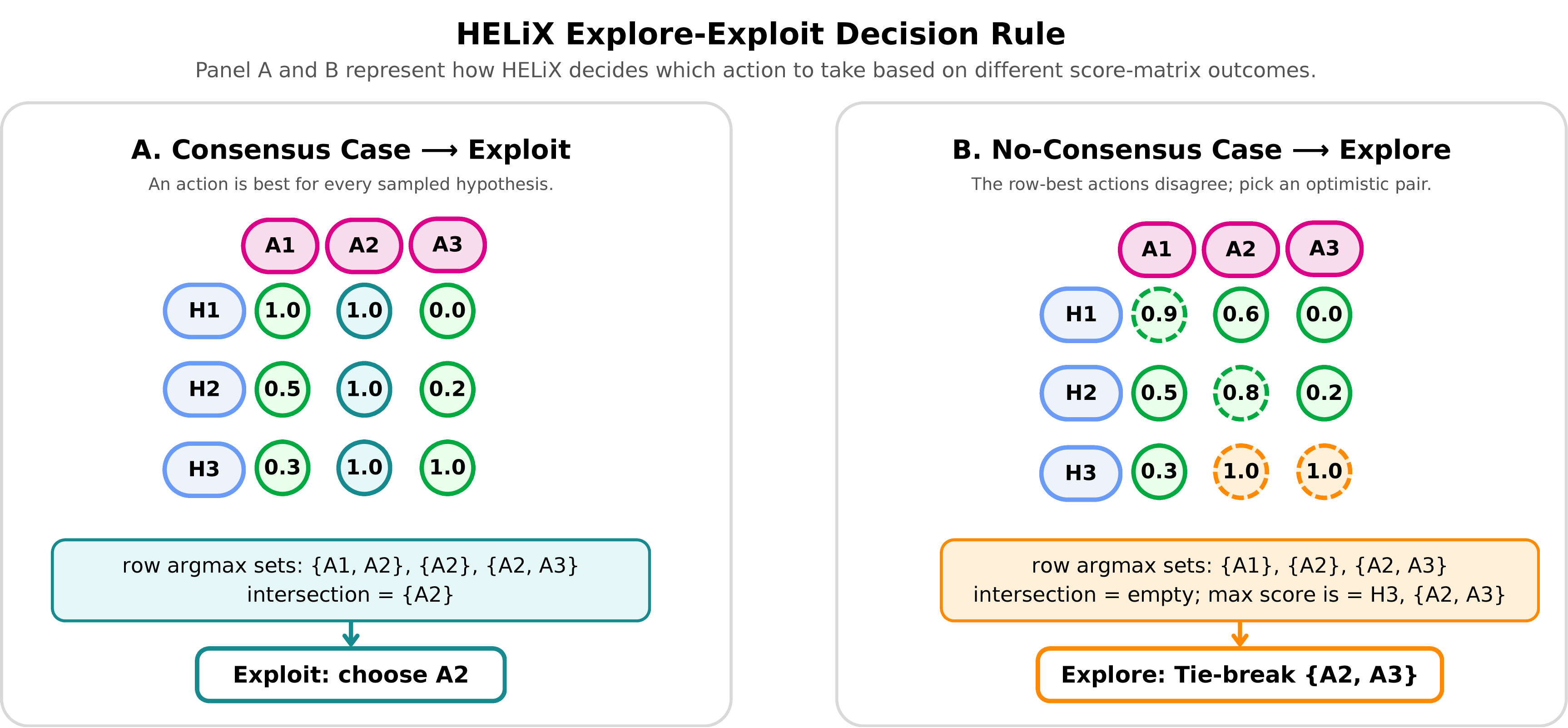}
    \caption{\textbf{\algo~Explore-Exploit Decision Rule.} The \algo~algorithm selects actions based on a score matrix over sampled hypotheses and candidate actions. Two panels represent two possible score-matrix outcomes. \textbf{(A) Consensus case (Exploit):} If the row-wise argmax sets share a common action across all hypotheses, that action is chosen.  \textbf{(B) No-consensus case (Explore):} If the row-wise argmax sets have an empty intersection, the algorithm explores by selecting the action corresponding to the globally highest score. If multiple actions have the same highest score, we tie-break by preferring hypotheses and actions generated earlier in the output.}
    \label{fig:alg-steps}
\end{figure}

\subsection{Additional Studies on Assumption Satisfaction}
\label{app:assumption_satisfaction}

To give a sense of how \cref{assum:unbiased-feedback} connects to our experiments, we evaluate a key implication of it: that the shrinking hypothesis set remains consistent with the true hypothesis along the trajectory. Concretely, if \cref{assum:unbiased-feedback} approximately holds, then as the algorithm updates and prunes the hypothesis set, the true hypothesis should almost always remain included. To test this, we manually annotated 10 trajectories from the Battleship experiment. For each step in each trajectory, we checked whether the candidate set of hypotheses produced by \algo~contained the true hypothesis.  

\cref{tab:assumption-satisfaction} reports (i) the average fraction of steps within a trajectory for which the true hypothesis is included (``\cref{assum:unbiased-feedback} satisfaction (trajectory-averaged)''), and (ii) the fraction of all annotated steps across trajectories for which the true hypothesis is included (``\cref{assum:unbiased-feedback} satisfaction (step-averaged)''). Both metrics are above 95\%, empirically supporting that a key implication of \cref{assum:unbiased-feedback} holds to a good approximation in our experiment.

\begin{table}[h]
\caption{Empirical verification of \cref{assum:unbiased-feedback} satisfaction rate in the Battleship experiment}
    \centering
    \begin{tabular}{|c|c|}
        \hline
        \cref{assum:unbiased-feedback} satisfaction (trajectory-averaged) & 96.26\% \\
        \hline
        \cref{assum:unbiased-feedback} satisfaction (step-averaged) & 95.98\% \\
        \hline
    \end{tabular}
    \label{tab:assumption-satisfaction}
\end{table}

\subsection{Discussion on Computational Cost}
\label{app:computational cost}
\algo~incurs a computational cost on sampling that scales with the number of top hypotheses $K$ that we propose at each turn, and if we only look at the algorithm, building the score matrix through ``approximate minimax game'' does incur $K^2$ LLM calls. However, this is not as costly as it might seem due to several reasons:

{\bf Better algorithms can lead to solving the game faster (i.e., early termination):} In all the games, we allow agents to act many more steps than necessary to solve the game. A better algorithm can often lead to quicker terminations. We verify this with the token count of our algorithm in \cref{tab:token_comparison}.

{\bf Parallelization of feedback-consistent hypotheses and actions sampling:} In practical implementations, we can sample hypotheses and verify consistency in parallel, reducing $O(K^2)$ calls to $O(K)$ in approximate minimax game and $O(K)$ to $O(1)$ when sampling feedback-consistent hypotheses and actions.

{\bf Efficient use of tokens through prefix caching:} In practical implementations, we could leverage advanced inference techniques like prefix caching where instructions sharing the same prefix sequence can be stored and loaded as KV cache without re-computing. In \algo, many LLM calls share common observations and judgments, significantly reducing the actual tokens needed.
Since we mainly use the experiment to illustrate and instantiate one practical implementation, we did not implement prefix caching or parallel sampling. \cref{tab:token_comparison} mostly demonstrates that even without such advanced techniques, just by exploring the environment better, we still avoided exponential cost blowup (instead of 9x the tokens of the baseline, we are 3.73x to 4.02x the token count of baseline).

\begin{table}[h]
\centering
\caption{Token count comparison for Battleship and Minesweeper}
\label{tab:token_comparison}
\begin{tabular}{lcc}
\hline
\textbf{Method} & \textbf{Token Count} & \textbf{Comparison} \\
\hline
\multicolumn{3}{l}{\textbf{Battleship}} \\
\hline
Baseline & 698,873 & 1x \\
\algo~(no exploitation step) (explore optimistically) & 3,151,173 & 4.51x ($K = 3$ hypotheses) \\
\algo~& 2,812,856 & {\bf 4.02x} ($K = 3$ hypotheses) \\
\hline
\multicolumn{3}{l}{\textbf{Minesweeper}} \\
\hline
Baseline & 553,389 & 1x \\
\algo~(no exploitation step) (explore optimistically) & 2,538,838 & 4.59x ($K = 3$ hypotheses) \\
\algo~& 2,064,156 & {\bf 3.73x} ($K = 3$ hypotheses) \\
\hline
\end{tabular}
\end{table}

\subsection{Reasoning Traces for Battleship}

Here we show a concrete example on how the algorithm works and what the LLM generates.
\begin{tcolorbox}[colback=gray!5!white, colframe=gray!75!black, title=Transcript on Multi-Hypothesis Evaluation (Battleship),enhanced jigsaw,breakable]
\sethlcolor{pink}~\hl{Prompt:} Hit at D2! You have hit a ship, which occupies 4 cells in the grid.
 Here is how the board looks now:

\vspace{1em}
\begin{tabular}{l l l l l l l l}
     & 1 & 2 & 3 & 4 & 5 & 6 \\
A & M & M & M & M & . & M \\
B & M & X & X & M & M & . \\
C & M & X & X & . & M & . \\
D & . & X & X & M & . & . \\
E & . & . & X & . & . & . \\
F & . & . & X & . & . & . \\
\end{tabular}
\vspace{1em}

========================

Now you have a new task. You are given a hypothesis (thought/instruction) and actions.
You need to evaluate if how good or bad the action is given the hypothesis.

Hypothesis:

\sethlcolor{azure(web)(azuremist)}~\hl{
Alternatively, a 2-cell ship might be placed horizontally in the A5-B6 area, as this is one of the few remaining areas large enough to fit it. A5 would be worth checking.
}

Rate all the actions individually based on whether the action is aligned with the hypothesis.

\sethlcolor{lavender}~\hl{Action 1: \textless action\textgreater E2\textless /action\textgreater}

\sethlcolor{lavender}~\hl{Action 2: \textless action\textgreater E5\textless /action\textgreater}

\sethlcolor{lavender}~\hl{Action 3: \textless action\textgreater A5\textless /action\textgreater}

Make sure the score you assign is between 0 and 1. Please provide your scores in the following format: ...\\

\sethlcolor{pink}~\hl{Response:} 

\sethlcolor{lavender}~\hl{Action 1 for the Hypothesis:}

\textless think\textgreater
\sethlcolor{azure(web)(azuremist)}~\hl{E2 is not aligned with the hypothesis at all. The hypothesis suggests checking A5 area for the 2-cell ship, but E2 is in a completely different part of the board. Additionally, we already have significant ship presence in the middle columns.}
\textless /think\textgreater
\textless score\textgreater\sethlcolor{yellow}~\hl{0.1}\textless /score\textgreater

\sethlcolor{lavender}~\hl{Action 2 for the Hypothesis:}

\textless think\textgreater
\sethlcolor{azure(web)(azuremist)}~\hl{E5 is somewhat in the direction of exploring new areas for remaining ships, but it's not in the specific A5-B6 area mentioned in the hypothesis. It's not directly supporting the hypothesis's suggested strategy.}
\textless /think\textgreater
\textless score\textgreater\sethlcolor{yellow}~\hl{0.3}\textless /score\textgreater

\sethlcolor{lavender}~\hl{Action 3 for the Hypothesis:}

\textless think\textgreater
\sethlcolor{azure(web)(azuremist)}~\hl{A5 is exactly what the hypothesis suggests checking. It's directly aligned with the reasoning that the 2-cell ship might be placed horizontally in the A5-B6 area, and it's one of the few remaining unexplored areas that could fit a 2-cell ship.}
\textless /think\textgreater
\textless score\textgreater\sethlcolor{yellow}~\hl{1.0}\textless /score\textgreater

\end{tcolorbox}

\begin{tcolorbox}[colback=gray!5!white, colframe=gray!75!black, title=Transcript on Action Proposal (Battleship),enhanced jigsaw,breakable]
\sethlcolor{pink}~\hl{Prompt:} Hit at A5! You have hit a ship, which occupies 5 cells in the grid.

 Here is how the board looks now:

\vspace{1em}
\begin{tabular}{l l l l l l l}
     & 1 & 2 & 3 & 4 & 5 \\
A & . & . & X & X & X \\
B & . & X & X & X & X \\
C & . & M & X & X & . \\
D & . & . & M & . & . \\
E & . & . & M & . & . \\
\end{tabular}
\vspace{1em}

Given the information above, please propose 3 actions, make sure each of them is a valid action in the environment, and is reasonable.

Make sure your thoughts and hypotheses are diverse and unique. However, actions are allowed to be the same if they are the best possible action under that hypothesis.

Please provide your actions in the following format:

Action 1: <think>\sethlcolor{azure(web)(azuremist)}~\hl{...}\textless /think\textgreater\textless answer\textgreater\sethlcolor{yellow}~\hl{your first action}\textless/answer\textgreater

...

Action 3: \textless think\textgreater\sethlcolor{azure(web)(azuremist)}~\hl{...}\textless/think\textgreater\textless answer\textgreater\sethlcolor{yellow}~\hl{your 3rd action}\textless /answer\textgreater\\

\sethlcolor{pink}~\hl{Response:} 

Action 1: \textless think\textgreater\sethlcolor{azure(web)(azuremist)}~\hl{Looking at the pattern of hits for the 5-cell ship, we can see it extends from A3 to A5. Since 5-cell ships are 5 cells long and we've found 3 cells, we should check A2 as it's likely part of the 5-cell ship.}\textless /think\textgreater\textless answer\textgreater\sethlcolor{yellow}~\hl{A2}\textless /answer\textgreater
\\

Action 2: \textless think\textgreater\sethlcolor{azure(web)(azuremist)}~\hl{If the 5-cell ship is not extending westward, it might extend to the east. Even though this is less likely since it would make the ship overlap with known hits, we should consider A1 to eliminate this possibility.}\textless /think\textgreater\textless answer\textgreater\sethlcolor{yellow}~\hl{A1}\textless /answer\textgreater
\\

Action 3: \textless think\textgreater\sethlcolor{azure(web)(azuremist)}~\hl{Since we've hit three cells of the ship (A3, A4, A5), and this ship is 5 cells long, if it's not extending horizontally, it must extend vertically. Therefore, checking C5 could reveal if the 5-cell ship extends downward from A5.}\textless /think\textgreater\textless answer\textgreater\sethlcolor{yellow}~\hl{C5}\textless /answer\textgreater

\end{tcolorbox}

\subsection{Prompt Templates}
\label{sec:prompt_templates}

 \begin{tcolorbox}[colback=gray!5!white, colframe=gray!75!black, title=Propose Action Prompt ($\pi_{\text{LLM}}$)]
    Given the information above, please propose some hypotheses and act according to those hypotheses.
    
    You can propose at most \{num\_actions\} hypotheses.
    
    Please propose a reasonable number of hypotheses -- each hypothesis represents what you think.
    
    Please provide your actions in the following format:
    
    Action 1: \textless think\textgreater...\textless /think\textgreater \textless answer\textgreater action 1\textless /answer\textgreater
    
    ...
    
    Action \{num\_actions\}: \textless think\textgreater...\textless /think\textgreater \textless answer\textgreater your \{num\_actions\}th action\textless /answer\textgreater
\end{tcolorbox}

 \begin{tcolorbox}[colback=gray!5!white, colframe=gray!75!black, title=Propose Exploration Action Prompt ($\pi_{\text{ref}}$)]
Given the information above, please propose \{num\_actions\} completely different and unexpected actions. These should be valid in the environment but should explore unusual or creative approaches. 

Try to think outside the box and propose actions that might not be immediately obvious or conventional.

Here are the actions you have already proposed:

\{actions\}

Please avoid proposing the same actions.

Please provide your actions in the following format:

Action 1: \textless think\textgreater...\textless /think\textgreater \textless answer\textgreater your first random/exploratory action\textless /answer\textgreater

...

Action \{num\_actions\}: \textless think\textgreater...\textless /think\textgreater \textless answer\textgreater your \{num\_actions\}th random/exploratory action\textless /answer\textgreater
\end{tcolorbox}

\begin{tcolorbox}[colback=gray!5!white, colframe=gray!75!black, title=Hypothesis-Conditioned Value Function Prompt ($V_{\text{LLM}}$),enhanced jigsaw,breakable]
\{task description\}

========================

Now you have a new task. You are a given a hypothesis (thought/instruction) and actions.
You need to evaluate how good or bad the action is given the hypothesis.\\

Hypothesis:\\
\textless think\textgreater\\
\{hypothesis\}\\
\textless /think\textgreater\\

Rate all the actions indiviually based on whether the action is aligned with the hypothesis.\\

Action \{action\_idx\}: \textless action\textgreater\{action\}\textless /action\textgreater\\

Make sure the score you assign is between 0 and 1. Please provide your scores in the following format:\\

Action 1 for the Hypothesis:

\textless think\textgreater
...
\textless /think\textgreater

\textless score\textgreater...\textless /score\textgreater

...

Action \{num\_actions\} for the Hypothesis: 

\textless think\textgreater
...
\textless /think\textgreater

\textless score\textgreater...\textless /score\textgreater
\end{tcolorbox}

\subsection{Code Implementation}
\label{sec:code_impl}

We provide a high-level code snippet that demonstrates how we implement the algorithm below. We omit the implementation details of methods involving LLM calls.  
\begin{trainedbox}{HELiX Python Code}
import numpy as np

class HELiX:

    def select_action(self, observation, hypotheses, actions, random_actions):

        if random_actions is not None:
            actions = actions + random_actions  # evaluate on all actions

        # Create a matrix to store scores for each hypothesis-action pair
        score_matrix = np.zeros((len(hypotheses), len(actions)))

        # Fill the score matrix by evaluating each hypothesis-action pair
        for h_idx, hypothesis in enumerate(hypotheses):
            scores = self.evaluate_multi_hypotheses(observation, hypothesis, actions)
            score_matrix[h_idx] = scores

        # ======== Exploitation step: consensus check =======
        consensus_action = self.consensus_action(score_matrix, actions)
        if consensus_action is not None:
            return consensus_action

        # ====== UCB elimination ======
        score_matrix, hypotheses, actions = self.ucb_hypothesis_elimination(score_matrix.copy(), hypotheses, actions)

        # ====== (Re-scoring +) Exploration =====
        best_hypothesis, best_action, best_overall_score, best_action_indices = self.tie_breaking(score_matrix, hypotheses, actions)

        return best_action

    def consensus_action(self, score_matrix, actions):
        max_scores_per_row = np.max(score_matrix, axis=1)
        action_sets = []
        for i in range(score_matrix.shape[0]):
            action_sets.append(np.where(score_matrix[i] == max_scores_per_row[i])[0].tolist())
        # Convert each sublist to a set
        action_sets = [set(actions) for actions in action_sets]

        # Find the intersection of all sets
        overlapped_actions = reduce(lambda x, y: x.intersection(y), action_sets)

        # Convert back to list if needed
        overlapped_actions_list = list(overlapped_actions)

        if len(overlapped_actions_list) == 0:
            return None
        else:
            # randomly choose one
            random_index = np.random.choice(len(overlapped_actions_list))
            return actions[overlapped_actions_list[random_index]]

    def tie_breaking(self, score_matrix, hypotheses, actions, random_actions=[]):

        # ====== Optional Re-Scoring =========
        # Calculate average scores only for random actions
        num_regular_actions = len(actions) - len(random_actions)

        # avg(random actions)
        action_avg_scores = np.mean(score_matrix[:, num_regular_actions:], axis=1, keepdims=True)
        normalized_score_matrix = score_matrix - action_avg_scores

        # eliminate hypothesis again, to prevent ties
        normalized_score_matrix, hypotheses, actions = self.ucb_hypothesis_elimination(normalized_score_matrix.copy(), hypotheses, actions)

        best_hypothesis, best_action, best_overall_score, best_action_indices = self.two_tiered_argmax_sampling(
            normalized_score_matrix, hypotheses, actions
        )

        return best_hypothesis, best_action, best_overall_score, best_action_indices

    def ucb_hypothesis_elimination(self, score_matrix, hypotheses, actions):
        # Get the maximum score for each row (hypothesis)
        max_scores_per_row = np.max(score_matrix, axis=1)

        # Find the highest score value
        highest_score = np.max(max_scores_per_row)

        # Get indices of rows that have the highest score
        selected_row_indices = np.where(max_scores_per_row == highest_score)[0]

        # Select the hypotheses corresponding to these rows
        selected_hypotheses = [hypotheses[i] for i in selected_row_indices]
        # we only eliminate hypotheses, not actions

        # Create a new score matrix with only the selected rows and columns
        new_score_matrix = score_matrix[selected_row_indices, :]

        return new_score_matrix, selected_hypotheses, actions

    def two_tiered_argmax_sampling(self, score_matrix, hypotheses, actions):
        # we take the highest score hypothesis, then sample its highest action
        assert score_matrix.shape[0] == len(hypotheses)

        best_hypo_idx = np.argmax(np.max(score_matrix, axis=1))  # bias towards first hypothesis
        best_action_idx = np.argmax(score_matrix[best_hypo_idx, :])

        best_action = actions[best_action_idx]
        best_hypothesis = hypotheses[best_hypo_idx]
        best_overall_score = score_matrix[best_hypo_idx, best_action_idx]

        return best_hypothesis, best_action, best_overall_score, [best_action_idx]
        
\end{trainedbox}

\end{document}
